\def\algo{\texttt{PLAN}}
\begin{document}

\title{\huge Strategic Decision-Making in the Presence of Information Asymmetry: Provably Efficient RL with Algorithmic Instruments}

\author{Mengxin Yu \qquad\qquad\qquad Zhuoran Yang \qquad\qquad\qquad Jianqing Fan\thanks{Email:\texttt{\{mengxiny,jqfan\}@princeton.edu, zhuoran.yang@yale.edu}. This work is supported by NSF grants DMS-2210833, DMS-2053832, DMS-2052926 and ONR grant N00014-22-1-2340.}\\
Princeton University \qquad\qquad Yale University \qquad\qquad Princeton University}


\maketitle

\begin{abstract}
We study offline  reinforcement learning under  a novel model called strategic MDP, 
which characterizes the strategic interactions between a principal and a sequence of myopic  agents with private types. 
Due to the bilevel structure and private types, strategic MDP involves information asymmetry between the principal and the agents. 
We focus on the offline RL problem where 
the goal is to 
learn the optimal policy of the principal concerning a target population of agents, based on a pre-collected dataset that consists of historical interactions. 
The unobserved private types confound such a dataset as they affect both the rewards and observations received by the principal. 
We propose  a novel algorithm,  \underbar{p}essimistic policy \underbar{l}earning with \underbar{a}lgorithmic  i\underbar{n}struments  (\algo), which leverages the ideas of instrumental variable regression and  the pessimism principle to learn a near-optimal principal's  policy in the context of general function approximation.  
Our algorithm is based on the critical observation that the principal's actions serve as valid instrumental variables. 
In particular, under a  partial coverage assumption on the offline dataset, we prove that \algo~ outputs a $1 / \sqrt{ K}$-optimal policy with $K$ being the number of collected trajectories.
We further apply our framework to some special cases of strategic MDP, including strategic regression \citep{stevenwu2021}, strategic bandit, and noncompliance in recommendation systems \citep{robins1998correction}.

\end{abstract}

\section{Introduction}
In multi-agent  decision-making systems \citep{ferber1999multi},
at each step,  each agent chooses an action based on its local  information gathered so far, 
where the local information contains both the public information that is shared by every agent, 
and the private information that is only  known to itself. 
The actions taken by all the agents then determine the  state of the underlying environment  and the observations received by each agent at the next~step.  
The goal of each agent is to maximize its own expected  cumulative rewards 
via taking a sequence of actions that leverage her private information, in the presence of other agents.

Multi-agent systems with private information find wide applications in economics \citep{laffont2009theory}, social science \citep{sabater2005review}, robotics \citep{farinelli2004multirobot, yan2013survey},  and cognitive science \citep{sun2006cognition}. 
The private information of the agent 
often  include hidden states that represent  the agent's own type information  or  part of information about the environment that other agents are ignorant of,  and hidden actions taken or local observation received by the agent that are not observed by others.

In this work, we aim to study reinforcement learning in the presence of private information. Motivated by the principal-agent framework in contract theory \citep{bolton2004contract, laffont2009theory}, we propose a stylized bilevel 
multi-agent decision-making 
model where a principal interacts with a sequence of myopic agents with private information.   
Such a model specifies 
an information asymmetry 
--- the agents have private information unknown to the principal, while the principal's policy is known to the agents. 
In specific, 
such a model can be viewed as a bilevel Stackelberg game \citep{bacsar1998dynamic} where the principal first announces her policy, and the agents then choose their best-response actions --- actions that maximize their immediate rewards --- based on their private information. 
Here the private information includes both the private types and actions of the agent, which are unknown to the principal.
The type of agent reflects its personal preference and is part of the agent's reward function. 
In contrast, the principal aims to maximize  its expected cumulative rewards, assuming the agents always adopt the best response policies. 
Such a model is called the strategic Markov decision process (MDP), as it models the interaction between the principal and a sequence of agents that strategically respond to the principal.

In strategic MDP,  the private types of agents influence the game structure in two ways. 
First, the actions taken by the agents implicitly depend on the private types. 
While the principal does not observe these actions,  the agents generate available observations which affect the principal's immediate rewards.
Second,   the state transitions depend on both the actions of the principal and the private types of agents. 
As a result, the optimal policy of the principal also implicitly depends on the private types of the agents. 
In other words, the principal should adopt different policies targeting at  different types of agents.


Strategic MDP is motivated by the notion of performative prediction \citep{perdomo2020performative}, which refers to the setting where the prediction given by the classifier causes a distributional change in the targeted variables.
Such a performative distributional shift is ubiquitous in strategic machine learning problems where the agents strategically modify the data distribution in response to the announced machine learning model, in order to improve their outcomes. 
As a concrete example, consider the scenario of the college application, the requirements on 
the scores of the standardized tests (e.g., SAT) incentivize the 
applicants to make additional efforts (e.g., taking multiple tests) to improve their scores \citep{goodman2020take}. 
Strategic MDP captures such a strategic/performative setting by assuming the agents strategically choose best-response actions to maximize their rewards, while these actions, in turn, change the distribution of the observations. 
More importantly, strategic MDP 
brings strategic interactions into the context of sequential decision-making by introducing Markovian state transitions, which helps model more complex strategic behaviors involving dynamic structures. 

In this paper, we study the  
offline RL  problem \citep{levine2020offline}  in strategic MDP, 
where the goal is to learn the  optimal policy of  the 
principal based on an offline dataset collected a priori. Specifically, since the principal's optimal policy depends on the agents' private types, we consider learning the optimal policy for a targeted population of agents, where a population refers to a distribution over the private type. 
Moreover, the offline dataset is generated by interacting with an arbitrary population of agents with a  possibly suboptimal behavior policy and only consists of the variables available to the principal. 
In other words, the agents' private types and actions are not recorded in the offline dataset. 

Compared with the standard  MDP model, offline RL in strategic MDP involves the confounding issue brought by the private types, which is absent in MDP, and a more involved challenge of distributional shift. 
Specifically, the private actions taken by the agents affect the state transitions and the principal's observations and rewards stored in the dataset. 
Moreover, these private actions are determined implicitly by the private types of the agents from the best-response policies. 
As a result,  the private types are unobserved confounders and thus  
directly applying standard offline RL methods to learn the transition model and reward functions would incur a considerable bias. 
Furthermore, the offline dataset involves two kinds of distributional shifts. 
First, the dataset is collected by interacting with some population of agents while we are interested in the optimal policy concerning the target population. 
Second,  in data generation, the principal's policy is fixed to some behavior policy which can be very different from our desired optimal policy. 
As a result, any successful learning algorithm needs to handle the challenges due to the offline dataset's unobserved confounder and distributional shift.

To this end, we propose a novel algorithm,  \underbar{p}essimistic policy \underbar{l}earning with \underbar{a}lgorithmic  i\underbar{n}struments  (\algo), which resolves the above two challenges by leveraging the ideas of instrumental variable regression and the pessimism principle \citep{buckman2020importance, jin2021pessimism}. 
Specifically, although the private types are absent in the offline dataset, we prove that the actions taken by the principal serve as instrumental variables that help identify their causal effect.
That is,  
both the reward function and transition kernel can be written as the solutions to some conditional moment equations given the states and actions of the principal. 
Intuitively, the principal's actions directly affect how the agents take the best-response actions in each step. 
Moreover, they indirectly affect the observations presented to the principals via the agents' actions. 
Thus, the principal's actions serve as instrumental variables.  
Such a fundamental property enables us to construct a loss function based on the offline data  via minimax estimation, where the global minimizer of the population loss corresponds to the true model.
Meanwhile, the construction of the minimax loss function can readily incorporate general function approximators.  
Furthermore, to handle the distributional shift, we follow the pessimism principle by (i) constructing a high-confidence region containing the true model based on the level sets of loss functions and (ii) returning the optimal policy of the most pessimistic candidate model in the confidence region. 
Under proper assumptions, we prove that \algo~outputs a  $1 / \sqrt{ K}$-optimal policy when the offline dataset satisfies a mild partial coverage assumption, where $K$ is the number of trajectories in the offline dataset.
We also instantiate our results to specific function classes, including kernel and neural network functions. 
Besides, as concrete examples,  we apply the results for strategic MDP to particular cases, including strategic regression \citep{stevenwu2021}, strategic bandit, and noncompliance in recommendation systems \citep{robins1998correction}.

\subsection*{Main Contributions} Our contribution is several-fold. First, we propose a general framework named strategic MDP for modeling the interaction betweens a principal and a sequence of myopic agents with private information. 
Such a model captures many strategic decision-making problems as special cases, such as strategic regression, strategic bandit, Stackelberg game, and etc. 
Second, for offline RL in strategic MDP, we propose a novel algorithm, \algo, which leverages instrumental variable regression to handle the confounding issue caused by the agents' private information and adopts the idea of pessimism to handle the distributional shift of the offline data. 
Specifically, the instruments are the  state variables and  the actions of the principal sampled from the  behavior policy during data collection. 
Third,  under mild assumptions on the offline dataset and the function classes employed in statistical estimation,  we prove that \algo~outputs a near optimal policy with statistical accuracy. Meanwhile, we also apply the theoretical results to special cases of strategic MDP, including strategic regression and strategic bandits, to showcase the efficacy of \algo.  

\subsection{Related Work}

{\noindent \bf Pessimism in offline RL.} 
There exists a large body of literature on offline RL. Assuming the offline dataset has sufficient coverage over all target policies, many existing works establish the statistical rates of convergence for various offline RL methods with function approximation. See, e.g., \citep{antos2007fitted, munos2008finite,antos2008learning, farahmand2010error,farahmand2016regularized, busoniu2017reinforcement, fan2019theoretical, chen2019information, duan2021risk} and the references therein. 
Our work mainly builds upon the recent line of works that develop offline RL algorithms based on the pessimism principle. 
Leveraging pessimism, these works prove that various RL  algorithms are provably efficient when the  offline  datasets merely  satisfy a partial coverage condition  \citep{liu2020provably, kidambi2020morel, yu2020mopo, buckman2020importance, xie2021bellman, jin2021pessimism, uehara2021pessimistic, shi2022pessimistic, rashidinejad2021bridging,zanette2021provable,  li2022settling, yan2022efficacy, lu2022pessimism}. 
Among these contributions, our work is particularly related to \cite{lu2022pessimism}, 
which studies offline RL in partially observable MDPs, where the offline dataset is collected by a behavior policy that has access to the latent states. 
In \cite{lu2022pessimism}, the observations and actions in the dataset are confounded by the latent states. 
To handle the confounding issue,  \cite{lu2022pessimism} assume the existence of confounding bridge functions and leverage proximal causal inference  \citep{lipsitch2010negative, tchetgen2020introduction} to construct pessimistic estimates of the bridge functions. 
In contrast, strategic MDP is a different problem, and our \algo~algorithm is based on   instrumental variable estimation. Besides,  we consider general function approximators with kernel and neural networks as special cases. Whereas \cite{lu2022pessimism} only focuses on the   linear setting. 

{\noindent \bf RL with confounders.} Our work is also related to the works that propose RL  methods in the presence of confounded data. 
In particular, 
a line of research studies RL  methods for 
dynamic treatment regimes (DTR)  \citep{chakraborty2014dynamic}  under the  sequential ignorability condition  \citep{murphy2001marginal, robins2004optimal, schulte2014q}  ensures that all confounders are measured. 
When there exist unmeasured confounders, 
\cite{chen2021estimating} 
utilize a time-varying instrumental variable to establish a novel estimation framework for DTR. 
Compared with   \cite{chen2021estimating}, although our work also uses instrumental variables to address the confounding issue, they  study  a different model and thus are  not directly comparable. 
In addition, there is also  a line of research on  learning the optimal policies of various confounded MDP models 
\citep{zhang2016markov, liao2021instrumental,lu2021causal,  wang2021provably}. 
Among these works, our work is most relevant to the work of  \cite{liao2021instrumental}, which proposes a value iteration algorithm that leverages instrumental variable regression to learn the value functions. 
In particular, in terms of value function estimation,  the work of  \cite{liao2021instrumental} focuses on the setting of linear function approximation and assumes the dataset has uniform coverage. In contrast, our work considers general function classes and only assumes a partial coverage of the offline dataset. 
Furthermore, a growing body of literature leverages causal inference tools for the  
off-policy evaluation (OPE) problem in partially observable MDPs. See, e.g., 
\cite{tennenholtz2020off, kallus2020confounding, namkoong2020off, shi2021minimax, chen2021instrumental,  bennett2021off, bennett2021proximal, hu2021off, chen2022well} and the references therein.
These works aim to evaluate a history-dependent policy where the offline dataset is collected by a behavior  policy that has access to the latent states. Thus the observations and actions in the dataset are confounded.  
In comparison, we 
focus on learning the optimal policy, which involves evaluating each policy within the policy class as a special case. 
The fact that the offline dataset has merely partial coverage motivates our pessimism  approach, whereas pessimism seems unnecessary in these works on OPE.

\subsection{Notation}
We introduce some useful notation before proceeding. We let $[N] $ denote $ \{1,2,\dots, N \}$ for any positive integer $N$. For any random variable $X\sim \PP$ and $q\ge 0$, we use $\|X\|_{q}$ to represent the  $\ell_q$-norm of $X$ under a probability measure $\PP$, i.e., $\|X\|_{q}=(\EE_{\PP}[X^q])^{1/q}.$ In addition, we let $ \|X\|_{q,K}$ be the sample version of $\|X\|_{q},$ namely, $\|X\|_{q,K}=(1/K\sum_{i=1}^{K}X_i^q)^{1/q}$. For two positive sequences $\{a_n\}_{n\ge 1}$, $\{b_n\}_{n\ge 1}$, we write $a_n=\cO(b_n)$ or $a_n\lesssim b_n$ if there exists a positive constant $C$ such that $a_n\le C\cdot b_n$ and we write $a_n=o(b_n)$ if $a_n/b_n\rightarrow 0$. In addition, we write $a_n=\Omega(b_n)$ or $a_n\gtrsim b_n$ if $a_n/b_n\ge c$ with some constant $c>0$. We use $a_n=\Theta(b_n)$ if $a_n=\cO(b_n)$ and $a_n=\Omega(b_n)$. If a function class $\cF$ is star-shaped, it holds that $\forall f\in \cF$, $\alpha f\in \cF,\forall \alpha\in[0,1].$ Moreover, a function class $\cF$ is assumed to be symmetric when $\forall f\in\cF, $ we have $-f\in\cF.$ The star hull of a function class $\XX$ is defined as the minimal star-shaped function class that contains $\XX$. 

\subsection{Roadmap}
The rest of this paper is organized as follows. We provide some preliminary knowledge 
 in \S\ref{sec:pre}. We then describe the problem setting in  \S\ref{sec:algo} and  introduce  the \algo{} algorithm   in \S\ref{sec:method}.  
 In addition, in \S\ref{sec:theory}, we establish the theoretical guarantees of \algo{} and prove that \algo~is provably efficient.  Finally, we discuss related applications of the strategic MDP model in \S\ref{application}.  

 \section{Preliminaries}\label{sec:pre}
 In this section, we present some preliminary background to set up a theme and to introduce some notation. Specifically, we introduce the episodic Markov decision process and some basic concepts on empirical process in   \S\ref{def:mdp} and \S\ref{intro_local_rademacher}, respectively.
 
 \subsection{Episodic Markov Decision Process (MDP)}\label{def:mdp}
 
 An episodic MDP $(\cS,\cA,H,\PP,r)$ consists of a  state space $\cS$, an  action space $\cA$, a  time horizon $H$, a transition kernel $\PP=\{\PP_h^*\}_{h=1}^{H}$ and a reward function $r=\{R_h^*\}_{h=1}^{H}.$ 
 
 Without loss of generality, we consider bounded reward function with $R_h^*\in[0,1]$ for all $h\in [H]$. For any policy $\pi=\{\pi_h:\cS\rightarrow\Delta(\cA)\}_{h=1}^{H},$  we define the $Q$-function  (action-value function) $Q_h^\pi \colon \cS \times \cA \rightarrow \RR$  and (state-)value function $V_h^{\pi}:\cS\rightarrow \RR$ at each step as follows:
 \begin{align*}
    {Q}_{h}^{\pi}(s,a)&=\EE_{\pi}\Bigg[ \sum_{i=h}^{H}{R}_i^*(s_i,a_i)\given s_i=s,a_i=a\Bigg ],\\ 
 	V_h^{\pi}(s)&=\EE_{\pi}\Bigg[\sum_{i=h}^{H}R_i^*(s_i,a_i)\given s_h=s\Bigg].
 	\end{align*}
 Here the expectation is taken with respective to the randomness induced by $\pi$ and the transition kernel $\PP^*$, which is obtained by taking $a_h\sim \pi_h (\cdot \given s_h)$ at state $s_h$ and obtaining the next state $s_{h+1}\sim \PP^*_h(\cdot\given s_h,a_h)$, $\forall h \in [H]$. Suppose the initial state $s_1$ is sampled from a distribution $\rho_0$, the optimal policy for our cumulative reward is defined as 
 \begin{align*}
 	\pi^*:=\argmax_{\pi\in \Pi}\EE_{s\sim\rho_0}[V_1^{\pi}(s)],
 	\end{align*}
 where $\Pi$ is a class of policies such that $\Pi:=\{\pi=(\pi_1,\cdots,\pi_H),\pi_h:\cS\rightarrow\Delta(\cA),\forall h\in [H]\}.$ 
 

 \subsection{Local Rademacher Complexity and Critical Radius}  \label{intro_local_rademacher}
 
 The (empirical) local Rademacher complexity quantifies the complexity of a bounded function class locally around the ground truth with a given radius \citep{bartlett2005local,wainwright_2019}.
 \begin{definition}[(Empirical) Local Rademacher Complexity]
 The local Rademacher complexity of  a function class $\cF:V\rightarrow [-L,L]$ and its empirical counterpart  are defined respectively as follows:
\begin{align*}
\cR_K(\delta,\cF)&=\EE_{\epsilon,\cV}\bigg[\sup_{f\in \cF,\|f\|_2\le \delta}\Big|\frac{1}{K}\sum_{i=1}^{K}\epsilon_if(v_i)\Big|\bigg],\\
\cR_{K,s}(\delta,\cF)&=\EE_{\epsilon}\bigg[\sup_{f\in \cF,\|f\|_{2,K}\le \delta}\Big|\frac{1}{K}\sum_{i=1}^{K}\epsilon_if(v_i)\Big|\bigg],
\end{align*}
where $\{v_i\}_{i=1}^{K}$ are i.i.d. observations sampled  from some distribution  $\cV,$ and $\{\epsilon_i\}_{i=1}^{K}$ are  i.i.d. Rademacher random variables taking values  in $\{-1,1\}$ with equal probability. Here $\EE_{\epsilon,\cV}$ means the   expectation is taken with respect to   both random variables $\{\epsilon_i\}_{i=1}^K$ and $\{v_i\}_{i=1}^K,$ whereas $\EE_{\epsilon}$ means  the expectation is only taken with respect to $\{\epsilon_i\}_{i=1}^K$.
 \end{definition}

Based on the definition of local Rademacher complexity, we introduce the critical radius of a function class as follows.
\begin{definition}[Critical Radius]
The  critical radius $\delta_K$  
of a function class $\cF:V\rightarrow [-L,L]$ is defined as the minimal  $\delta$ that satisfies  
\begin{align}\
    \cR_K(\delta,\cF)&\le \delta^2/L.\label{critical_point}
\end{align}
\end{definition}
If the function class $\cF$ is star-shaped, then $\cR_K(\delta,\cF)/\delta$ is a decreasing function over $\delta.$ Thus, the solution to $\cR_K(\delta,\cF)\le \delta^2/L$ always exists \citep{bartlett2005local,wainwright_2019}. 

\section{Problem Setting}
\label{sec:algo}
In this section, we introduce the model formulation, data collection process and the learning objectives  in our problem setting.
\subsection{Model Formulation}\label{model}


We consider a framework that models the interaction between a principal and a sequence of
agents. Given the principal’s action, the agents are myopically rational and strategic.  Specifically, they always
maximize their immediate rewards based on their private types and generate observations for the principal. 
The principal then collects an immediate reward and future state through the interaction
with these agents. This results in a strategic MDP  as follows.
\begin{definition}[Strategic MDP]\label{strategic_mdp}
The strategic MDP model is formulated as follows.
\begin{itemize}
    \item Principal's action: Given the environmental state $s_h$, the principal takes an action $a_h$ which may depend on the whole history instead of only $s_h$. Without loss of generality, we assume the initial state is generated from a fixed distribution $\rho_0$.
    \item Agent's action: 
      For any stage $h$, an agent comes to system whose  private type $i_h\sim \PP_h(\cdot)$ is sampled independently from an unknown distribution $\PP_h(\cdot)$. 
    The agent then takes the action 
    $$b_h:=\argmax_{b}R_{ah}^*(s_h,a_h,i_h,b)$$
   to maximize its immediate private reward $R_{ah}^*(\cdot)$ that depends on the current state $s_h,$ principal's action $a_h,$ its private type $i_h$ and available actions.
    The principal can not observe the private type $i_h$ and action $b_h$. 
    
    \item {Observation (manipulated feature):
   The principal receives an observation $o_h$ sampled from an observation channel $F_h$, namely,  $$o_h\sim F_h(\cdot\given s_h,a_h,i_h):=F_{ah}(\cdot\given s_h,i_h,b_h). 
    $$
    Intuitively, here $o_h$ is regarded as part of the feature manipulated by the strategic agent based on the current state $s_h$, its own private type $i_h$, and principal's action $a_h$ (through $b_h$).    
    }
   
    \item Reward: 
     Principal receives an immediate reward $r_h=R_h^{*}(s_h,a_h,o_h)+g_h,$ where $g_h$ is an unobserved noise correlated with the agent's private type $i_h$. For simplicity, we assume $g_h=f_{1h}(i_h)+\epsilon_h$ is a zero mean, subgaussian random variable. Here $\epsilon_h$ is assumed to be a random variable independent of all other variables.
    \item Transition: The environment then transits to the next state, $s_{h+1}\sim G_h^*(s_h,a_h,o_h)+\xi_h.$ Here  $\xi_h$ is also assumed to be correlated with $i_h$. Without loss of generality, we let $\xi_h=f_{2h}(i_h)+\eta_h.$ Here $\eta_h$ is a Gaussian 
    random variable $N(0,\sigma^2\II)$ with an known covariance $\sigma^2\II$ and is independent of  all other random variables. In short, we have
    \begin{align}
	s_{h+1}&\sim\PP_{h}^*(\cdot \given s_h,a_h,o_h,i_h)\sim N(G_h^{*}(s_h,a_h,o_h)+f_{2h}(i_h),\sigma^2\II).\label{transition_new}
\end{align}
For the simplicity of our notation, we use $\PP_{h}^*(\cdot \given s_h,a_h,o_h,i_h)$ to represent this transition kernel.
\end{itemize}
\end{definition}
 To summarize, strategic MDP models the interactions between a principal and a sequence of strategic agents with private information. Given the principal's action $a_h$ and current state variable $s_h$, the agent then strategically manipulates principal's observation $o_h$ through its action $b_h$ and private type $i_h$. These factors then affect the reward received by the principal and also lead to the next state of the system. It is worth noting that this framework can be viewed as a version of the bilevel Stackelberg  game \citep{bacsar1998dynamic}, where the principal first takes an action and the agent then chooses its action that maximizes its reward  based on the principal's policy and its private type. Especially, when there does not exist strategic agents (i.e., the agents report their actions $o_h=b_h$ that is independent with their private type), this formulation reduces to standard Stackelberg stochastic game \citep{bacsar1998dynamic}. Moreover, our model also generalizes recently studies on strategic classification and regression \citep{miller2020strategic,stevenwu2021,shavit2020causal}, where the principal only interacts with the strategic agents in one round with no state transition. 
\begin{remark}\label{remark:embedding}
{Here we consider the most general case where the reward and transition functions depend on all variables $(s_h,a_h,o_h).$  Generally speaking, throughout this paper, for all $h\in[H],$ these functions $R_h^*(\cdot),G_{h,j}^*(\cdot),\forall j\in [d_1]$ map an embedding of the space of $(s_h,a_h,o_h)$ to the Euclidean space, in the sense that  $R_h^*(\cdot),G_{h,j}^*(\cdot):\phi_{x}(s_h,a_h,o_h)\in \RR^{d}\rightarrow \cY\in \RR$ with some known $\phi_{x}(\cdot)$.} In the following, for simplicity of our notation, we omit the embedding function $\phi_{x}(\cdot)$ and use $(s_h,a_h,o_h)$ to represent $\phi_{x}(s_h,a_h,o_h)$ without specification.
\end{remark}
Next, we use a real-world example of the existence of noncompliant agents in recommendation systems \citep{robins1998correction,ngo2021incentivizing} to digest this setting .
\begin{example}[Noncompliant Agents in Recommendation Systems]\label{example_noncompliant}
	For every stage $h$, the principal obtains the state variable $s_h,$ which represents the current environment condition (e.g., inventory level, the operation condition of the company). 
	A new agent comes to the system, and the principal recommends $a_h$ (recommends some new products) based on the current state variable. 
	However, the agent is noncompliant and may deny recommendation $a_h$ and choose another action $b_h$ (such as buying some other products) that maximizes the agent's reward according to its private preference $i_h$. The principal then observes the true action $o_h=b_h$ and receives the revenue  $r_h=R_h^*(s_h,b_h)+g_h$ from this purchase.
	The system finally transits to the next state $s_{h+1}\sim G_h^*(s_h,b_h)+\xi_h$.
	\end{example}

This example serves as a particular case of our strategic MDP, namely, the principal observes the actual action $o_h=b_h$ the strategic agent chooses. However, in some situations, such as a clinical trial, the noncompliant agent may not report the real action $b_h$ but the manipulated observation $o_h$ instead \citep{pearl2009causality}. This is also captured by our model.




  In the offline RL problem in strategic MDP, the goal is to learn the  optimal policy of the principal through some historical (offline) data in the face of a sequence of strategic agents from a certain population who always take their best responses. 
 Thus, in the following subsections, we describe the data generation process and introduce the learning objectives, respectively. 
 \subsection{Offline Data Collection}\label{offline_data}
In this subsection, we describe the offline setting for the data collection process.
We sample $K$ trajectories $\{(s_{h}^{(k)},a_h^{(k)},o_h^{(k)},r_h^{(k)})\}_{k=1,h=1}^{K,H}$ independently, in which every trajectory $\{(a_h,s_h,o_h)\}_{h=1}^{H}$ is sampled from a joint distribution $\rho: \{(\cS_h\times \cA_h\times\cO_h)\}_{h=1}^{H}\rightarrow \RR.$ The dynamics are the same with those in Definition \ref{strategic_mdp} and we summarize it as follows:
\begin{align}
a_h&\sim \pi_h, \textrm{ a behavior policy that may depend on past information } \{s_i,a_i,o_i\}_{i=1}^{h-1}\cup \{s_h\},\nonumber \\i_h&\sim P_{h}(\cdot),\textrm{ an agent with private type $i_h$ comes into the system,  }\nonumber\\
o_h&\sim F_{h}(\cdot\given s_h,a_h,i_h), \textrm{ with } b_h= \argmax_{b}R^*_{ah}(s_h,a_h,i_h,b), \label{observation}\\
    r_{h}&\sim R_h^*(s_h,a_h,o_h)+g_h, \textrm{ with }g_h=f_{1h}(i_h)+\epsilon_h, \label{reward_confounder}\\
        s_{h+1}&\sim G_{h}^{*}(s_h,a_h,o_h) +\xi_h, \textrm{ with } \xi_h=f_{2h}(i_h)+\eta_h, \forall{h}\in [H].  \label{tran_confounder}
\end{align}

{Here, we assume that distributions $F_{h}(\cdot\given s_h,a_h,i_h),R_{ah}^*(\cdot),P_h(\cdot)$ and functions $f_{1h}(\cdot),f_{2h}(\cdot)$  in Definition \ref{strategic_mdp} are  \emph{unknown} in the data collection process. Moreover, these functions are \emph{not necessarily} the same as those in our planning stage discussed in \S\ref{planning} below. Furthermore, we highlight that during the data collection process, the action $a_h$ may depend on all past information (i.e., following some non-Markovian policies) instead of only relying on $s_h$ for all $h\in [H]$.}

  \subsection{Evaluation (Planning): Personalized Policy Optimization}\label{planning}
 In this subsection, we discuss the planning process for our strategic MDP, where
 we learn the optimal policy of the principal, targeting at a specific population of agents.  
 
 More specifically, in the planning stage, we assume the distribution of private type $i_h\sim P_h(\cdot)$, private reward functions $R_{ah}^*(\cdot)$ and $f_{1h}(\cdot), f_{2h}(\cdot), o_h\sim F_{h}(\cdot\given s_h,a_h,i_h),\forall h\in [H]$ are known to the principal.  The motivation behind this is that the aforementioned quantities affect the optimal policy but are never observed through the data. Therefore, one only hopes to return policy for a target population and take these terms related to agents' private types as input. 
  Quantitatively, this is equivalent to learning the optimal policy for the aggregated MDP, which marginalizes the effect of agents. To be clear, we next define a new MDP $(\cS,\cA,H,\{\bar{\PP}_h^*\}_{h=1}^H,\{\bar{R}\}_{h=1}^H)$ which takes the distribution of a specific population of agents into account in the planning stage.

We define  a new (marginalized) true reward function $\bar{R}_h^*:\cS\times\cA\rightarrow \RR,h\in[H]$ and transition kernel $\bar{\PP}_h^*:\cS\rightarrow \RR,h\in[H]$ in the $h$-th step as follows:
\begin{align}
    \bar{R}^*_h(s_h,a_h)&=\int_{o_h,i_h}\Big[R_h^*(s_h,a_h,o_h)+f_{1h}(i_h)\Big]\ud F_h(o_h\given s_h,a_h,i_h)\ud P_h(i_h).\label{margin_rew}\\
         \bar{\PP}^*_h(\cdot\given s_h,a_h)&=\int_{o_h,i_h} \PP_h^*(\cdot\given s_h,a_h,o_h,i_h)\ud F_h(o_h\given s_h,a_h,i_h)\ud P_h(i_h),\label{transit_new2}
\end{align}
where $\PP_h^*(\cdot\given s_h,a_h,o_h,i_h)$ is given in \eqref{transition_new}. Finally, we have the actual underlying model of our aggregated MDP as $\{\bar{R}_h^*(s_h,a_h),\bar{\PP}^*_h(\cdot\given s_h,a_h)\}_{h=1}^{H},$
in the planning stage.

 For any given $h\in[H]$, we next define its associated $Q$-function $\bar{Q}_h^{\pi}:\cS\times\cA\rightarrow \RR$ and value function $\bar{V}_h^{\pi}:\cS\rightarrow \RR$ under any given policy $\pi$ as follows:
 \begin{align*}
    \bar{Q}_{h}^{\pi}(s,a)&=\EE_{\pi}\Bigg[ \sum_{j=h}^{H}\bar{R}_j^*(s_j,a_j)\given s_h=s,a_h=a\Bigg ],\\    \bar{V}_h^{\pi}(s)&=\EE_{\pi}\Bigg[ \sum_{j=h}^{H}\bar{R}_j^*(s_j,a_j)\given s_h=s\Bigg ].
\end{align*}
Here the expectation is taken with respect to the randomness of the state-action sequence $\{s_i,a_i\}_{i=h}^{H}$ with $\{s_i,a_i\}_{i=h}^{H}$ following the dynamics induced by $\pi$ and transition kernel  $\bar{\PP}_h^*(\cdot \given s_h,a_h)$.  
The associated Bellman equation is
\begin{align}
    \bar{Q}_{h}^{\pi}(s,a)&=\bar{R}_h^*(s_h,a_h)+\EE_{s_{h+1}\sim \bar{P}_{h}^*(\cdot \given s_h,a_h)}\bar{V}_{h+1}^{\pi}(s_{h+1}),\nonumber\\
    \bar{V}_h^{\pi}(s)&=\langle \bar{Q}_{h}^{\pi}(s,\cdot), \pi_h(\cdot \given s)\rangle, \bar{V}_{H+1}^{\pi}(\cdot)=0. \label{Bellman_eq}
\end{align}

It is worth noting that we only need to study the Markovian policy class $\Pi:=\{\pi=(\pi_1,\cdots,\pi_H), \pi_h:\cS\rightarrow \Delta(\cA),\forall h\in [H]\}$ in the planning stage thanks to the Markov property. 

 Recall that we assume the initial state is generated from a fixed distribution $\rho_0$ in Definition \ref{strategic_mdp}.
 Then, for any given policy $\pi\in \Pi$, the expected total rewards of the principal under the true model $M^*:=\{(\bar{R}_h^*,\bar{\PP}_h^*)\}_{h=1}^{H}$ is denoted by $J(M^*,\pi)$, with $J(M^*,\pi):=\EE_{s\sim \rho_0}[\bar{V}^{\pi}_1(s)].$  
The optimal policy is given by $\pi^*=\argmax_{\pi\in \Pi}J(M^*,\pi).$ 
 Recall that when the agents come from a specific population, learning the optimal policy of the aforementioned aggregated MDP is equivalent to studying the principal's planning problem with $J(M^*,\pi)$ being the principal's total rewards.
This is also equivalent to minimizing the suboptimality, which quantifies the loss we get by implementing our policy $\pi$ versus implementing the optimal policy $\pi^*$, namely,
\begin{align}\label{sub_opt}
   \textrm{SubOpt}(\pi)=J(M^*,\pi^*)-J(M^*,\pi).
\end{align}
In the next section, we provide a detailed algorithm for constructing a policy $\pi$ that optimizes the suboptimality based on the collected offline data.

\section{The \algo\,\,Algorithm}\label{sec:method}
This section introduces the algorithm to optimize the policy for our strategic MDP using pre-collected datasets and a model-based algorithm. We first provide a detailed explanation of the challenges of our model in \S\ref{analysis_model}. We then design our algorithm that tackles these challenges for strategic MDP \S\ref{alg_well} and \S\ref{pessimism_alg}. 

\subsection{A Peek into Strategic MDP: Why  Challenging?}\label{analysis_model}
Before proceeding to analyze the challenges in learning the strategic MDP, we first rigorously define the notion of confounders. 

\begin{definition}[Confounders] \label{def_conf}A random variable $u$ is a confounder with respective to $(X,Y)$ if both of $(X,Y)$ are caused by $u$.
\end{definition}
The first challenge in studying strategic MDP is the existence of unobserved confounders. We observe that $i_h$ affects both principal's observed feature $o_h$, immediate reward $r_h$ and future state $s_{h+1}$ for all $h\in[H]$ according to Definition \ref{strategic_mdp}. Therefore, for any $h\in [H],$ we see that the private information $i_h$ serves as an unobserved confounding variable to $(r_h,o_h)$ and $(s_{h+1},o_h)$ by Definition \ref{def_conf}. 
In the scenario, we have $\EE[g_h\given s_h,a_h,o_h]$ and $\EE[\xi_h\given s_h,a_h,o_h]\neq 0, $ and we fail to identify the true reward function $R_h^*(\cdot)$ or transition function $G_h^*(\cdot)$ via well-used square loss. 


 There exist a series of works which study offline RL using the model-based method. They first learn the model (reward functions and transition kernels) and then optimize their policy \citep{azar2017minimax,agarwal2020model,uehara2021pessimistic,zanette2021provable,li2022settling}. 
 Thus, a tempting way to solving  our strategic MDP is to apply standard model-based RL techniques by treating $(o_h,s_h)$ as the observed state variable, such as using MLE to estimate the model and  then plan using the estimated model. However, as we mentioned above, due to  the unobserved confounders $i_h,h\in [H],$ these standard techniques will result in biased estimators for rewards and transition kernels \citep{pearl2009causality,hernan2010causal}. In the worst scenario, the error caused by the bias is lower bounded by a constant level, leading to a considerable loss in optimality.

There are two additional challenges, namely,  distribution shift (insufficient data coverage) and the existence of an enormous number of states and actions, in studying our strategic MDP.

In terms of the distribution shift, there are two implications. First, the distribution  of state-action pairs induced by the behavior policy may not cover the distribution induced by some other policies in the planning stage.  Second, we are interested in learning the policy for a target population of agents, whose distribution is also not necessarily the same as that in the data generation process. 
These result in the existence of distribution shift between our collected data and the target population.

The third challenge arises
due to an enormous number of states and actions in reinforcement learning (RL) applications. In this case, traditional tabular RL is inefficient in modeling, and specific function approximation is necessary to approximate the value function or the policy. Unlike most existing works studying offline RL in finite actions and states (tabular MDP)  or assuming particular constraints on the models (linear MDP),  we consider using general function approximation, especially under the existence of latent confounders in the paper.
 


To tackle these challenges, in the following subsection, we introduce the ideas of instrumental variable regression and the  pessimism principle.  

\subsection{Ideas for Addressing These Challenges}\label{alg_well}
In this subsection, we address the aforementioned challenges on latent confounder, insufficient data coverage and function approximation in \S\ref{iv} and \S\ref{reason_pess}, respectively.

\subsubsection{Dealing with the  Latent Confounder: Instrumental Variables (IV)}\label{iv}
We first tackle the challenge due to latent confounders by leveraging instrumental variables (IV). We define instrumental variables and illustrate why IV  remedies the curse of latent confounder in strategic MDP.

\begin{definition}[Instrumental Variables]\label{def_iv}
	A random variable $Z$ is an instrumental variable with respective to $(X,Y),$ if it is satisfies the following two conditions:
	\begin{itemize}
		\item $Z$ is not independent with $X$.
		\item $Z$ only affects $Y$ through $X$, and is independent with all other variables that has influence on $Y$ but are not mediated by $X.$
	\end{itemize}
\end{definition}
To utilize the instrumental variables, we present an assumption on the latent variables $i_h$. Recall that $i_h$  represents the private type of the  agent at step $h$.
{ \begin{assumption}\label{indenp_priviate}
	We assume $\{i_h\}_{h=1}^{H}$ are independent random variables. Moreover, for any given $h\in [H],$ $i_h$ is also assumed to be independent of $\{(s_t,a_t)\}_{t=1}^h$.
\end{assumption}}
 This assumption requires the private type $i_h$ involved in every stage does not share confounding variables across different stages $h\in [H]$ and is independent of past states and actions $\{(s_t,a_t)\}_{t=1}^h$.  This is satisfied  by the example discussed in \S\ref{model}, where we have a sequence of agents whose private types are drawn independently from some distribution and are independent of past states and actions. This assumption can also be satisfied when the principal is only interacting with a single agent until stage $H$, where for different $h\in[H]$, $i_h$ is independent with each other by representing the private type of that agent from various aspects.   For example, in a job interview with multiple stages, the hiring manager would like to test the agent's ability from different aspects (such as leadership and technical capabilities) in these stages. Thus, the underlying skills $\{i_h\}_{h=1}^{H}$ of the agent satisfy Assumption \ref{indenp_priviate}.

According to Definition \ref{def_iv}, we observe that $(s_h,a_h)$ serves as an instrumental variable for $(x_h,r_h)$ and $(x_h,s_{h+1}).$ 
 Here $x_h=\phi_x(s_h,a_h,o_h)$ with $\phi_x$ being a fixed and known embedding function. We next summarize this property in Lemma \ref{lem-instrument}.

\begin{lemma}\label{lem-instrument}
	Under our model settings given in \S\ref{model} and Assumption \ref{indenp_priviate},   $z_h:=(s_h,a_h)$ is  an instrumental variable for $(x_h,r_h)$ and $(x_h,s_{h+1})$ where $x_h=\phi_x(s_h,a_h,o_h)$ with $\phi_x$ being a fixed and known embedding function.
\end{lemma}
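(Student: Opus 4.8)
The lemma states that $z_h := (s_h, a_h)$ is an instrumental variable for the pairs $(x_h, r_h)$ and $(x_h, s_{h+1})$, where $x_h = \phi_x(s_h, a_h, o_h)$. According to Definition~\ref{def_iv}, I must verify two conditions for each pair: (i) $z_h$ is not independent of $x_h$, i.e., relevance; and (ii) $z_h$ affects the outcome ($r_h$ or $s_{h+1}$) only through $x_h$, and is independent of all other variables influencing the outcome that are not mediated by $x_h$ — the exclusion/exogeneity condition. The key confounder to track is the private type $i_h$, which enters both outcomes through the noise terms $g_h = f_{1h}(i_h) + \epsilon_h$ and $\xi_h = f_{2h}(i_h) + \eta_h$, and also enters $o_h$ (hence $x_h$) through the observation channel $F_h$.

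**Let me sketch the proof strategy.**

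Let me start by figuring out the causal/dependency structure. First I would establish relevance. Since $x_h = \phi_x(s_h, a_h, o_h)$ is a deterministic function that includes $s_h$ and $a_h$ as arguments (via the embedding), $z_h = (s_h, a_h)$ is trivially not independent of $x_h$ --- indeed $z_h$ is a measurable function of $x_h$ in the relevant coordinates, and more substantively $a_h$ genuinely influences $o_h$ through the agent's best response $b_h = \argmax_b R^*_{ah}(s_h, a_h, i_h, b)$ and the channel $F_{ah}(\cdot \mid s_h, i_h, b_h)$. So the first bullet of Definition~\ref{def_iv} holds. This part I expect to be routine; the real content is the exclusion restriction.

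For the exclusion restriction, I would argue as follows. Decompose each outcome into the part mediated by $x_h$ and the residual confounding noise: $r_h = R_h^*(x_h) + f_{1h}(i_h) + \epsilon_h$ and $s_{h+1} = G_h^*(x_h) + f_{2h}(i_h) + \eta_h$. The ``other variables influencing the outcome but not mediated by $x_h$'' are exactly $i_h$ (through $f_{1h}, f_{2h}$) and the exogenous noises $\epsilon_h, \eta_h$. The noises $\epsilon_h, \eta_h$ are independent of everything by assumption, so the crux is to show $z_h = (s_h, a_h) \perp i_h$. This is precisely where Assumption~\ref{indenp_priviate} enters: it states $i_h$ is independent of $\{(s_t, a_t)\}_{t=1}^h$, which in particular gives $i_h \perp (s_h, a_h)$. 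I would spell out that any dependence could only arise through the history, but since $i_h$ is freshly sampled from $P_h(\cdot)$ independently of past states/actions, no such dependence exists. Hence $z_h$ is independent of the confounding channel. Finally, I must check that $z_h$ affects the outcome only through $x_h$: given $x_h$, the deterministic maps $R_h^*, G_h^*$ depend on $(s_h,a_h)$ only via $x_h$, so conditioning on $x_h$ blocks the direct path from $z_h$ to the outcome.

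**The main obstacle.** I expect the subtle point to be articulating the second bullet of the IV definition precisely in this bilevel setting: one must be careful that although $a_h$ enters the agent's best-response $b_h$ and thus $o_h$, this is exactly the \emph{desired} mediated path through $x_h$, not a violation of exclusion. The exclusion restriction only forbids $z_h$ from having an \emph{unmediated} effect on the outcome or from being correlated with the unmediated confounder $i_h$. I would emphasize that the combination of (a) the reward/transition depending on $(s_h,a_h,o_h)$ solely through the embedding $x_h$, and (b) Assumption~\ref{indenp_priviate} guaranteeing $i_h \perp (s_h,a_h)$, is exactly what certifies validity. The proof is therefore essentially a careful bookkeeping of the conditional independence structure rather than a computation, and the delicate step is confirming that the instrument's relevance (acting \emph{through} $o_h$) does not clash with exogeneity (being uncorrelated with the \emph{confounding} component of $o_h$, namely $i_h$).
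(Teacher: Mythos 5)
Your proposal is correct and follows essentially the same route as the paper's proof: relevance via the dependence of $o_h$ (hence $x_h$) on $(s_h,a_h)$ through the agent's best response, and exclusion via the independence of $(s_h,a_h)$ from the confounding noise $g_h,\xi_h$ (equivalently $i_h$) guaranteed by Assumption~\ref{indenp_priviate}. Your write-up simply spells out in more detail the decomposition $r_h=R_h^*(x_h)+f_{1h}(i_h)+\epsilon_h$ and $s_{h+1}=G_h^*(x_h)+f_{2h}(i_h)+\eta_h$ that the paper's argument uses implicitly.
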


\begin{figure}[h]
	\centering
	\includegraphics[width=0.70\textwidth]{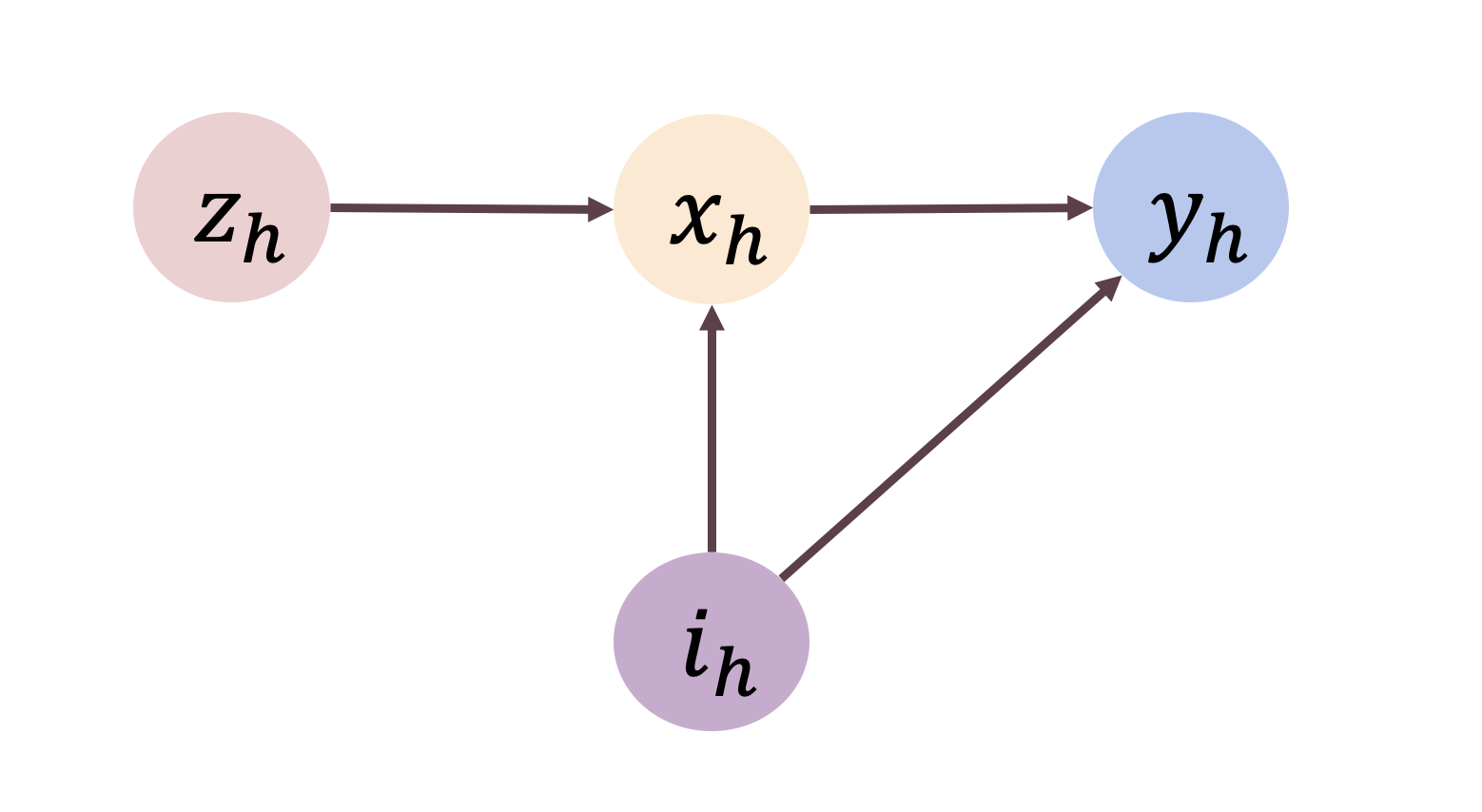}
	\caption{{Causal graph of the $h$-stage of strategic MDP. Here $z_h=(s_h,a_h)$  denotes the instrumental variable and $x_h = \phi_x(s_h,a_h,o_h)$ is our covariate. Moreover, $y_h$ represents the response variable (reward $r_h$ or next state $s_{h+1}$) and $i_h$ is the hidden confounder variable.}}
	\label{fig:DAG}
\end{figure}

Next, we utilize the IV $(s_h,a_h)$ to identify the strategic MDP model from the the confounded offline data. 
Therefore, $R_h^*(\cdot),G_{h}^*(\cdot)$ satisfy the following moment equations \eqref{identify_r} and \eqref{identify_g}:
\begin{align}
	\EE_{\rho}\big[R_h^*(s_h,a_h,o_h)-r_h\given s_h,a_h\big]&=0,\label{identify_r}\\
	\EE_{\rho}\big[G_h^*(s_h,a_h,o_h)-s_{h+1}\given s_h,a_h\big]&=\mathbf{0},\label{identify_g}
\end{align}
as $\EE_{\rho}[\xi_h\given s_h,a_h]=\mathbf{0}$ and $\EE_{\rho}[g_h\given s_h,a_h]=0$ by our model formulation and Assumption \ref{indenp_priviate}. 

Thus, in order to identify $R_h^*(\cdot)$, one then minimizes the following loss function
\begin{align}\label{loss_new}
	\min_{R_h\in L^2(\cX)} L(R_h)=\min_{R_h\in L^2(\cX)}\EE_{\rho}\Big[\EE_{\rho}[R_h(s_h,a_h,o_h)-r_h\given s_h,a_h]\Big]^2,
\end{align}
which is a projected least squares loss function. By \eqref{identify_r}, we see $R_h^*(\cdot)$ is a minimizer of this loss. 
It is worth noting that we are also able to use  $\psi_z(s_h,a_h)$ with some known embedding function $\psi_z(\cdot)$ as the instrumental variable. Similar to Remark \ref{remark:embedding}, for the simplicity of our notation, we also omit $\psi_z(\cdot)$ without specification. 

In this subsection, we utilize algorithmic instrumental variables to identify our model and thus tackle the first challenge on latent confounding variables. In the following subsection, we introduce the idea of pessimism for handling the distribution shift.

\subsubsection{Handling Distribution Shift: Pessimism Principle}\label{reason_pess}
In this subsection, we will provide some intuition of using pessimism to handle distribution shift. At a high level, for all $\pi\in \Pi$, we want to construct a data-driven $\hat J(\pi)$ which serves as a pessimistic estimate of $J(M^*,\pi)$, in the sense that 
\begin{align}\label{pessimism}
\hat J(\pi) \le J(M^*,\pi), \textrm{ for all } \pi\in \Pi.
\end{align}
Here $M^*$ denotes the true model defined in \S\ref{planning}.
We then choose the optimal policy with respect to such a pessimistic estimate, namely,
\begin{align}
	\hat\pi:=\argmax_{\pi\in \Pi}\hat J(\pi).\label{pess}
	\end{align}
In this scenario,   the suboptimality of $\hat\pi$ is bounded by
\begin{align}
	\textrm{SubOpt}(\hat\pi)&=J(M^*,\pi^*)-J(M^*,\hat\pi)\nonumber\\
	&=J(M^*,\pi^*)-\hat J(\pi^*)+\hat J(\pi^*)-\hat J(\hat\pi)+\hat J(\hat\pi)-J(M^*,\hat\pi)\nonumber\\
	&\le J(M^*,\pi^*)-\hat J(\pi^*)+\hat J(\hat\pi)-J(M^*,\hat\pi)\nonumber\\&\le J(M^*,\pi^*)-\hat J(\pi^*).\label{upper_subopt}
\end{align}
The first inequality follows from our construction of $\hat\pi$ given in \eqref{pess} and the second inequality follows from \eqref{pessimism}.  One observes that \eqref{upper_subopt} reflects the bias due to pessimism. As it only depends on the trajectory induced by $\pi^*,$ it is small as long as the dataset has good coverage over $\pi^*.$

To solve the third challenge, we leverage general function approximation to accommodate this issue. Using general function approximation admits many merits. It allows more flexible function classes than tabular and linear cases, see \S\ref{pessimism_alg} for more details. Moreover, we are also able to handle the function class misspecification which will be discussed in \S\ref{sec:misspecified}. 

Next, we combine all pieces and provide our algorithm for solving strategic MDP. In specific, we construct such pessimistic value functions using lower level sets of the IV-assisted loss functions with general function approximation.

\subsection{Putting All Pieces Together: Pessimistic Strategic MDP with General Function Approximation}\label{pessimism_alg}
In this subsection, we propose a novel algorithm, namely, \underbar{p}essimistic policy \underbar{l}earning with \underbar{a}lgorithmic  i\underbar{n}struments  (\algo), which resolves the aforementioned challenges by leveraging the ideas of pessimism principle and instrumental variable regression with general function approximation. 

\subsubsection{A Glimpse of the  \algo~Algorithm}\label{abstract_alg}

As we mentioned in \S\ref{reason_pess}, to leverage pessimism, we need to construct a data-driven pessimistic function $\hat J(\pi)$ such that $\hat J(\pi)\le J(M^*,\pi),$ for all $\pi\in \Pi$.  For any confidence region $\cM$ of $M^*=\{(\bar{R}_h^*,\bar{\PP}_h^*)\}_{h=1}^{H},$ which is constructed based on data, and any given policy $\pi$, we let  $\hat J(\pi):=\min_{M\in \cM}J(M,\pi).$ Here $J(M,\pi)$ represents the total value function evaluated by policy $\pi$ under model $M=\{(\bar{R}_h,\bar{\PP}_h)\}_{h=1}^{H}.$ Then it can be shown that $\hat J(\pi)=\min_{M\in \cM}J(M,\pi)\le J(M^*,\pi)$ as long as $\cM$ contains $M^*.$
In this scenario, we then construct the  estimated policy $\hat\pi$ as we discussed in \S\ref{reason_pess}, namely,
\begin{align}\label{est_policy}
	\hat\pi=\argmax_{\pi \in \Pi}\min_{M\in \cM}J(M,\pi).
\end{align}
As a consequence, by \eqref{upper_subopt}, we have
\begin{align*}
	\textrm{SubOpt}(\hat\pi)\le  J(M^*,\pi^*)-\hat J(\pi^*)=J(M^*,\pi^*)-\min _{M\in\cM} J(M,\pi^*).
\end{align*}
Next, we construct the confidence region $\cM$ of $M^*=\{(\bar{R}_h^*,\bar{\PP}_h^*)\}_{h=1}^{H}$.

\subsubsection{Construction of Confidence Sets}\label{const_level_set}
In this subsection, we    construct the confidence sets $\cM:=\{\bar{\cR}_h,\bar{\cG}_h\}_{h=1}^H$ for $\{\bar{R}_h^*,\bar{\PP}_h^*\}_{h=1}^H$  via  lower level sets of the sample loss functions, where the loss functiosn are derived from minimax estimation.

First,  we introduce the sample version of the loss functions. Recall that the model identification is derived in \S\ref{iv} via the loss function in \eqref{loss_new}
 which admits $R_h^*$ as a minimizer. Note that \eqref{loss_new} can not be directly estimated from the data due to the conditional expectation inside the square function. By Fenchel duality \citep{rockafellar2009variational,shapiro2021lectures}, the loss in \eqref{loss_new} is equivalent to  the following minimax loss function that can be estimated via data:
\begin{align*}
    \min_{R_h\in L^2(\cX)}L(R_h)=2 \min_{R_h\in L^2(\cX)}\max_{f\in L^2(\cS\times \cA)}\EE_{\rho}\big[(r_h-R_h(s_h,a_h,o_h))f(s_h,a_h)\big]-\EE_{\rho}\big[f^2(s_h,a_h)\big].
\end{align*}
Here $L^2(\cX)$ denotes the $\ell_2$-integrable functions on domain $\cX$ under probability measure $\rho.$ 

Next, we derive confidence sets $\bar{\cR}_h$ and $\bar{\cG}_h$ for $\bar{R}^*_h$ and $\bar{\PP}^*_h$, respectively, by leveraging the sample version of the loss function and the offline dataset in \S\ref{offline_data}.
 First, we construct a Wilk's type confidence set \citep{wilks1938large} for $R_h^*$ by letting
\begin{align}\label{confidence_r}
    \cR_h=\Big\{R_h\in \RR_h: \cL_K(R_h)-\cL_K(\hat R_h)\lesssim c_{r,h,K}^2 \Big\},
\end{align}
where we define $\cL_K$ and $\hat R_h$ respectively as 
\begin{align*}
    \cL_K(R_h)&=\sup_{f\in \cF}\Bigg\{\frac{1}{K}\sum_{k=1}^{K}\Big(r_{h}-R_h\Big(s_h^{(k)},a_h^{(k)},o_h^{(k)}\Big)\Big) f\Big(s_h^{(k)},a_h^{(k)}\Big)-\frac{1}{2K}\sum_{k=1}^{K}f^2\Big(s_h^{(k)},a_h^{(k)}\Big) \Bigg\},\,\,\textrm{and}\\
    \hat R_h&=\argmin_{R_h\in \RR_h}\cL_K(R_h).
\end{align*}
Here $\RR_h:=\{R_h:\cX\rightarrow [-L,L]\}$ with $R_h^*\in\RR_h$ and $\cF:=\{f:\cS\times \cA\rightarrow [-L,L]\}$.
Moreover, $c_{r,h,K}\in \RR$ is a threshold that will be specified later. By properly choosing this threshold, we can prove that $\cR_h$ contains $R_h^*$ with high probability. In other words, the confidence set $\cR_h$ is valid.  

  For our aggregated MDP discussed in \S\ref{planning}, we define the high-confidence set for $\bar{R}_h^*$ in \eqref{margin_rew}  as a transformation of $\cR_h$ according to the target population of agents: 
\begin{align*}
    \bar{\cR}_h=\Bigg\{\bar{R}_h:\bar{R}_h=\int_{o_h,i_h}[R_h(s_h,a_h,o_h)+f_{1h}(i_h)]\ud F_h(o_h\given s_h,a_h,i_h)\ud P_h(i_h),R_h\in \cR_h \Bigg\}.
\end{align*}
Here we recall that $F_h(\cdot\given s_h,a_h,i_h)$ is the known conditional distribution of $o_h$ given $(s_h,a_h,i_h)$ in the planning stage. Observe that when $R_h^*\in \cR_h$, we have $\bar{R}_h^*\in \bar{\cR}_h$.

Similarly, the confidence set for the $j$-th coordinate of function $G_{h}^*:\cX\rightarrow  \RR^{d_1}$ is defined as
\begin{align}\label{confidence_g}
    \cG_{h,j}=\Big\{G_{h,j}\in \mathbb{G}_{h,j}: \cL_K(G_{h,j})-\cL_K(\hat G_{h,j})\lesssim c_{h,g,K}^2 \Big\}.
\end{align}
Here $\mathbb{G}_{h,j}:=\{G_{h,j}:\cX\rightarrow [-L,L]\},$ with $G_{h,j}^*\in\mathbb{G}_{h,j}\forall j\in[d_1]$. In addition, with slight abuse of notation, we define $\cL_K$ and $\hat G_{h,j}$ as
\begin{align*}
    \cL_K(G_{h,j})&=\sup_{f\in \cF}\Bigg\{\frac{1}{K}\sum_{k=1}^{K}\Big(s^{(k)}_{h+1,j}-G_{h,j}\Big(s_h^{(k)},a_h^{(k)},o_h^{(k)}\Big)\Big) f\Big(s_h^{(k)},a_h^{(k)}\Big)-\frac{1}{2K}\sum_{k=1}^{K}f\Big(s_h^{(k)},a_h^{(k)}\Big)^2 \Bigg\},\,\,\textrm{and}\\
    \hat G_{h,j}&=\argmin_{G_{h,j}\in \mathbb{G}_{h,j}}\cL_K(G_{h,j}).
\end{align*}
Similarly, we define $\bar{\cG}_{h,j}$ as the confidence set for $\bar{G}_{h,j}^*$: 
\begin{align*}
    \bar{\cG}_{h,j}&=\Bigg\{\PP_{h,j}(\cdot\given s_h,a_h):\PP_{h,j}(\cdot\given s_h,a_h)=\int_{o_h,i_h}\PP_{h,j}(\cdot\given s_h,a_h,o_h,i_h)\ud F_h(o_h\given s_h,a_h,i_h)\ud P_h(i_h) ,\\&\PP_{h,j}(\cdot\given s_h,a_h,o_h,i_h)\sim N(G_{h,j}(s_h,a_h,o_h)+f_{2h,j}(i_h),\sigma^2),\textrm{ with }G_{h,j}(\cdot) \in \cG_{h,j} \Bigg\}.
\end{align*}
 Combining these confidence sets together, we achieve that $\cM=\{(\bar{\cR}_h,\bar{\cG}_h)\}_{h=1}^{H}$  contains the true model $M^*$ with high probability. 

It is worth noting that most existing literature only studies tabular or linear MDP using pessimism-based ideas with no hidden confounders \citep{jin2021pessimism,rashidinejad2021bridging,shi2022pessimistic,yan2022efficacy}. However, our algorithm \algo\,\,works for a general class of non-parametric reward and transition kernels and even permits the existence of hidden confounding variables. In the next section, we provide more details on the theoretical guarantees of \algo.

\section{Theoretical Results}\label{sec:theory}
In this section, we present theoretical guarantees for \algo\,\, in \S\ref{sec:method}. 
We next analyze our policy optimization results with well-specified function classes in \S\ref{sec:well-specify} and misspecified ones in \S\ref{sec:misspecified}, respectively.
\subsection{Theoretical Results for Suboptimality under Well-Specified Function Class.}\label{sec:well-specify}
In this subsection, we provide theoretical guarantees for the suboptimality of our estimated policy \eqref{est_policy}. Here $\RR_h,\mathbb{G}_{h,j}$ are correctly specified, containing $R_h^*$ and $G_{h,j}^*$. 

Before proceeding to the theoretical analysis, we first present several assumptions. The first assumption ensures that the sampling distribution $\rho$ satisfies a partial coverage condition.
\begin{assumption} [Concentrability Coefficients of Partial Coverage] \label{assume:concentrability}
{There exists a constant $C_{\pi^*}>0$ such that 
\begin{align*}
    \sup_{R_h\in \{\cR_h-R_h^*\}}\frac{\EE_{\ud ^{(\pi^*,\PP)}(s_h,a_h,o_h)}[R_h^2(s_h,a_h,o_h)]}{\EE_{\rho(s_h,a_h,o_h)}[ R_h^2(s_h,a_h,o_h)]}&\le C^2_{\pi^*} ,\forall h\in [H],\\
        \sup_{G_{h,j}\in \{\cG_{h,j}-G_{h,j}^*\}}\frac{\EE_{
        \ud ^{(\pi^*,\PP)}(s_h,a_h,o_h)
        }[G^2_{h,j}( s_h,a_h,o_h)]}{\EE_{\rho(s_h,a_h,o_h)}[G_{h,j}^2( s_h,a_h,o_h)]}& \leq C^2_{\pi^*},\forall j\in [d_1],\forall h\in [H],
\end{align*} 
where we define $\cR_h-R_h^*=\{(R_h-R_h^*)(\cdot), R_h(\cdot)\in \cR_h\},$ $\cG_{h,j}-G_{h,j}^*=\{(G_{h,j}-G_{h,j}^*)(\cdot), G_{h,j}(\cdot)\in \cG_{h,j}\}.$} In addition, we define $\ud^{(\pi^*,\PP)}(s_h,a_h,o_h)$ as the joint distribution of $(s_h,a_h,o_h)$ at stage $h$ induced by $\pi^*=\{\pi_j^*(a_j\given s_j)\}_{j=1}^{h}$ and $\PP=\{\PP_j(o_i\given s_j,a_j)\}_{j=1}^{h},$ {where $\PP_j(\cdot \given s_j,a_j):=\int_{i_j} F_j(\cdot\given i_j,a_j,s_j)\ud P_j(i_j)$ is the known marginal distribution of $o_j$ given $(s_j,a_j)$ in the $j$-th planning stage.} 
\end{assumption}
In this assumption, we let the ratio of the square loss under two distributions, namely, the distribution $d^{(\pi^*,\PP)}(\cdot)$ induced by the optimal policy with ``population-specific" agents and the sample distribution $\rho,$ be bounded by a concentrability coefficients $C_{\pi^*}^2$. This notion is adapted from \cite{uehara2021pessimistic}. A sufficient condition for this assumption is when the ratio of these two densities $\sup_{(a_h,s_h,o_h)}{f^{(\pi^*,\PP)}(s_h,a_h,o_h)}/{f_{\rho}(s_h,a_h,o_h)}$  is upper bounded by $C_{\pi^*}^2$.
Therefore, Assumption \ref{assume:concentrability} is mild in a sense that it only requires partial coverage of the sample distribution $\rho(\cdot)$ (i.e., covering $\ud^{(\pi^*,\PP)}(\cdot)$ induced by the optimal policy $\pi^*$ instead of all $\pi$). 

Furthermore, as we utilize IV regression in \algo, 
analysis of IV regression enables us to 
 bounded {projected MSE (PMSE)} $\EE_{\rho}[\EE_{\rho}[(R_h-R_h^*)(s_h,a_h,o_h)\given s_h,a_h]^2]$, which is  measured in the space of instrumental variable, as opposed to the more desired  MSE  $\EE_{\rho}[R_h-R_h^*)^2(s_h,a_h,o_h)]$. Transforming the PMSE to MSE, acting as a discontinuous mapping, results in an ill-posed inverse problem \citep{horowitz2013ill}. 
Thus, 
we next present an ill-posed condition, which measures the difficulty of such an inverse problem. 
\begin{assumption}[Ill-posed Condition]{ \label{ill-pose} We assume there exist coefficients $\tau_{r,h,K}$ and $\tau_{G,h,K}$ depending on $K,h,$ and function classes $\cR_h,\cG_{h}$ such that the following conditions hold:
 \begin{align*}
\sup_{R_h\in \{\cR_h-R_h^*\}}\frac{\EE_{\rho(s_h,a_h,o_h)}[R_h^2(s_h,a_h,o_h)]}{\EE_{\rho(x_h,a_h)}[ \EE_{\rho(o_h\given s_h,a_h)}[R_h(s_h,a_h,o_h)\given s_h,a_h]^2]}&\le \tau^2_{r,h,K},\forall h\in [H],\\
\sup_{G_{h,j}\in \{\cG_{h,j}-G_{h,j}^*\}}\frac{\EE_{\rho(s_h,a_h,o_h)}[G_{h,j}^2(s_h,a_h,o_h)]}{\EE_{\rho(s_h,a_h)}[ \EE_{\rho(o_h\given s_h,a_h)}[G_{h,j}(s_h,a_h,o_h)\given s_h,a_h]^2]}&\le \tau^2_{G,h,K},\forall j\in [d_1],  \forall h\in [H].
\end{align*}}
\end{assumption}

It is worth noting that the ill-posed condition is a standard assumption in the literature of {(nonparametric)} instrumental regression. See, e.g.,  \cite{chen2011rate,darolles2011nonparametric,horowitz2013ill,dikkala2020} for more details. 

Next, to ensure that the minimax estimation procedure given in \S\ref{const_level_set} is valid, we impose an assumption  on the employed function classes.
\begin{assumption}\label{assume:functionclass}
Let $\cF:=\{f:\cZ\rightarrow [-L,L]\},\RR_h=\{R_h:\cX\rightarrow [-L,L]\},\mathbb{G}_{h,j}=\{G_{h,j}:\cX\rightarrow [-L,L]\},\forall j\in[d_1],\forall h\in[H]$ be function classes given in \eqref{confidence_r} and \eqref{confidence_g} with $R_h^*\in \RR_h,G_{h,j}^*\in \mathbb{G}_{h,j}\forall j\in[d_1].$ Moreover, we assume that for all $ R_h\in \RR_h,$ and $G_{h,j}\in \mathbb{G}_{h,j}$, it holds that  
\begin{align*}
\mathbb{T}(R_h-R_h^*)(z)&:=\EE_{\rho}[R_h(o_h,a_h,s_h)-R_h^*(o_h,a_h,s_h)\given (a_h,s_h)=z]\in \cF,\\ \mathbb{T}(G_{h,j}-G_{h,j}^*)(z)&:=\EE_{\rho}[G_{h,j}(o_h,a_h,s_h)-G_{h,j}^*(o_h,a_h,s_h)\given (a_h,s_h)=z]\in \cF, \forall h\in [H],\forall j\in [d_1].
\end{align*}
Furthermore,  $\cF$ is assumed to be a symmetric and star-shaped function class.
\end{assumption}
 In this assumption, we assume function classes $\cF,\RR_h,\mathbb{G}_{h,j},\forall j\in [d_1],\forall h\in [H]$ given in \eqref{confidence_r} and \eqref{confidence_g} are well-specified in the sense that $R_h^*\in \RR_h,G_{h,j}^*\in \mathbb{G}_{h,j}$ and 
$\cF$ contains the projected function classes $$\cP(R,h):=\{\mathbb{T}(R_h-R_h^*)(z),R_h\in \RR_h\}$$ and $$\cP(G,h,j):=\{\mathbb{T}(G_{h,j}-G_{h,j}^*)(z),G_{h,j}\in \mathbb{G}_{h,j}\},\,\,\forall j\in[d_1],h\in[H].$$ 

Recall that  the confidence sets for $R_h^*$ and $G_{h,j}^*$ are constructed in \eqref{confidence_r} and \eqref{confidence_g}. We now specify the corresponding thresholding parameters $c_{h,r,K}$ and $c_{h,G,K}$.
We set $c_{h,r,K}=\cO(\delta_{\cR,h,\delta})$ with $\delta_{\cR,h,\delta} =\cO(\delta_{\cR,h}+\sqrt{\log(H/\delta)/K}).$ Here $ \delta_{\cR,h}$ is the maximum critical radii of $\cF$ and the following function class 
\begin{align}\label{cr_h}
    \cR_h^{*}:=\Big\{c(R_h-R_h^*)(x)\cdot \mathbb{T}(R_h-R_h^*)(z): \cX\times\cZ\rightarrow\RR; R_h\in \mathbb{R}_h,\forall c\in [0,1] \Big\}, h\in [H].
\end{align}
Here $\cR_h^*$ is the star hull of the function class which is the product of $(\RR_h^*-R_h^*)$ and its projections in $\cP(R,h).$ 
The upper bound of the critical radii of $\cR_h^*$ and $\cF$ measures the maximal complexity of function classes $\cR_h^{*}$ and $\cF$, respectively.

Similarly, we define $c_{h,G,K}:=\cO(\delta_{\cG,h,\delta})$ with $\delta_{\cG,h,\delta} =\cO(\delta_{\cG,h}+\sqrt{\log(Hd_1/\delta)/K})$ and  $\delta_{\cG,h}$ being the maximum critical radii of $\cF$ and the following function class
\begin{align}\label{cg_hj}
    \cG_{h,j}^{*}:=\Big\{ c(G_{h,j}-G_{h,j}^*)(x) \cdot\mathbb{T}(G_{h,j}-G_{h,j}^*)(z):\cX\times\cZ\rightarrow\RR; G_{h,j}\in \mathbb{G}_{h,j},\forall c\in [0,1] \Big\},
\end{align}
$\forall j\in [d_1].$ Here we recall that $d_1$ is the dimension of the state variable.
With these necessary assumptions at hand, we bound the suboptimality in the following theorem.
\begin{theorem}\label{sub_optimality}
Under Assumptions \ref{assume:concentrability}, \ref{ill-pose}, and \ref{assume:functionclass}, with probability $1-\delta-1/K$, the suboptimality of $\hat\pi$ returned by \algo\,\, is upper bounded by $$\textrm{SubOpt}(\hat\pi)\lesssim C_{\pi^*}\Bigg[H\sum_{h=1}^{H}\tau_{G,h,K}\sqrt{d_1}L_{K,d_1}\delta_{\cG,h,\delta}+\sum_{h=1}^{H}\tau_{r,h,K}L_{K,1}\delta_{\cR,h,\delta}\Bigg],$$ where $L_{K,x}=L+\sigma\sqrt{(\log H+1)\log (Kx)},$ with $x\in \{1,d_1\}$,
$\delta_{\cR,h,\delta}=\delta_{\cR,h}+\sqrt{\log(H/\delta)/K},$ and $\delta_{\cG,h,\delta}=\delta_{\cG,h}+\sqrt{\log(d_1H/\delta)/K}$. Here $L$ is the upper bound of bounded functions in $\cF,\RR_h,\mathbb{G}_{h,j},\forall h\in [H],\forall j\in [d_1]$ mentioned in Assumption \ref{assume:functionclass} in $\ell_{\infty}$-norm. Besides, $\delta_{\cG,h}$ is the maximum critical radii of $\cF,\cG^{*}_{h,j}\forall j\in[d_1]$, and  $\delta_{\cR,h}$ is an upper bound of the critical radii of $\cF,\cR_h^*.$  
\end{theorem}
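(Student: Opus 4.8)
The plan is to build directly on the pessimism decomposition already recorded in \eqref{upper_subopt}, so that the whole argument reduces to two ingredients: (i) showing that the data-driven confidence region $\cM=\{(\bar\cR_h,\bar\cG_h)\}_{h=1}^H$ contains the true model $M^*$ with probability at least $1-\delta-1/K$, and (ii) bounding $J(M^*,\pi^*)-J(M,\pi^*)$ uniformly over $M\in\cM$. For ingredient (i), I would fix a stage $h$ (and a coordinate $j$ for the transition) and observe that the population minimax loss underlying \eqref{loss_new} is minimized at $R_h^*$ with value zero, by the conditional moment identity \eqref{identify_r} and Lemma \ref{lem-instrument}; membership of $R_h^*$ in the level set \eqref{confidence_r} then follows by controlling the deviation between the empirical loss $\cL_K$ and its population counterpart. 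The only nonstandard point is that the transition response $s_{h+1,j}$ is an unbounded Gaussian, so I would first truncate the noise $\eta_{h,j}$ at level $\sigma\sqrt{(\log H+1)\log(Kd_1)}$ (and $\epsilon_h$ at level $\sigma\sqrt{(\log H+1)\log K}$ for the scalar reward noise). This truncation is what produces the effective envelopes $L_{K,d_1}$ and $L_{K,1}$ and contributes the $1/K$ term to the failure probability. A union bound over $h\in[H]$ and $j\in[d_1]$ together with the threshold choices $c_{h,r,K}=\cO(\delta_{\cR,h,\delta})$ and $c_{h,G,K}=\cO(\delta_{\cG,h,\delta})$ then yields validity; since the marginalization maps \eqref{margin_rew}--\eqref{transit_new2} are applied identically to define $\bar\cR_h$ and $\bar\cG_h$, we conclude $M^*\in\cM$.

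For ingredient (ii) I would apply the standard model-based value-difference (simulation) lemma with the reference measure taken under the true model and the optimal policy,
\begin{align*}
J(M^*,\pi^*)-J(M,\pi^*)=\sum_{h=1}^H\EE_{\ud^{(\pi^*,\PP)}(s_h,a_h)}\Big[(\bar R_h^*-\bar R_h)(s_h,a_h)+\big(\bar\PP_h^*-\bar\PP_h\big)\bar V_{h+1}^{\pi^*,M}(s_h,a_h)\Big],
\end{align*}
which is precisely the choice of reference distribution that lets Assumption \ref{assume:concentrability} enter. The reward term is treated by writing $(\bar R_h^*-\bar R_h)(s_h,a_h)=\EE_{o_h\sim\PP_h}[(R_h^*-R_h)(s_h,a_h,o_h)]$ (the $f_{1h}$ contribution cancels), applying Cauchy--Schwarz to pass to $\EE_{\ud^{(\pi^*,\PP)}(s_h,a_h,o_h)}[(R_h^*-R_h)^2]^{1/2}$, then invoking concentrability to switch to $\rho$, and finally the ill-posedness bound of Assumption \ref{ill-pose} to pass from the $\rho$-MSE to the projected MSE $\EE_{\rho}[\EE_\rho[(R_h^*-R_h)\given s_h,a_h]^2]$. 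Crucially, I route through the MSE (which involves no projection) and only convert to the $\rho$-PMSE at the very end, so the mismatch between the planning marginal $\PP_h$ and the sampling marginal in $\rho$ causes no difficulty.

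For the transition term, since $\bar\PP_h^*$ and $\bar\PP_h$ are mixtures over $(o_h,i_h)$ of the Gaussians $N(G_h^*(\cdot)+f_{2h}(i_h),\sigma^2\II)$ and $N(G_h(\cdot)+f_{2h}(i_h),\sigma^2\II)$ that differ only through $G_h^*$ versus $G_h$, I would bound $|(\bar\PP_h^*-\bar\PP_h)\bar V_{h+1}^{\pi^*,M}|$ by $\|\bar V_{h+1}^{\pi^*,M}\|_\infty$ times the total-variation distance of the two mixtures, using the Gaussian/Pinsker estimate $\mathrm{TV}\lesssim\|G_h^*-G_h\|_2/\sigma$ to avoid any dimension-dependent loss, followed by Jensen over the mixing measure to reduce to $\EE_{\ud^{(\pi^*,\PP)}(s_h,a_h,o_h)}\|G_h^*-G_h\|_2$. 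Bounding $\|\bar V_{h+1}^{\pi^*,M}\|_\infty\lesssim H\,L_{K,d_1}$ (a candidate model has rewards in $[-L,L]$, so its value is of order $HL\le HL_{K,d_1}$), a Cauchy--Schwarz across the $d_1$ coordinates produces the $\sqrt{d_1}$ factor, and applying concentrability and ill-posedness coordinatewise yields the per-stage contribution $H\,C_{\pi^*}\tau_{G,h,K}\sqrt{d_1}\,L_{K,d_1}\,\delta_{\cG,h,\delta}$.

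The crux — and the step I expect to be the main obstacle — is converting membership in the empirical level sets \eqref{confidence_r}--\eqref{confidence_g} into the population projected-MSE bounds $\EE_\rho[\EE_\rho[(R_h^*-R_h)\given s_h,a_h]^2]\lesssim\delta_{\cR,h,\delta}^2$ and its analogue for $G_{h,j}$. For this I would deploy the localized Rademacher / critical-radius machinery of \S\ref{intro_local_rademacher}: Assumption \ref{assume:functionclass} guarantees the dual class $\cF$ realizes the projections $\mathbb{T}(R_h-R_h^*)$ and $\mathbb{T}(G_{h,j}-G_{h,j}^*)$, so the inner supremum in $\cL_K$ recovers a localized surrogate of the PMSE; controlling the empirical process uniformly over the star-shaped product classes $\cR_h^*$ in \eqref{cr_h} and $\cG_{h,j}^*$ in \eqref{cg_hj} at their critical radii $\delta_{\cR,h},\delta_{\cG,h}$, adding the $\sqrt{\log(\cdot/\delta)/K}$ deviation term, and carrying the truncation envelopes $L_{K,1},L_{K,d_1}$ through the bounded-response concentration, yields the claimed rates. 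Substituting these PMSE bounds into the reward and transition estimates above, summing over $h\in[H]$ and over the $d_1$ coordinates, and collecting the $C_{\pi^*}$ factor from concentrability then gives the stated bound. The entire argument is carried out on the intersection of the confidence-set-validity event and the empirical-process event, which has probability at least $1-\delta-1/K$.
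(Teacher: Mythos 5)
Your proposal is correct and follows essentially the same route as the paper's proof: the three ingredients you identify correspond exactly to the paper's Lemma \ref{contain_true} (validity of the confidence sets, with the same noise-truncation device producing $L_{K,1},L_{K,d_1}$ and the $1/K$ failure probability), Lemma \ref{Pessmistic} (the pessimism plus simulation-lemma decomposition evaluated under $\ud^{(\pi^*,\PP)}$), and Lemma \ref{project_MSE} (localized Rademacher control of the projected MSE over the level sets). The subsequent chain — Gaussian TV bound, Jensen over the mixing measure, Cauchy--Schwarz across the $d_1$ coordinates, concentrability to shift to $\rho$, then ill-posedness to pass to the PMSE — matches the paper's derivation of \eqref{proof_g} and \eqref{proof_r} step for step.
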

Theorem \ref{sub_optimality} provides an upper bound for the suboptimality of $\hat\pi$ under mild conditions. This upper bound involves critical radii, measuring the complexity of function classes $\cF$ and $\RR_h,h\in[H]$, time horizon $H$, and concentrability and ill-posed coefficients. For every single stage, we achieve a fast statistical rate with order $\cO(\sqrt{1/K})$ under many function classes; see \S\ref{linear_function} and \S\ref{example:rkhs} for more details. This matches the well-known regression rate and thus is minimax optimal up to ill-posed coefficients for general instrumental regression problems. 

Compared with existing literature that studies strategic regression \citep{stevenwu2021}, our contribution is two-fold. First, we extend the strategic regression to the decision-making problem where the principal interacts with strategic agents in multiple stages. In contrast,  strategic regression only involves single-stage decision making. Moreover, we consider general function classes and utilize the method of moments and pessimism to tackle the technique difficulty, whereas \cite{stevenwu2021} only consider the linear function class and use point estimators.

Compared with existing works studying offline MDP, we propose a new framework, namely, strategic MDP, that captures strategic interactions between a sequence of agents and a principal. In specific, the agents have private types, acting as latent confounders, affect both principal's observation, immediate reward and future state. We propose a model-based algorithm and develop novel proof frameworks by leveraging IV regression, pessimism via lower level sets, and general function approximation to eliminate the technical challenges. We achieve similar statistical rates of suboptimality with standard model-based RL (without latent confounders) with function approximation \citep{,duan2020minimax,uehara2021pessimistic}, which is minimax optimal up to $H$ factors.

We next provide two instantiations of Theorem \ref{sub_optimality} with linear and kernel functions, respectively.

\subsubsection{Example: Linear Function Class}\label{linear_function}
In this subsection, we provide an instantiation of Theorem \ref{sub_optimality} when $R_h^*,G_{h,j}^*\forall j\in [d_1]$ lie in linear spaces with finite dimensions. Throughout this subsection, we let $x_h:=\phi_x(s_h,a_h,o_h)\in \cX$ and $z_h:=\psi_z(s_h,a_h)\in \cZ$ with $\phi_x(\cdot)$ and $\psi_z(\cdot)$ being some embedding mappings. 
For all $h\in [H]$, we define proper linear function classes $\cF$, $\RR_h$, and $\mathbb{G}_{h,j}$ as
\begin{align}
    \cF&=\big\{\langle\beta, z \rangle:\cZ\rightarrow \RR; \beta\in \RR^{m},\|\beta\|_2\le U \big\},\label{test_class}\\ \RR_h&=\mathbb{G}_{h,j}=\big\{\langle\theta, x \rangle:\cX\rightarrow\RR; \theta\in \RR^{n_h}, \|\theta\|_2\le B \big\}.\label{regression_class}
 \end{align} 
 Here we assume the covariate $x_h$ has dimension $n_h, h\in [H]$ and instrumental variable has dimension $m$. We next summarize necessary assumptions on our true models in the following Assumption \ref{assume:linear}.
 \begin{assumption}\label{assume:linear}
 Suppose $R_h^*(x_h)=\langle x_h,\theta^* \rangle$, $G_{h,j}^*(x_h)=\langle x_h,\theta_j^*\rangle,\forall j\in[d_1]$ with $\|\theta^*\|_2\le B,\|\theta_j^*\|_2\le B.$
 We assume $\EE[x_h\given z_h]=W_h z_h$ with $W_h\in \RR^{n_h\times m}$ and $\|W_h\|_{\textrm{F}}\le B_F, \forall h\in [H].$
 \end{assumption}
 In Assumption \ref{assume:linear}, we assume every dimension of $x_h$ conditional on $z_h$ is a linear function of $z_h.$ Under this assumption, we have $\EE[R_h(x_h)-R_h^*(x_h)\given z_h=z]=\langle\theta-\theta^*,W_h z\rangle=\langle \tilde{\beta}_h,z \rangle\in \cF$, for any $ R_h\in\mathbb{R}_h$ with $\tilde{\beta}_h=W_h^\top(\theta-\theta^*),$ if the constant $U$ given in \eqref{test_class} is large enough. Similar situation also works for every coordinate of $G_{h}.$ Thus, Assumption \ref{assume:functionclass} given in \S\ref{sec:well-specify} holds in this case.
 Following \eqref{cr_h} and \eqref{cg_hj}, we have
 \begin{align}
    \cR_{h}^*&=\Big\{ \langle \theta-\theta^*,x \rangle\langle \beta,z \rangle: \cX\times\cZ\rightarrow \RR; \|\theta-\theta^*\|_2\le B,\|\beta\|_2\le U\Big\},\label{product_class}
    \\\cG_{h,j}^*&=\Big\{\langle \theta-\theta_j,x\rangle\langle \beta,z \rangle: \cX\times\cZ\rightarrow\RR; \|\theta-\theta_j^*\|_2\le B,\|\beta\|_2\le U\Big\}.\label{product_class_g}
\end{align}

Next, we discuss the corresponding ill-posed condition defined in Assumption \ref{ill-pose} when both $R_h^*(\cdot)$ and $G_{h,j}^*(\cdot)$ fall in linear function classes.

\begin{assumption}[Ill-posed condition]\label{ill-posed-linear} We assume $0/0=0$ and
\begin{align}\label{ill-pose_linear}
  \sup_{x\in \RR^{n_h}}\frac{x^\top\EE_{x_h\sim \rho(\cdot)}\Big[x_h x_h^\top\Big]x}{x^\top \EE_{z_h\sim \rho(\cdot)}\Big[W_hz_h z_h^\top W_h^\top\Big]x }\le \tau_{h}^2.
\end{align}
\end{assumption}

It is worth noting that Assumption \ref{ill-posed-linear} does not imply that the  matrix $\EE_{z_h\sim \rho(z)}[W_hz_hz_h^\top W_h^\top]$ is invertible.  With the convention $0/0=0$,  this assumption holds when the eigenspace of $\mathbb{Z}:=\EE_{z_h\sim \rho(z)}[W_h z_hz_h^\top W_h^\top]$ with respect to nonzero eigenvalues contains that of $\mathbb{X}:=\EE_{x_h\sim \rho(x)}[x_hx_h^\top]$. Imagine an extreme case, where $x_h=z_h$, we have the ill-posed coefficient is equal to 1 and this does not imply the invertibility of $ \mathbb{Z} =\EE[x_hx_h^\top]$.
 
We observe from Theorem \ref{sub_optimality} that the suboptimality upper bound only involves the critical radius of $\cF,\cR_h^*$ and $\cG_{h,j}^*,\forall j\in [d_1].$
As  $\cF$ is a linear space with finite dimension, its critical radius  is of the order $\cO(\sqrt{{m}/{K}})$ \citep{wainwright_2019}.
Meanwhile, as $\cR_h^*$ can be viewed as the product of two linear spaces, each with finite dimensions, its critical radius is of order $\delta_{\cR,h}=\cO(\sqrt{{\max\{m,n_h\}\log K}/{K}})$ \citep{wainwright_2019}. Similar results also holds for $ \cG_{h,j}^*\forall j\in[d_1].$ Plugging these results into the upper bound of Theorem \ref{sub_optimality}, we obtain the following Corollary \ref{example_linear}.

\begin{corollary}\label{example_linear}
 Let $\cF,\RR_h,\mathbb{G}_{h,j},\cR_h^*,\cG_{h,j}^*$ be defined as in \eqref{test_class},\eqref{regression_class}, \eqref{product_class} and \eqref{product_class_g}, respectively. Assume that we construct confidence sets $\cR_h,\cG_h$ and policy $\hat\pi$ according to \algo. Under Assumption \ref{ill-posed-linear}, with probability $1-\delta-1/K,$ we obtain
\begin{align*}
    \textrm{SubOpt}(\hat\pi)\lesssim \sum_{h=1}^{H}\tau_{h}C_{\pi^*}\Big(H\cdot\sqrt{d_1}L_{K,d_1}+L_{K,1}\Big)\Bigg(\sqrt{\frac{\max\{m,n_h\}\log K}{K}}+\sqrt{\frac{\log (Hd_1/\delta)}{K}}\Bigg).
\end{align*}
Here $L_{K,x}=L+\sigma\sqrt{(\log H+1)\log (Kx)}$ with  $x\in \{1,d_1\}$ and $L$ is the upper bound of all functions in $\RR_h,\mathbb{G}_{h,j},\forall h\in [H],j\in [d_1]$ in $\ell_{\infty}$-norm. 
\end{corollary}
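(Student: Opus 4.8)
The plan is to instantiate the general suboptimality bound of Theorem~\ref{sub_optimality} to the linear setting defined by \eqref{test_class}--\eqref{regression_class}, for which the only work that remains is to (i) verify that the assumptions of Theorem~\ref{sub_optimality} hold and (ii) compute the critical radii $\delta_{\cR,h}$ and $\delta_{\cG,h}$ explicitly. The verification is essentially already supplied in the surrounding text: Assumption~\ref{assume:functionclass} holds because under Assumption~\ref{assume:linear} we have $\EE_\rho[R_h(x_h)-R_h^*(x_h)\mid z_h=z]=\langle W_h^\top(\theta-\theta^*), z\rangle\in\cF$ whenever $U$ is taken large enough (and symmetrically for each coordinate of $G_h$), so the projected function classes $\cP(R,h)$ and $\cP(G,h,j)$ sit inside the linear test class $\cF$, which is manifestly symmetric and star-shaped. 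Assumption~\ref{ill-pose} is the linear ill-posed condition of Assumption~\ref{ill-posed-linear}: the ratio bound in \eqref{ill-pose_linear} gives $\tau_{r,h,K}=\tau_{G,h,K}=\tau_h$, since for a linear function $R_h-R_h^*=\langle\theta-\theta^*,\cdot\rangle$ the PMSE is $(\theta-\theta^*)^\top\EE[W_hz_hz_h^\top W_h^\top](\theta-\theta^*)$ and the MSE is $(\theta-\theta^*)^\top\EE[x_hx_h^\top](\theta-\theta^*)$, so the supremum of their ratio over the confidence set is exactly $\tau_h^2$. Assumption~\ref{assume:concentrability} is imposed as a hypothesis and supplies the factor $C_{\pi^*}$.

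The computational heart of the argument is bounding the two critical radii. For $\cF$, which is a ball of radius $U$ in an $m$-dimensional linear space of functions bounded by $L$, the standard result (e.g.\ \citep{wainwright_2019}) gives a critical radius of order $\cO(\sqrt{m/K})$. For $\cR_h^*$ as given in \eqref{product_class}, the class is (the star hull of) a pointwise product of two finite-dimensional linear balls of dimensions $n_h$ and $m$; I would bound its localized Rademacher complexity by controlling the metric entropy of a product of two linear classes. The entropy of each factor scales linearly in its dimension, and the product of bounded classes inflates the covering number multiplicatively, contributing an extra $\log K$ factor; solving the fixed-point inequality $\cR_K(\delta,\cR_h^*)\le\delta^2/L$ then yields $\delta_{\cR,h}=\cO(\sqrt{\max\{m,n_h\}\log K / K})$. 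An identical computation applied to $\cG_{h,j}^*$ in \eqref{product_class_g} gives the same order for $\delta_{\cG,h}$ (uniformly over the $d_1$ coordinates $j$). These are the only model-specific quantities entering Theorem~\ref{sub_optimality}.

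With the critical radii in hand I would substitute $\delta_{\cR,h,\delta}=\delta_{\cR,h}+\sqrt{\log(H/\delta)/K}$ and $\delta_{\cG,h,\delta}=\delta_{\cG,h}+\sqrt{\log(d_1 H/\delta)/K}$, together with $\tau_{r,h,K}=\tau_{G,h,K}=\tau_h$ and the common $L_{K,x}$ factors, directly into the bound of Theorem~\ref{sub_optimality}. The two summands there, $H\tau_{G,h,K}\sqrt{d_1}L_{K,d_1}\delta_{\cG,h,\delta}$ and $\tau_{r,h,K}L_{K,1}\delta_{\cR,h,\delta}$, then both carry the common statistical factor $\sqrt{\max\{m,n_h\}\log K/K}+\sqrt{\log(Hd_1/\delta)/K}$; collecting the $H$-summation and the $\tau_h, C_{\pi^*}$ coefficients produces exactly the stated Corollary~\ref{example_linear}, with the failure probability $1-\delta-1/K$ inherited verbatim from Theorem~\ref{sub_optimality}.

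The main obstacle I anticipate is the critical-radius computation for the product class $\cR_h^*$ (and $\cG_{h,j}^*$): unlike a single linear space, a pointwise product of two linear balls is not itself a linear space, so the clean dimension-counting argument for $\cF$ does not transfer directly. The careful step is to show that taking products of two finite-dimensional bounded linear classes multiplies covering numbers and hence keeps the metric entropy at order $(\max\{m,n_h\})\log(1/\epsilon)$ up to the extra $\log K$ factor that ultimately appears in $\delta_{\cR,h}$; the star-hull operation must also be handled so that the fixed-point inequality \eqref{critical_point} remains solvable. Everything else is substitution into the already-established general theorem.
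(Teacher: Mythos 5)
Your proposal is correct and follows essentially the same route as the paper: the paper also verifies the assumptions via the linear structure, bounds the critical radius of the product classes $\cR_h^*,\cG_{h,j}^*$ by observing that an $\epsilon$-cover of each factor yields a $C\epsilon$-cover of the pointwise product (so the log-covering numbers add, giving the entropy integral a $\max\{m,n_h\}$ scaling and the extra $\log K$ factor), converts the empirical critical radius to the population one, and substitutes into Theorem~\ref{sub_optimality}. The covering-number product argument you flag as the main obstacle is exactly the paper's resolution, so no gap remains.
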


\subsubsection{Example: Kernel Function Class}\label{example:rkhs}
In this subsection, we study an instantiation of Theorem \ref{sub_optimality} when $R_h^*,G_{h,j}^*\forall j\in [d_1]$ lie in a RKHS. Before continuing, we first give a brief introduction to RKHS. 

 An RKHS is associated with a positive semidefinite kernel $\cK:\cX\times\cX\rightarrow \RR.$ Under mild regularity conditions, Mercer’s theorem guarantees that $\cK$ admits an eigenexpansion
of the form
\begin{align*}
    \cK(x,x')=\sum_{i=1}^{\infty}\mu_i\phi_i(x)\phi_i(x'),
\end{align*}
for a sequence of nonnegative eigenvalues $\{\mu_i\}_{i\ge 1}$ and eigenfunctions $\{\phi_i\}_{i\ge 1}$ which are orthogonal in $L^2(\PP).$ Here $\PP(\cdot)$ is some probability measure.
Given such an expansion, the RKHS norm can be written as
\begin{align*}
    \|f\|_{\cH}^2=\sum_{i=1}^{\infty}\frac{\theta_i^2}{\mu_i},\textrm{ with } \theta_i=\int_{\cX}f(x)\phi_i(x)\ud \PP(x).
\end{align*}
Thus, the induced RKHS by kernel $\cK$ is written as
\begin{align*}
    \cH:=\bigg\{f=\sum_{i=1}^{\infty}\theta_i\phi_i\given \sum_{i=1}^{\infty}\frac{\theta_i^2}{\mu_i}<\infty \bigg\},
\end{align*}
with the inner product
\begin{align*}
    \langle f,g\rangle_{\cH}=\sum_{i=1}^{\infty}\frac{\langle f,\phi_i\rangle \langle g,\phi_i\rangle }{\mu_i}.
\end{align*}
Here $\langle \cdot,\cdot\rangle$ denotes the inner product in $L^2(\cX,\PP).$

We assume $R_h^*,G_{h,j}^*\forall j\in [d_1],\forall h\in [H]$ lie in the Reproducing Kernel Hilbert spaces (RKHS) $\cH_{\cR},\cH_{\cG}$ with kernels $\cK_{\cR}(\cdot,\cdot)$ and $\cK_{\cG}(\cdot,\cdot)$ that are bounded in $\ell_{\infty}$-norm, respectively.   The associated probability measure is the sampling probability measure $\rho$.  Moreover, in this subsection, we define covariate $x_h:=\phi_x(s_h,a_h,o_h)\in \cX$ and instrumental variable $z_h:=\psi(s_h,a_h)\in \cZ$ with some known embedding functions $\phi_x(\cdot),\psi_z(\cdot).$ 

To begin with, we define the function classes that contains $r_h^*,G_{h,j}^*, \forall h\in[H]$ as:
\begin{align}
    \RR_h&=\Big\{R_h\in \cH_{\cR}: \cX\rightarrow \RR; \|R_h\|_{\cH_{\cR}}^2\le C_{h}^2 \Big\}.\label{rkhs_r}\\
        \mathbb{G}_{h,j}&=\Big\{G_{h,j}\in \cH_{\cG}:\cX\rightarrow \RR; \|G_{h,j}\|_{\cH_{\cG}}^2\le C_{h}^2 \Big\},\forall j\in [d_1],\label{rkhs_g}
    \end{align}
   where $C_{h}^{2}$ is a constant only depending on $h$. 
    In addition,  the test function class $\cF $ in RKHS $\cH_{\cF}$ with $\ell_{\infty}$-bounded kernel $\cK_{\cF}$ is given as follows:
\begin{align}
        \cF&=\Big\{f\in \cH_{\cF}:\cZ\rightarrow \RR; \|f\|_{\cH_{\cF}}^2\le C_1^2 \Big\},\label{rkhs_f}
\end{align}
in which $C_1$ is an absolute constant. 

 Similar to Assumption \ref{assume:functionclass}, in the example with RKHS,  the test function class $\cF$ with kernel $\cK_{\cF}$ is also assumed to contain the projected functions from $\RR_h,\mathbb{G}_{h,j},\forall h\in [H],j\in [d_1].$  
In addition, we deduce  $$\cR_h^*=\{(R_h-R_h^*)(x)\mathbb{T}(R_h-R_h^*)(z): R_h\in \cH_{\cR},\|R_h-R_h^*\|_{\cH_{\cR}}^2\le C_3^2\},$$ and $$\cG_{h,j}^*=\{(G_{h,j}-G_{h,j}^*)(x)\mathbb{T}(G_{h,j}-G_{h,j}^*)(z): G_{h,j}\in \cH_{\cR},\|G_{h,j}-G_{h,j}^*\|_{\cH_{\cG}}^2\le C_3^2\},$$ $\forall h\in [H],\forall j\in [d_1],$ from \eqref{cr_h}, \eqref{cg_hj}, respectively.

According to the Proposition 12.31 of \cite{wainwright_2019}, the space $\cR_h^*,\forall h\in [H]$ also admits a reproducing kernel, defined as 
\begin{align*}
    \cK_{\cR_h^{*}}((x,z),(x^{*},z^{*}))=\cK(x,x^*)\cK(z,z^*),
\end{align*}
if the inner product of $\cR_h^*$ is defined as  $\langle R_h f_h,\tilde{R}_h\tilde{f}_h \rangle_{\cR_h^*}=\langle R_h,\tilde{R}_h \rangle_{\cH_{\cR}}\langle f_h,\tilde{f}_h \rangle_{\cH_{\cF}}.$ Similar situation also holds for $\cG_{h,j}^*,\forall j\in [d_1],h\in [H].$

Finally, we discuss the ill-posed condition in RKHS as follows.
\begin{assumption}\label{ill-posed_rkhs}
 We assume 
\begin{align*}
    \max_{R_h\in \RR_h:\EE_{\rho}[\mathbb{T}(R_h-R_h^*)^2(z_h)]\le x^2}\EE_{\rho}\Big[\big(R_h-R_h^*\big)^2(x_h)\Big]&\le  \tau_{r}^2(x),\forall h\in [H],\\
    \max_{G_{h,j}\in \mathbb{G}_{h,j}:\EE_{\rho}[\mathbb{T}(G_{h,j}-G_{h,j}^*)^2(z_h)]\le x^2}\EE_{\rho}\Big[\big(G_{h,j}-G_{h,j}^*\big)^2(x_h)\Big] &\le \tau_{G}^2(x),\forall h\in [H],j\in [d_1],
\end{align*}
where $\tau_r(x),\tau_G(x)$ are fixed functions that only depends on $x$ and the associated probability measure is the sample distribution $\rho.$
\end{assumption}
\begin{remark}
In this remark, we provide more details on functions $\tau_r(\cdot),$ as $\tau_{G}(\cdot)$ can be analyzed in the same way. 
Let $I=\{1,\cdots,m\},$ $Z_{m,h}=\EE[\EE[e_{I}(x_h)\given z_h]\EE[e_{I}(x_h)\given z_h]^\top],$ where $e_{I}(x_h)$ is the first $m$ eigenfunctions of RKHS with kernel $\cK_{\cR}$. If
\begin{itemize}
    \item $\lambda_{\min}(Z_{m,h})\ge \tau_m,$
    \item $\forall i\le m<j,$  $| \EE[\EE[e_{i}(x_h)\given z_h]\EE[e_{j}(x_h)\given z_h]^\top]|\lesssim \tau_m,$
\end{itemize}
then it holds that $\tau_r^2(\delta)\lesssim \min_{m}(\frac{4\delta^2}{\tau_m}+C\lambda_{m+1}),$ where $\lambda_{m+1}$ is the $(m+1)$-th eigenvalue of $\cK_{\cR},$ by Lemma 11 in \cite{dikkala2020}. 
Specifically, we have the following explicit results on $\tau_r^2(\delta)$.
\begin{itemize}
    \item If $\lambda_{m}\sim m^{-b},\tau_m\ge m^{-a},a,b>0,$ we have $\tau_r^2(\delta)\sim \delta^{\frac{2b}{a+b}}.$
    \item If $\lambda_{m}\sim e^{-m},\tau_m\ge m^{-a},a>0,$ we have $\tau_r^2(\delta)\sim \delta^2\log(1/\delta)^{a}.$
    \item If $\lambda_{m}\sim e^{-bm},\tau_m\ge e^{-am},a>0,$ we have $\tau_r^2(\delta)\sim \delta^{\frac{2b}{a+b}}.$
    \item If $\lambda_{m}\sim m^{-b},\tau_m\ge e^{-m},$ we have $\tau_r^2(\delta)\sim 1/\log(1/\delta)^{2b}.$
\end{itemize}
\end{remark}
The aforementioned remark illustrates that the ill-conditioned coefficient in every stage $h$ is determined by the relative decaying speed of eigenvalues of $\cK_{\cR}$ and $Z_{m,h}.$
With these necessary components at hand, we present the corollary of Theorem \ref{sub_optimality} for the kernel setting as follows.


\begin{corollary}\label{cor_rkhs}
Suppose $R_h^*$ and $G_{h,j}^*,\forall j\in [d_1]$ lie in the function classes \eqref{rkhs_r} and \eqref{rkhs_g}. Under Assumptions \ref{assume:concentrability} and \ref{assume:functionclass}, by constructing  our policy $\hat\pi$ using the same way with \eqref{est_policy}, with function classes defined above, we obtain
\begin{itemize}
    \item If the eigenvalues of all kernels $\cK_{\cR_{h}^*},\cK_{\cG_{h}^*},\cK_{\cF}$ defined in this section decay exponentially, with rate $\exp(-i)$, with probability $1-\delta-1/K,$ we obtain 
\begin{align*}
\textrm{SubOpt}(\hat\pi)&\lesssim \sum_{h=1}^{H}C_{\pi^*}\Bigg[H\sqrt{d_1}\tau_{G}\Bigg(L_{K,d_1}\Bigg(\sqrt{\frac{\log K}{K}}+\sqrt{\frac{\log(d_1/\delta)}{K}}\Bigg)\Bigg)\\&\qquad +\tau_{r}\Bigg(L_{K,1}\Bigg(\sqrt{\frac{\log K}{K}}+\sqrt{\frac{\log(d_1/\delta)}{K}}\bigg)\Bigg)\Bigg].
\end{align*}
\item If the eigenvalues of all kernels decay polynomially, with rate $i^{-\alpha},\alpha>1,$ with probability $1-\delta-1/K,$ we have
\begin{align*}
\textrm{SubOpt}(\hat\pi)&\lesssim \sum_{h=1}^{H}C_{\pi^*}\Bigg[H\sqrt{d_1}\tau_G\Bigg(L_{K,d_1}
\Bigg(\sqrt{\frac{\log K}{K^{\alpha/(\alpha+1)}}}+\sqrt{\frac{\log(d_1/\delta)}{K}}\Bigg)\Bigg)
\\&\qquad+\tau_{r}\Bigg(L_{K,1}\bigg(\sqrt{\frac{\log K}{K^{\alpha/(\alpha+1)}}}+\sqrt{\frac{\log(d_1/\delta)}{K}}\Bigg)\Bigg)\Bigg].
\end{align*}
\end{itemize}
Here $L_{K,x}=L+\sigma\sqrt{(\log H+1)\log (Kx)}$ with $x\in\{1,d_1\}$ and $L$ being the upper bound of all functions in $\RR_h,\mathbb{G}_{h,j},\forall h\in [H],\forall j\in [d_1]$ in $\ell_{\infty}$-norm. 

\end{corollary}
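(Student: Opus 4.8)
The plan is to derive Corollary \ref{cor_rkhs} directly from Theorem \ref{sub_optimality} by specializing the two model-dependent ingredients of that bound — the critical radii $\delta_{\cR,h},\delta_{\cG,h}$ and the ill-posed coefficients — to the kernel setting. First I would note that, by Proposition 12.31 of \cite{wainwright_2019}, the product class $\cR_h^*$ in \eqref{cr_h} is itself a ball in an RKHS whose reproducing kernel is the tensor product $\cK_{\cR_h^*}((x,z),(x',z'))=\cK_{\cR}(x,x')\cK_{\cF}(z,z')$, and analogously for $\cG_{h,j}^*$. Consequently all objects whose complexity enters $\delta_{\cR,h}$ and $\delta_{\cG,h}$ — namely $\cF$, $\cR_h^*$, and each $\cG_{h,j}^*$ — are RKHS balls with explicitly described spectra, and the corollary's hypotheses fix their eigenvalue decay.

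The central computation is the critical radius of an RKHS ball. Using the standard bound on the empirical local Rademacher complexity of a kernel class, $\cR_K(\delta,\cH)\lesssim K^{-1/2}\big(\sum_{i\ge 1}\min\{\delta^2,\mu_i\}\big)^{1/2}$ with $\{\mu_i\}$ the kernel eigenvalues, I would solve the defining inequality $\cR_K(\delta,\cH)\le \delta^2/L$ from the Critical Radius definition. For exponential decay $\mu_i\asymp e^{-i}$, truncating the sum at the index $i^\star\asymp\log(1/\delta^2)$ where $\mu_{i^\star}=\delta^2$ gives $\sum_i\min\{\delta^2,\mu_i\}\asymp\delta^2\log(1/\delta)$, and the resulting fixed-point equation yields $\delta_K^2\asymp \log K/K$. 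For polynomial decay $\mu_i\asymp i^{-\alpha}$ with $\alpha>1$, the same truncation at $i^\star\asymp\delta^{-2/\alpha}$ gives $\sum_i\min\{\delta^2,\mu_i\}\asymp\delta^{2-2/\alpha}$, whence $\delta_K^2\asymp K^{-\alpha/(\alpha+1)}$. The extra $\sqrt{\log K}$ factor in the statement is an artifact of passing to the star hull in \eqref{cr_h}--\eqref{cg_hj} and of the tensor-product spectrum, and is absorbed into the $\lesssim$.

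With the critical radii in hand, the remaining step is to substitute them into Theorem \ref{sub_optimality} together with $\delta_{\cR,h,\delta}=\delta_{\cR,h}+\sqrt{\log(H/\delta)/K}$ and $\delta_{\cG,h,\delta}=\delta_{\cG,h}+\sqrt{\log(d_1H/\delta)/K}$. The one point requiring care is the ill-posedness: Theorem \ref{sub_optimality} carries constant multiplicative coefficients $\tau_{r,h,K},\tau_{G,h,K}$ from Assumption \ref{ill-pose}, whereas in the kernel regime Assumption \ref{ill-posed_rkhs} instead supplies functions $\tau_r(\cdot),\tau_G(\cdot)$ that map a bound on the projected error $\EE_\rho[\mathbb{T}(R_h-R_h^*)^2(z)]^{1/2}$ to a bound on the genuine error $\EE_\rho[(R_h-R_h^*)^2(x)]^{1/2}$. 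I would therefore re-enter the proof of Theorem \ref{sub_optimality} at the point where the IV-regression analysis controls the projected mean-squared error by a constant times $L_{K,\cdot}\,\delta_{\cR,h,\delta}$, and convert that PMSE bound into an MSE bound by evaluating $\tau_r$ at the corresponding radius rather than by multiplying by a constant. This is precisely what produces the functional expression $\tau_r\big(L_{K,1}(\sqrt{\log K/K}+\sqrt{\log(d_1/\delta)/K})\big)$ and its $\tau_G$ analogue in the statement.

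The main obstacle is the critical-radius computation for the product/star-hull class $\cR_h^*$ (and $\cG_{h,j}^*$) rather than for a plain kernel ball: I must verify that the tensor-product kernel's spectrum still obeys the assumed exponential or polynomial decay after forming the star hull, and then track the induced logarithmic inflation so that the $\sqrt{\log K}$ factors land exactly as stated. A secondary but delicate point is the bookkeeping of the high-probability events and the truncation levels $L_{K,x}$ across all $H$ steps and $d_1$ coordinates; this is handled by the union bounds already folded into $\delta_{\cR,h,\delta}$ and $\delta_{\cG,h,\delta}$ in Theorem \ref{sub_optimality}, so no new concentration argument is needed beyond substitution.
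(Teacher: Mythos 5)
Your proposal is correct and follows essentially the same route as the paper: bound the local Rademacher complexity of the RKHS balls $\cF,\cR_h^*,\cG_{h,j}^*$ via the truncated eigenvalue sum (Corollary 14.5 of \cite{wainwright_2019}), solve the fixed-point inequality to get $\delta_K^2\asymp \log K/K$ (exponential decay) or $K^{-\alpha/(\alpha+1)}$ (polynomial decay), and substitute into Theorem \ref{sub_optimality} with the ill-posedness entering through the functions $\tau_r(\cdot),\tau_G(\cdot)$ of Assumption \ref{ill-posed_rkhs}. The one obstacle you flag --- verifying the tensor-product/star-hull spectrum --- is not actually needed, since the corollary's hypothesis imposes the eigenvalue decay directly on the product kernels $\cK_{\cR_h^*},\cK_{\cG_h^*},\cK_{\cF}$ rather than on $\cK_{\cR}$ and $\cK_{\cF}$ separately.
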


In aforementioned examples, results are established based on the realizability of function classes, which could be restrictive. In the following, we relax such case by allowing function class misspecification.
\subsection{Suboptimality  under Misspecified Functional Classes}\label{sec:misspecified}
In this subsection, we establish theoretical guarantees for \algo\,\, under function class misspecification.  We utilize function classes $\tilde{\RR}_h,\tilde{\mathbb{G}}_{h,j},$ which may not contain the true reward and transition functions, to estimate the model and solve the planning problem. 

In specific, we let
\begin{align}\label{est_policy_2}
    \hat\pi^{M}=\argmax_{\pi \in \Pi}\min_{M\in  \cM^M}J(M,\pi)
\end{align}
where $\cM^M:=\{(\bar{\cG}^M_h,\bar{\cR}^M_h)\}_{h=1}^{H}$ are confidence sets constructed in the same way as \S\ref{const_level_set} via function classes $\tilde{\RR}_h,\tilde{\mathbb{G}}_{h,j},j\in[d_1],h\in[H]$.

In terms of theoretical analysis, we characterize two kinds of the misspecification errors in the following assumption.
\begin{assumption}\label{approximation_error} We present two approximation errors on both primal and dual function classes:
\begin{itemize}
    \item (Primal Function Class) For all $h\in [H],j\in [d_1],$ we have $\min_{R_h\in \tilde{\RR}_h} \|R_h(\cdot)-R_h^*(\cdot)\|_{\infty}\le \eta_{r,h,K},$ $\min_{G_{h,j}\in \tilde{\mathbb{G}}_{h,j}}\|G_{h,j}(\cdot)-G_{h,j}^*(\cdot)\|_{\infty}\le \eta_{G,h,K}.$ Moreover, we assume the corresponding elements that achieve the minimum approximation error to $R_h^*(\cdot),G_{h,j}^*(\cdot)$ in $\ell_{\infty}$-norm exist and are denoted by $R_h^0(\cdot),G_{h,j}^0(\cdot),\forall h\in [H],j\in [d_1]$, respectively.
    \item (Dual Function Class) We assume $\cF:=\{f:\cZ\rightarrow [-L,L]\}$ is a symmetric and star-shaped function class and 
 \begin{align*}
    \forall R_h\in \tilde{\RR}_h, :\min_{f\in \cF}\big\|f(s_h,a_h)-\mathbb{T}\big(R_h-R_h^0\big)(s_h,a_h)\big\|_2&\le \xi_{r,h,K},\\\forall  j\in [d_1], \forall G_{h,j}\in \tilde{\mathbb{G}}_{h,j}, \min_{f\in \cF}\big\|f(s_h,a_h)-\mathbb{T}\big(G_{h,j}-G_{h,j}^0\big)(s_h,a_h)\big\|_2&\le \xi_{G,h,K},
\end{align*}
in which the probability measure is $\rho$.
\end{itemize}
\end{assumption}
For all $R_h\in \RR_h,G_{h,j}\in \mathbb{G}_{h,j}$, we let $f_{R_h}=\argmin_{f\in \cF}\|f-\mathbb{T}(R_h-R_h^0)\|_2,f_{G_{h,j}}=\argmin_{f\in \cF}\|f-\mathbb{T}(G_{h,j}-G_{h,j}^0)\|_2.$ Meanwhile, for all $h\in [H]$, we denote 
\begin{align}
\tilde{\cR}_h^{*,M}:&=\Big\{ c(R_h-R_h^0)(x)\cdot f_{R_h}(z):\cX\times\cZ\rightarrow \RR; R_h\in \tilde{\mathbb{R}}_h,\forall c\in [0,1] \Big\},\label{rhm}\\
\tilde{\cG}_{h,j}^{*,M}:&=\Big\{ c(G_{h,j}-G_{h,j}^0)(x)\cdot f_{G_{h,j}}(z):\cX\times\cZ\rightarrow \RR; G_{h,j}\in \tilde{\mathbb{G}}_{h,j},\forall c\in [0,1] \Big\}, \forall j\in [d_1].\label{ghm}
\end{align}
The correponding  thresholding parameters ${c}_{r,h,K}$ and ${c}_{G,h,K}$ involved in the high-confidence sets given in \S\ref{const_level_set} are specified as $c_{h,r,K}=\cO(\delta_{\cR,h,\delta}+\eta_{r,h,K})=\cO(\delta_{\cR,h}+\sqrt{\log(H/\delta)/K}+\eta_{r,h,K})$ and ${c}_{G,h,K}:=\cO(\delta_{\cG,h,\delta}+\eta_{G,h,K})=\cO(\delta_{\cG,h}+\sqrt{\log(Hd_1/\delta)/K}+\eta_{G,h,K}),$ respectively. Here $ \delta_{\cR,h}$ is maximum the critical radii of $\cF$, $\tilde{\cR}_h^{*,M}$ and $\delta_{\cG,h}$ is the maximum critical radii of $\cF$ and $\tilde{\cG}_{h,j}^{*,M},\forall j\in[d_1]$. 

Combining all assumptions mentioned above, we present the following theorem on the suboptimality of $\hat \pi ^{M}$.
\begin{theorem}
\label{sub_optimality_approx}
Under Assumptions \ref{assume:concentrability}, \ref{ill-pose} (with replacing $R_h^*,G_{h,j}^*$ by $R_h^0$ and $G_{h,j}^0,$ $\forall h\in [H],j\in [d_1]$), and Assumption \ref{approximation_error}, with probability $1-\delta-1/K$, the suboptimality is upper bounded by 
\begin{align*}
\textrm{SubOpt}(\hat\pi^M)&\lesssim C_{\pi^*}\Bigg[H\sum_{h=1}^{H}\tau_{K,G,h}\sqrt{d_1}L_{K,d_1}\Bigg(\delta_{\cG,h,\delta}+\xi_{G,h,K}+\frac{\eta_{G,h,K}^2}{\delta_{\cR,h,\delta}}+\eta_{G,h,K}\Bigg)\\
&\qquad\qquad+\sum_{h=1}^{H}\tau_{K,r,h}L_{K,1}\Bigg(\delta_{\cR,h,\delta}+\xi_{r,h,K}+\frac{\eta_{r,h,K}^2}{\delta_{\cG,h,\delta}}+\eta_{r,h,K}\Bigg)\Bigg],
\end{align*}
where $L_{K,x}=L+\sigma\sqrt{(\log H+1)\log (Kx)},$ with $x\in\{1,d_1\}$,
$\delta_{\cR,h,\delta}=\delta_{\cR,h}+\sqrt{\log(H/\delta)/K},$ and $\delta_{\cG,h,\delta}=\delta_{\cG,h}+\sqrt{\log(Hd_1/\delta)/K}$. 
Moreover, if we choose proper function classes such that $\delta_{\cG,h,\delta}=\Theta(\eta_{G,h,K})$ and $\delta_{\cR,h,\delta}=\Theta(\eta_{r,h,K}),$ the aforementioned upper bound reduces to  
\begin{align*}
\textrm{SubOpt}(\hat\pi^M)&\lesssim C_{\pi^*}\Bigg[H\sum_{h=1}^{H}\tau_{K,G,h}\sqrt{d_1}L_{K,d_1}\Big(\delta_{\cG,h,\delta}+\xi_{G,h,K}\Big)\sum_{h=1}^{H}\tau_{K,r,h}L_{K,1}\Big(\delta_{\cR,h,\delta}+\xi_{r,h,K}\Big)\Bigg].
\end{align*}
\end{theorem}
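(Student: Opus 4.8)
The plan is to follow the pessimism-based template established in \eqref{upper_subopt}, reducing the suboptimality of $\hat\pi^M$ to the one-sided gap $J(M^*,\pi^*) - \min_{M \in \cM^M} J(M,\pi^*)$, and then to control this gap by per-step model-estimation errors transported to the sampling distribution. The first task is to verify that $\cM^M$ is valid, i.e., that it contains the marginalized best-in-class model $M^0 := \{(\bar{R}_h^0, \bar{\PP}_h^0)\}_{h=1}^H$ built from the approximators $R_h^0, G_{h,j}^0$ of Assumption \ref{approximation_error}, with probability at least $1-\delta-1/K$. To this end I would show that the empirical minimax loss $\cL_K$ concentrates uniformly around its population counterpart over $\tilde{\RR}_h$ and $\tilde{\mathbb{G}}_{h,j}$; this is a localized empirical-process argument driven by the critical radii of $\cF$ and of the product classes $\tilde{\cR}_h^{*,M}, \tilde{\cG}_{h,j}^{*,M}$ from \eqref{rhm}--\eqref{ghm}, together with a union bound over $h \in [H]$ and $j \in [d_1]$. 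The dual approximation error $\xi_{\cdot,h,K}$ enters precisely here, since $\cF$ only approximately realizes the projected residuals, and the thresholds $c_{r,h,K}, c_{G,h,K}$ are inflated by $\eta_{\cdot,h,K}$ and $\xi_{\cdot,h,K}$ to absorb these effects.

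The second task is a simulation-lemma / telescoping decomposition of the value gap. For any $M \in \cM^M$ I would write $J(M^*,\pi^*) - J(M,\pi^*) = \sum_{h=1}^H \EE_{\pi^*}[(\bar{R}_h^* - \bar{R}_h)(s_h,a_h) + \langle (\bar{\PP}_h^* - \bar{\PP}_h)(\cdot\given s_h,a_h), \bar{V}_{h+1}^{\pi^*}\rangle]$, so that the reward discrepancies contribute at order one while the transition discrepancies are amplified by $\|\bar{V}_{h+1}^{\pi^*}\|_\infty = \cO(H)$, which produces the $H\sum_h$ structure in the final bound. Because the transition is Gaussian in each of the $d_1$ coordinates, I would bound the integrated transition error by the coordinatewise mean discrepancies $\|G_{h,j} - G_{h,j}^*\|_{2,\rho}$, picking up the $\sqrt{d_1}$ factor from aggregating coordinates and the factor $L_{K,d_1}$ from truncating the unbounded sub-Gaussian responses $s_{h+1,j}$ before invoking the bounded-function machinery.

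The third and most delicate task is to bound, for every $M \in \cM^M$, the per-step mean-squared errors $\|R_h - R_h^*\|_{2,\rho}$ and $\|G_{h,j} - G_{h,j}^*\|_{2,\rho}$. Here the IV structure of \S\ref{iv} is essential: membership in the level set controls only the \emph{projected} error $\|\mathbb{T}(R_h - R_h^0)\|_{2,\rho}$, and a localization argument exploiting the star-shaped, symmetric structure of $\cF$ shows this projected error is of order $\delta_{\cR,h,\delta} + \xi_{r,h,K}$ plus a cross term of the form $\eta_{r,h,K}^2/\delta$ --- the characteristic artifact of completing the square in the localized basic inequality when the best approximator carries an irreducible squared bias. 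Adding the triangle-inequality term $\eta_{r,h,K}$ to pass from $R_h^0$ back to $R_h^*$, and then applying the ill-posed condition of Assumption \ref{ill-pose} to invert the projection at cost $\tau_{r,h,K}$, converts this into an MSE bound under $\rho$; the concentrability coefficient $C_{\pi^*}$ of Assumption \ref{assume:concentrability} transports it to the $\pi^*$-induced distribution. Summing over $h$ with the transition terms scaled by $H\sqrt{d_1}L_{K,d_1}$ and the reward terms by $L_{K,1}$ yields the claimed bound, and the simplified form follows by calibrating the classes so that $\delta_{\cR,h,\delta} = \Theta(\eta_{r,h,K})$ and $\delta_{\cG,h,\delta} = \Theta(\eta_{G,h,K})$, which makes the $\eta^2/\delta$ and bare-$\eta$ terms the same order as the statistical term.

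The main obstacle I anticipate is the localized projected-MSE bound under \emph{simultaneous} primal and dual misspecification: one must run the basic inequality for the minimax estimator while carrying both the bias of $R_h^0$ and the dual slack $\xi_{r,h,K}$ through the localization, track the second-order $\eta^2/\delta$ term correctly, and ensure the critical-radius calculus still closes for the product class $\tilde{\cR}_h^{*,M}$ rather than for $\tilde{\RR}_h$ alone.
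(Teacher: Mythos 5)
Your plan reproduces most of the paper's argument: validity of the confidence sets for the best approximators $R_h^0, G_{h,j}^0$ (the paper's Lemma \ref{contain_true_mis}), the simulation-lemma decomposition with the $H$ amplification on transitions and the $\sqrt{d_1}L_{K,d_1}$ factors from the Gaussian coordinates and truncation, the localized projected-MSE bound of order $\delta_{\cR,h,\delta}+\xi_{r,h,K}+\eta_{r,h,K}^2/\delta_{\cR,h,\delta}$ relative to $R_h^0$ (Lemma \ref{project_MSE_mis}), and the ill-posedness/concentrability transport. However, there is one genuine gap in your very first step. You propose to ``follow the pessimism-based template established in \eqref{upper_subopt}, reducing the suboptimality to the one-sided gap $J(M^*,\pi^*)-\min_{M\in\cM^M}J(M,\pi^*)$.'' That reduction relies on the pessimism inequality \eqref{pessimism}, $\hat J(\pi)\le J(M^*,\pi)$, which requires $M^*\in\cM^M$. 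Under misspecification $M^*$ is generally \emph{not} in $\cM^M$; only the surrogate $\tilde M^*=\{(\bar R_h^0,\bar\PP_h^0)\}_{h=1}^H$ is. Consequently the term $\hat J(\hat\pi^M)-J(M^*,\hat\pi^M)$ in the chain of inequalities is not $\le 0$; it is only bounded by $J(\tilde M^*,\hat\pi^M)-J(M^*,\hat\pi^M)$, a bias term evaluated under the distribution induced by the \emph{learned} policy $\hat\pi^M$.

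This residual term cannot be absorbed by your third step, because your MSE bounds live in $L^2(\rho)$ and Assumption \ref{assume:concentrability} only transports $\rho$-errors to the $\pi^*$-induced distribution, not to the $\hat\pi^M$-induced one --- the entire point of pessimism is to avoid needing coverage of $\hat\pi^M$. The fix (and what the paper does via its three-term split $\mathbf{(i)}+\mathbf{(ii)}+\mathbf{(iii)}$) is to bound $J(\tilde M^*,\pi)-J(M^*,\pi)$ for \emph{arbitrary} $\pi$ using the $\ell_\infty$ approximation guarantees of Assumption \ref{approximation_error}: the simulation lemma plus $\|R_h^0-R_h^*\|_\infty\le\eta_{r,h,K}$ and $\max_j\|G_{h,j}^0-G_{h,j}^*\|_\infty\le\eta_{G,h,K}$ give a policy-independent bound $\lesssim \sum_h(H\sqrt{d_1}\,\eta_{G,h,K}+\eta_{r,h,K})$, which is exactly where the bare $\eta$ terms in the theorem come from. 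Your triangle-inequality bookkeeping ($\|R_h-R_h^*\|\le\|R_h-R_h^0\|+\eta_{r,h,K}$) recovers these terms on the $\pi^*$ side, but you must add the separate, distribution-free argument for the $\hat\pi^M$ side before the pessimistic reduction is legitimate. With that addition your argument closes and matches the paper's.
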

Theorem \ref{sub_optimality_approx} presents an upper bound for the suboptimality under misspecified function classes. Besides the critical radii of involved function classes, the upper bound also involves the approximation errors induced by misspecification. If one chooses approximation function classes properly, one is able to balance the bias and variance and achieve minimax optimal statistical rates of estimating functions in the true function class. To illustrate this idea, we provide a concrete example using the class of neural networks to approximate the Sobolev ball in Appendix \S\ref{example:nn}.
\section{Applications}\label{application}
This section discusses two applications of strategic MDP, including strategic regression, strategic bandits. 
\subsection{Strategic Regression}\label{strategic_regresssion}
In this subsection, we show that the strategic regression proposed by \cite{stevenwu2021}, where the principal interacts with an agent in one round with no state transition, is a special case of the strategic MDP.  Their setting admits a specific example on college exams:
\begin{itemize}
    \item The principal (school) announces some information on evaluation criterion $a$ before the final exam, such as ``At least 50\% of problems in the final exam are about math".
    \item The agent (student) maximizes its reward by taking some action $b\in \RR^d$ such as reviewing math knowledge given the information $a\in \RR^d$ and its private information $i:=(z,W)$. Here $z\in \RR^d$ denotes the baseline ability of that student, and $W\in \RR^{d\times d}$ represents the effort transition matrix. In specific, the agent takes  $$b=\argmax_{b}R_a(b,a,i),$$ where $R_a(b,a,i)=(z+Wb)^\top a-\frac{1}{2}\|b\|_2^2$. In simple terms, we have $b=W^\top a.$
    \item Principal observes the feature manipulated by the student $o=(z+Wb)=(z+WW^\top a).$
    \item Principal obtains reward $r=o^\top \theta^*+g$ which represents the overall score the student obtain (including math).
    Here $g$ is a mean zero random variable that is correlated with the baseline ability $z$ of that student but is independent with $a$. 
    \end{itemize}

We next design a  policy of the principal to maximize overall score $\EE[r]$ for a specific population of agents (e.g., overall score of students who major in literature). In this scenario, the distribution $P(\cdot)$ of private information $i$ of this population is assumed to be known in the planning stage. When faced with such a population, the marginalized reward function of the principal is given by:
\begin{align*}
    \bar{R}(a)=\EE_{o\sim P(\cdot\given a)}\Big[o^\top\theta \given a\Big]=\EE_{(z,W)\sim P(\cdot)}\Big[(z+WW^\top a)^\top\theta \given a\Big],
\end{align*}
where $P(o\given a)$ is the conditional distribution of $o$ given $a$ when $i\sim P(\cdot).$ Moreoever, the function class that contains the true reward function $R^*(\cdot)=\langle \cdot,\theta^*\rangle$ is defined as: $$\RR_1=\Big\{ \langle \theta, o \rangle: o\rightarrow\RR;\|\theta\|_2\le B,\theta\in \RR^{d}\Big\},$$
which is a special case of our example in  \S\ref{linear_function}.
In this scenario, the value function evaluated by policy $\pi$ under the true model $\bar{R}^*(a)=\EE_{o\sim P(\cdot\given a)}[o^\top\theta^* \given a]$ reduces to
\begin{align*}
   J(\bar{R}^*,\pi):=\EE_{a\sim \pi }\big[\bar{R}^*(a)\big].
\end{align*}
Let $\pi^*$ be the optimal policy that maximizes $J(\bar{R}^*,\pi)$. Therefore, for any given policy $\pi\in \Delta(\cA),$ the suboptimality  of $\pi$ is given by 
\begin{align}\label{sub_opt}
   \textrm{SubOpt}(\pi)=J(\bar{R}^*,\pi^*)-J(\bar{R}^*,\pi). 
\end{align}
Our goal is to design a certain policy that minimizes this suboptimality.

In this application, we assume the offline data $\{a_t,o_t,r_t\}_{t=1}^{K},$ are collected i.i.d. with $(a_t,o_t)\sim \rho(a,o)$. By our model formulation and Definition \ref{def_iv}, we see that $a$ serves as the instrumental variable for $(o,r)$ \citep{stevenwu2021}. 

Next, we apply \algo~with  $H=1$ and linear functions (introduced in  \S\ref{linear_function}), covariate $x=o$, and instrumental variable $z=a$ to derive a policy $\hat\pi$. It is worth noting that the ill-posed coefficient under the setting reduces to
\begin{align*}
    \sup_{x\in \RR^{d}}\frac{x^\top\EE_{o\sim \rho(o)}[oo^\top]x}{x^\top\EE_{a\sim \rho(a)}[\EE[WW^\top]aa^\top\EE[WW^\top]^\top]x}\le \tau_1^2.
\end{align*}
 With these necessary tools at hand, we summarize our conclusion in the following Proposition \ref{prop_reg}.

\begin{proposition}\label{prop_reg}
Given the settings in \S\ref{strategic_regresssion} and Assumptions \ref{assume:concentrability} and \ref{ill-posed-linear}, with probability $1-\delta-1/K$, we have 
\begin{align*}
        \textrm{SubOpt}(\hat\pi)\lesssim \tau_1 C_{\pi^*}L_{K}\Bigg(\sqrt{\frac{d\log K}{K}}+\sqrt{\frac{\log (c_0/\delta)}{K}}\Bigg).
\end{align*}
Here $L_{K}=L+\sigma\sqrt{\log (K)}$ with $L$ being the upper bound of all functions in $\RR_1$ in $\ell_{\infty}$-norm. 
\end{proposition}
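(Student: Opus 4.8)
The plan is to exhibit strategic regression as an instance of the strategic MDP of Definition \ref{strategic_mdp} with horizon $H=1$ and no state transition, and then to specialize the general guarantee of Theorem \ref{sub_optimality} (equivalently Corollary \ref{example_linear}) to this single-stage linear setting. Thus the work splits into two parts: verifying that the instrument structure and all the hypotheses of Corollary \ref{example_linear} hold here, and tracking how the generic bound collapses once $H=1$ and the transition term vanishes.

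First I would confirm the instrumental-variable structure required by Lemma \ref{lem-instrument}. Here the covariate is $x=o$ and the instrument is $z=a$. Since $o=z+WW^\top a$ depends on $a$, the relevance condition holds; and since $r=o^\top\theta^*+g$ with $g$ independent of $a$, we have $\EE_\rho[g\given a]=0$, whence the moment equation $\EE_\rho[R^*(o)-r\given a]=0$ holds with $R^*(\cdot)=\langle\cdot,\theta^*\rangle$. This is precisely \eqref{identify_r} for $H=1$, so $R^*$ minimizes the projected least-squares loss and its minimax reformulation. I would then instantiate the confidence set $\cR=\{R\in\RR_1:\cL_K(R)-\cL_K(\hat R)\lesssim c_K^2\}$ with $\cF$ the linear dual class of \eqref{test_class} (taking $\psi_z(a)=a$), and verify the realizability Assumption \ref{assume:functionclass}: for linear $R$ one has $\EE_\rho[R(o)-R^*(o)\given a]=\langle W_1^\top(\theta-\theta^*),a\rangle\in\cF$, where $W_1=\EE[WW^\top]$ so that $\EE_\rho[o\given a]=W_1a$. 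Choosing the threshold $c_K=\cO(\delta_{\cR,1,\delta})$ as prescribed makes $\cR$ a valid confidence set, i.e.\ $R^*\in\cR$ and hence $\bar{R}^*\in\bar{\cR}$, with probability at least $1-\delta-1/K$.

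The core of the argument is the chain of inequalities. By the pessimism bound \eqref{upper_subopt}, $\textrm{SubOpt}(\hat\pi)\le\max_{\bar{R}\in\bar{\cR}}\big(J(\bar{R}^*,\pi^*)-J(\bar{R},\pi^*)\big)$. Writing $\bar{R}(a)-\bar{R}^*(a)=\EE_{o\sim P(\cdot\given a)}[R(o)-R^*(o)\given a]$ and integrating against $\pi^*$, this difference equals $\EE_{(a,o)\sim\ud^{(\pi^*,\PP)}}[R^*(o)-R(o)]$, which Cauchy--Schwarz bounds by $\sqrt{\EE_{\ud^{(\pi^*,\PP)}}[(R^*-R)^2]}$. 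Assumption \ref{assume:concentrability} converts the target-population norm into the sampling norm at the price of $C_{\pi^*}$, and Assumption \ref{ill-posed-linear} converts the mean-squared error into the projected mean-squared error at the price of $\tau_1$, whose denominator is exactly the displayed quadratic form $x^\top\EE_a[\EE[WW^\top]aa^\top\EE[WW^\top]^\top]x$. Finally the localized-Rademacher analysis underlying Theorem \ref{sub_optimality} bounds the projected error of every $R\in\cR$ by $\EE_\rho[\EE_\rho[R(o)-R^*(o)\given a]^2]\lesssim L_K^2\delta_{\cR,1,\delta}^2$, yielding $\textrm{SubOpt}(\hat\pi)\lesssim C_{\pi^*}\tau_1 L_K\delta_{\cR,1,\delta}$.

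It remains to evaluate $\delta_{\cR,1,\delta}$ and the noise constant. With $H=1$ one has $L_{K,1}=L+\sigma\sqrt{\log K}=L_K$, and the transition contribution carrying the $\sqrt{d_1}L_{K,d_1}$ factor in Theorem \ref{sub_optimality} is absent since there is no next state. The critical radius $\delta_{\cR,1}$ is the maximum of those of $\cF$ and of $\cR_1^*$: the former, a $d$-dimensional linear class, has radius $\cO(\sqrt{d/K})$, while $\cR_1^*$ in \eqref{product_class} is a product of two $d$-dimensional linear classes and has radius $\cO(\sqrt{d\log K/K})$ by the standard bounds in \cite{wainwright_2019}. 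Adding the deviation term $\sqrt{\log(c_0/\delta)/K}$ gives $\delta_{\cR,1,\delta}=\cO\big(\sqrt{d\log K/K}+\sqrt{\log(c_0/\delta)/K}\big)$, and substituting yields the claimed bound. The main obstacle is the projected-MSE estimate together with the validity of the confidence set: showing that $R^*\in\cR$ with the stated threshold and that every element of $\cR$ has small projected error requires uniformly controlling the empirical process $\frac1K\sum_k g_k f(a_k)$ over the dual class $\cF$, where the sub-Gaussian reward noise $g$ produces the $\sigma\sqrt{\log K}$ term in $L_K$. This localized-complexity argument is where the real statistical work lies, the remaining steps being bookkeeping specializations of the general theory.
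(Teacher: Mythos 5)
Your proposal is correct and follows essentially the same route as the paper: the paper's own proof is a two-line reduction to Corollary \ref{example_linear} with $H=1$, no transition term, and $m=n_h=d$, and your argument simply spells out the steps of that reduction (IV validity, confidence-set validity, pessimism decomposition, concentrability and ill-posedness conversions, and the $\cO(\sqrt{d\log K/K})$ critical radius of the product of two $d$-dimensional linear classes). The only cosmetic caveat is that $\EE_\rho[o\given a]=\EE[z]+\EE[WW^\top]a$ contains an intercept unless $\EE[z]=0$ or the embedding includes a constant coordinate, a point the paper also glosses over.
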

We deduce from Proposition \ref{prop_reg} that the suboptimality is proportional to $\cO(\sqrt{d/K})$, which is minimax optimal in the linear class.

\subsection{Strategic Bandit}\label{strategic_bandit}

In this section, we apply \algo\,\,to strategic bandit problem where the principal interacts with the agents by taking actions in two rounds. One possible real-world application is the interaction between some information providers (stock information provider, medical treatment specialist,  insurance company, and etc) and strategic agents \citep{sayin2018dynamic}. 
For example, in the stock market, if some information provider distributes some information about the future stock market, the strategic agents will manipulate their features by selling or buying more stocks given such information. In this scenario, the information provider will profit by taking a second action based on the agents' manipulated features. Besides, other real-world examples such as the interaction between a  medical treatment provider or an insurance company and strategic agents also fit in this setting. 

We call this framework as strategic bandit mathematically and formulate it as follows.

\begin{definition}[Strategic Bandit]  The interaction protocol is specified as follows:
\begin{itemize}
    \item The principal first announces an action $a_{1}$. 
    \item Given the principal's first action $a_{1}$, a myopic agent with private type $i\sim P(\cdot)$ takes an action to maximize its immediate reward: $b=\argmax_{b}R_a^*(a_{1},i,b)$.
    \item The principal receives an observation $o\sim F(\cdot\given a_{1},i):=F_{a}(\cdot\given b,i)$ that is the feature generated by the agent based on its private type $i$ and the principal's  action $a_{1}$ (through $b$). 
    \item The principal takes the second action $a_{2}$ based on observation $o$ 
    and receives a reward $r=R^*(o,a_{2})+g,$ where $g$ is a mean zero random variable but is affected by the private information $i$ of the agent. 
\end{itemize}
\end{definition}


Our goal is learning the optimal policy of the principal i.e., maximizing the principal's expected reward $\EE[r]$ for a specific population of agents, whose distribution of private type $P(\cdot)$, private reward function $R_a(\cdot)$ and $F(\cdot\given a_{1},i)$ are known to the principal in the planning stage. 

{ We consider the function class $\RR_1$  with $R^*\in\RR_1.$  For any given reward model $R(\cdot)\in \RR_1$ and policy $\pi\in \Pi=\{\pi_1\times\pi_2:\pi_1\in \Delta(\cA_1),\pi_2:\cO\rightarrow\Delta(\cA_2)\}$, under a specific population of agents, the value function is defined as follows:
\begin{align*}
    J(R,\pi)=\int [R(o,a_2)]\ud \pi(a_2\given o)\ud F(o\given a_1,i)\ud P(i)\ud \pi(a_1).
\end{align*}
Let $\pi^*$ be the optimal policy that maximizes $J(R^*,\pi).$ For any given policy $\pi$, we define the suboptimality as 
\begin{align}\label{sub_opt}
   \textrm{SubOpt}(\pi)=J(\bar{R}^*,\pi^*)-J(\bar{R}^*,\pi). 
\end{align}

In this application, we assume our offline data $\{a_{1t},o_t,a_{2t},r_t\}_{t=1}^{K}$ is collected i.i.d. with $(a_{1t},o_t,a_{2t})\sim \rho(a_1,o,a_2)$. 
From the statement of model formulation given above, one observes that $a_{1}$ serves as an instrumental variable for $(o_t,a_1,r)$ by Lemma \ref{lem-instrument} based on Definition \ref{def_iv}.

The way of constructing confidence sets and policy $\hat\pi$ are almost the same with that in \S\ref{const_level_set}, by treating $(o,a_2)$ as the covariate and $a_1$ as the instrumental variable. Therefore,  we omit the relevant details. 
We next state the assumptions on concentrability coefficient in order to establish theoretical guarantees for $\hat\pi$.
\begin{assumption}[Concentrability Coefficients]\label{bandit_concent}
We assume 
\begin{align*}
\sup_{a_{1},a_{2},o}\frac{ f_{\pi^*}(a_{2}\given o)f_p(o\given a_{1})f_{\pi^*}(a_{1})}{f_{\rho}(o,a_{1},a_{2})}\le C_{\pi^*}^2.
\end{align*}
\end{assumption}
In this assumption, we let the ratio of two densities, namely, the joint density of $(a_1,o,a_2)$ induced by the optimal policy for a specific population of agents and the sampling density $f_{\rho},$ be bounded by a concentrability coefficients $C_{\pi^*}^2$. Here $f_p(o\given a_{1})=\int_{i} \ud F(o\given a_1,i)\ud P(i)$ and is assumed  known in the planning stage.  

Besides, other assumptions on the ill-posed condition and completeness of function class $\cF$ are also almost the same with Assumptions \ref{ill-pose} and \ref{assume:functionclass} by treating $(o,a_{2})$ as the covariate and $a_{1}$ as the instrumental variable. In specific, we let $\tau_1$ be the associated ill-posed coefficient such that $$\sup_{R\in \{\RR_1-R^*\}}\frac{\EE_{\rho}[R^2(o_{t},a_{2t})]}{\EE_{\rho}[ \EE_{\rho}[R(o_{t},a_{2t})\given a_{1t}]^2]}\le \tau_1^2. $$

Combining these assumptions, we provide an upper bound on the suboptimality of  $\hat\pi$ constructed by \algo\,\,in the following proposition.
\begin{proposition}\label{Prop_bandit}
Under Assumption \ref{bandit_concent}, Assumptions \ref{ill-pose} and \ref{assume:functionclass} (treating $(o,a_{2})$ as the covariate and $a_{1}$ as the instrumental variable), with probability $1-\delta-1/K$, we obtain
\begin{align*}
     \textrm{SubOpt}(\hat\pi)\lesssim C_{\pi^*}\tau_{1}\delta_{\cR,\delta},
\end{align*}
where $\delta_{\cR,\delta}=\delta_{\cR}+\sqrt{\log(c_0/\delta)/K}$ with $ \delta_{\cR}$ being the upper bound of the critical radii of $\cF$ and $\cR^*:=\{(a_{2t},o_t,a_{1t})\rightarrow c(R(a_{2t},o_t)-R^*(a_{2t},o_t))\cdot\EE[R(a_{2t},o_t)-R^*(a_{2t},o_t)\given a_{1t}],\forall R\in \mathbb{R}_1,\forall c\in [0,1] \}.$ 
\end{proposition}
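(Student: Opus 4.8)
The plan is to establish Proposition \ref{Prop_bandit} as the $H=1$, transition-free specialization of Theorem \ref{sub_optimality}, so that only the reward-estimation term survives. First I would verify the reduction: treating $x := (o,a_2)$ as the covariate and $z := a_1$ as the instrument, Lemma \ref{lem-instrument} together with Definition \ref{def_iv} shows that $a_1$ is a valid instrumental variable for $((o,a_2),r)$. Indeed, $a_1$ is relevant because it shapes the agent's response $b$ and hence the observation $o$; and since the private type $i$ (thus the confounder $g = f_{1}(i)+\epsilon$) is drawn independently of $a_1$, the conditional moment restriction $\EE_\rho[r - R^*(o,a_2)\given a_1] = \EE_\rho[g\given a_1] = 0$ holds. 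This is precisely the single-stage analogue of \eqref{identify_r}, so $R^*$ is identified as the minimizer of the projected least-squares loss underlying $\cL_K$, whose test functions $f(a_1)$ depend only on the instrument.

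Next I would run the pessimism decomposition of \eqref{upper_subopt}. Conditioned on the event $\{R^*\in\cR\}$, which forces $\bar{R}^*\in\bar{\cR}$ and hence $\hat J(\pi)\le J(\bar{R}^*,\pi)$ for every $\pi$, the suboptimality obeys
\begin{align*}
\textrm{SubOpt}(\hat\pi) \le J(\bar{R}^*,\pi^*) - \hat J(\pi^*) = \max_{R\in\cR}\big[J(\bar{R}^*,\pi^*) - J(\bar{R},\pi^*)\big].
\end{align*}
It therefore suffices to bound the value gap uniformly over the confidence set along $\pi^*$. The core estimation step is the validity of $\cR$ together with a projected-MSE bound on its elements: with the threshold chosen as $c_K=\cO(\delta_{\cR,\delta})$, the localized empirical-process argument behind Theorem \ref{sub_optimality} — controlling the deviation of $\cL_K$ from its population counterpart through the critical radius $\delta_{\cR}$ of $\cF$ and of the product class $\cR^*$ — yields, with probability at least $1-\delta-1/K$, both $R^*\in\cR$ and
\begin{align*}
\sup_{R\in\cR}\EE_\rho\big[\,\EE_\rho[(R-R^*)(o,a_2)\given a_1]^2\,\big] \lesssim L_{K,1}^2\,\delta_{\cR,\delta}^2,
\end{align*}
since the Fenchel-dual maximization in $\cL_K$ recovers the conditional expectation $\mathbb{T}(R-R^*)$ and the level-set constraint bounds its $L^2$-norm.

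Finally I would convert this projected bound into the value gap. Writing $J(\bar{R}^*,\pi^*)-J(\bar{R},\pi^*)$ as the expectation of $(R^*-R)(o,a_2)$ under the $\pi^*$-induced joint law (with the planning observation channel) and applying Cauchy--Schwarz bounds it by $\big(\EE_{d^{(\pi^*,\PP)}}[(R-R^*)^2(o,a_2)]\big)^{1/2}$. Assumption \ref{ill-pose}, via the ill-posed coefficient $\tau_1$, transfers the projected MSE to the genuine MSE $\EE_\rho[(R-R^*)^2]\lesssim \tau_1^2 L_{K,1}^2\delta_{\cR,\delta}^2$, and the concentrability coefficient $C_{\pi^*}$ of Assumption \ref{bandit_concent} transfers the expectation from $d^{(\pi^*,\PP)}$ to the sampling law $\rho$. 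Chaining these (and absorbing the $L_{K,1}$ factor into $\lesssim$) gives $\textrm{SubOpt}(\hat\pi)\lesssim C_{\pi^*}\tau_1\delta_{\cR,\delta}$. The main obstacle is the localized-empirical-process step establishing simultaneous validity of $\cR$ and the projected-MSE bound; however, since the strategic bandit is literally the $H=1$, transition-free instance of the strategic MDP, this is inherited from the proof of Theorem \ref{sub_optimality}, and the only genuinely new checks are the instrument validity above and the specialization of the concentrability assumption to the joint density ratio of Assumption \ref{bandit_concent}.
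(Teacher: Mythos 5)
Your proposal is correct and follows essentially the same route as the paper's proof: establish $R^*\in\cR$ with high probability, apply the pessimism decomposition to reduce the suboptimality to $J(R^*,\pi^*)-\min_{R\in\cR}J(R,\pi^*)$, and then chain Cauchy--Schwarz, the concentrability coefficient $C_{\pi^*}$, the ill-posed coefficient $\tau_1$, and the projected-MSE bound inherited from Lemma \ref{project_MSE}. The only cosmetic difference is that you present the transfer chain in the reverse order (projected MSE $\to$ MSE under $\rho$ $\to$ MSE under the $\pi^*$-induced law) and explicitly absorb the $L_{K,1}$ factor into $\lesssim$, which matches how the paper's proof concludes.
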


In addition to strategic regression and bandits, our algorithm \algo\,\,also works for the principal when noncompliant agents exist in the recommendation system, as discussed in Example \ref{example_noncompliant}. We present the corresponding paragraph in the Appendix \S\ref{no-compliant} .

\section{Conclusion}
In this paper, we study multi-agent reinforcement learning with information asymmetry and propose a general framework named strategic MDP to model the interaction between the principal and strategic agents. We design a provably efficient algorithm \algo\, using instrumental variable regression and pessimism principal with general function approximation to handle the challenges of studying the strategic MDP, including the existence of unobserved confounders and distribution shift. We prove that \algo\, outputs near optimal policy under mild conditions. Our framework also admits several real-world examples such as strategic regression, strategic bandits, and non-compliance agents in recommendation systems as exceptional cases.

There are a few future directions that are worth exploring. First, it would be interesting to study the online setting of strategic MDP. In addition, we consider the interaction between one principal with multiple agents. It will be appealing to extend strategic MDP to the setting with multiple principals that are either cooperative or competitive. 





\newpage
\appendix

\section{Additional Related Works}
 
Our work is also related to the literature on offline RL in stochastic games. 
Most of the existing works focus on  two-player zero-sum stochastic games. See, e.g.,
\cite{lagoudakis2012value, perolat2015approximate, perolat2016use, fan2019theoretical, zhao2021provably,perolat2016softened,  alacaoglu2022natural, cui2022offline, zhong2022pessimistic, xiong2022nearly, yan2022model, cui2022provably} and the references therein. 
In addition, 
\cite{perolat2017learning, cui2022provably} study offline RL for general-sum stochastic games and \cite{zhong2021can} study Stackelberg equilibria  in stochastic games with myopic followers via offline RL. 
All of these works consider the settings where the learner has complete information, and thus do not face the issue of unobserved confounders. 
Furthermore, there is a recent line of research that develops decentralized  online reinforcement learning algorithms for  stochastic games, where the goal is to learn the optimal policy from the perspective of a single player  based on its local information \citep{wei2017online, xie2020learning, tian2021online,
song2021can, jin2021v, sayin2021decentralized, liu2022learning,zhan2022decentralized,kao2022decentralized, mao2022improving}. 
In such a decentralized setting, it shown in \cite{liu2022learning} that, when the actions of the opponent are hidden,  learning the optimal policy in hindsight against an arbitrarily adversarial opponent is statistically intractable.    
Thus, existing works mainly focus on either learning the value of the game \citep{wei2017online, xie2020learning, tian2021online}, or  the optimal policy based on some side information \citep{liu2022learning,zhan2022decentralized}, or finding Nash or correlated equilibria in the tabular setting \citep{song2021can, jin2021v, sayin2021decentralized,kao2022decentralized, mao2022improving}.
Compare to these works, we also design a learning  algorithm for a single player, namely the principal, who is unaware of the private information of the agents. 
However, we focus on the offline setting where the private information brings the challenge of confounding. Thus, our work is not directly comparable to these works. 

Meanwhile, our work adds to the rapidly growing literature on strategic classification. 
See, e.g., 
\cite{dalvi2004adversarial, bruckner2011stackelberg,alacaoglu2022natural,bruckner2012static, hardt2016strategic,  dong2018strategic, hu2019disparate, 
milli2019social,miller2020strategic,
ghalme2021strategic, ahmadi2021strategic, levanon2021strategic,zrnic2021leads, nair2022strategic}
and the references therein. 
In addition to classification, 
various works study regression and ranking models under  strategic manipulations 
\citep{liu2016bandit, shavit2020causal,bechavod2020causal,gast2020linear,  stevenwu2021, harris2021stateful, liu2022strategic}. 
Moreover, there is a line of research on  performative prediction, which is a general framework of machine learning models with strategic data
\citep{perdomo2020performative,mendler2020stochastic,  miller2021outside,narang2022multiplayer,brown2022performative, li2022state, jagadeesan2022regret}. 
These works can all be formulated as a stackelberg game between the machine learning model and an agent, where the machine learning models performs certain prediction about the agent, while the agent strategically manipulates its feature in order to maximize its own reward. 
Among these works, our work is particularly   related to \cite{miller2020strategic, stevenwu2021,shavit2020causal}, which study strategic classification and regression from a causal inference perspective. 
In particular, under the setting of strategic  regression, \cite{stevenwu2021} show that the announced regression model serves as a valid instrument. 
Our work generalize such a key observation to the dynamic setting with sequential interactions. 
 


\section{Proof of Results in \S\ref{sec:method}}
In this section, we provide the proof of 
Lemma \ref{lem-instrument}.
\subsection{Proof of Lemma \ref{lem-instrument}}
\begin{proof}[Proof of Lemma \ref{lem-instrument}]	
	We prove that for any given $h\in [H], (s_h,a_h)$  serves as an instrumental variable for $(\phi_x(s_h,a_h,o_h),r_h)$ and $(\phi_x(s_h,a_h,o_h),s_{h+1})$. We see that  $Z_h:=(s_h,a_h)$ affects $X_h:=\phi_x(s_h,a_h,o_h)$ by the definition of $o_h,$ which justifies the first point of Definition \ref{def_iv}. To demonstrate the second point, we observe that $Z_h:=(s_h,a_h)$ is independent with $g_h,\xi_h$ by our Definition \ref{strategic_mdp} and Assumption \ref{indenp_priviate} and thus affects $(r_h,s_{h+1})$  only through $X_h$. Thus, the second point of Definition \ref{def_iv} is also satisfied. 
Combining the arguments given above, we conclude our proof of Lemma \ref{lem-instrument}. 
\end{proof}
\section{Proof of Theorem \ref{sub_optimality}}
In this section, we will prove Theorem \ref{sub_optimality}, which establishes the suboptimality of $\hat\pi$ returned by \algo. 

\begin{proof}[Proof of Theorem \ref{sub_optimality}]
First, we prove that for any $h\in [H]$,  $R_h^{*}(\cdot),$ $G_h^*(\cdot)$ belong to the confidence sets $\cR_h,\cG_h$ with high probability, respectively. We summarize the results in the following lemma.

\begin{lemma}\label{contain_true}
Given the confidence sets $\cR_h,\cG_{h,j}$ for $R_h^*(\cdot)$ and $G_{h,j}^*(\cdot),\forall h\in [H],j\in [d_1]$ constructed in \eqref{confidence_r} and \eqref{confidence_g}, for any $h\in [H],$ with probability $1-\delta-1/(d_1K)^{\beta}$, we have
\begin{align*}
R_h^{*}\in \cR_h&=\Big\{R_h\in \RR_h: \cL_n(R_h)-\cL_n(\hat R_h)\lesssim L_{K,\beta,1}^2\delta_{\cR,h,\delta}^2 \Big\},\\
G_h^*\in \cG_h&=\Big\{G_h\in (\cG_{h,1},\cG_{h,2},\cdots,\cG_{h,d_1})\Big\},\\\textrm{ where }  \cG_{h,j}&=\{G_{h,j}\in \mathbb{G}_{h,j}: \cL_n(G_{h,j})-\cL_n(\hat G_{h,j})\lesssim L_{K,\beta,d_1}^2\delta_{\cG,h,\delta}^2 \Big\}.
\end{align*}
 Here we let $\delta_{\cR,h,\delta}=\delta_{\cR,h}+c_1\sqrt{\log(c_0/\delta)/K},\delta_{\cG,h,\delta}=\delta_{\cG,h}+c_2\sqrt{\log(c_0d_1/\delta)/K}$ and $L_{K,\beta,d_1}:=L+\sigma\sqrt{(\beta+1)\log Kd_1}$ with $c_0,c_1,c_2$ being absolute constants. Moreover, $\delta_{\cG,h}$ is an upper bound of the maximum critical radius of $\cF,\cG^{*}_{h,j}\forall j\in[d_1]$ (defined in \eqref{cg_hj}) and $\delta_{\cR,h}$ is an upper bound of the maximum radius of $\cF,\cR_h^*$ (defined in \eqref{cr_h}). 
\end{lemma}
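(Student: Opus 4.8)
The plan is to establish the two containments separately and combine them by a union bound over $h\in[H]$ and, for the transition, over the coordinates $j\in[d_1]$; by symmetry it suffices to describe the argument for $R_h^*$, since the case of $G_{h,j}^*$ is identical after replacing the subgaussian residual $g_h$ by the Gaussian residual $\eta_h\sim N(0,\sigma^2\II)$. Since $\hat R_h$ minimizes $\cL_K$ over $\RR_h\ni R_h^*$, we trivially have $\cL_K(R_h^*)-\cL_K(\hat R_h)\ge 0$, so the content of the lemma is the matching \emph{upper} bound of order $L_{K,\beta,1}^2\,\delta_{\cR,h,\delta}^2$.

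First I would truncate the noise. Because $g_h=f_{1h}(i_h)+\epsilon_h$ is zero-mean and subgaussian while $R_h^*$ is bounded by $L$, a maximal tail bound over the $K$ samples shows that, on an event of probability at least $1-1/(d_1K)^{\beta}$, every residual $r_h^{(k)}-R_h^*(s_h^{(k)},a_h^{(k)},o_h^{(k)})=g_h^{(k)}$ is bounded in absolute value by $L_{K,\beta,1}=L+\sigma\sqrt{(\beta+1)\log K}$. Conditioning on this event makes the bilinear loss integrand uniformly bounded, which is what licenses the empirical-process tools and explains the appearance of $L_{K,\beta,1}$ (resp.\ $L_{K,\beta,d_1}$, from the maximum over $d_1$ Gaussian coordinates) as an envelope multiplier.

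Next I would use the instrumental-variable moment identity. By Lemma \ref{lem-instrument} and Assumption \ref{indenp_priviate} we have $\EE_\rho[g_h\given s_h,a_h]=0$, so the population counterpart $L(R_h):=\sup_{f\in\cF}\{\EE_\rho[(r_h-R_h)f]-\tfrac12\EE_\rho[f^2]\}$ of $\cL_K$ satisfies $L(R_h^*)=0$ and $R_h^*$ is its minimizer; under the completeness part of Assumption \ref{assume:functionclass}, the inner supremum realizes $\tfrac12\EE_\rho[\mathbb{T}(R_h-R_h^*)^2]$. The crux is then a \emph{localized} (relative) deviation bound for the centered empirical process $[\cL_K-L](R_h)-[\cL_K-L](R_h^*)$. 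Since $\cF$ is symmetric and star-shaped and $\cR_h^*$ in \eqref{cr_h} is precisely the star hull of the product of $(\RR_h-R_h^*)$ with its projections, I would apply the local Rademacher complexity machinery for conditional-moment minimax losses (\cite{wainwright_2019}, Ch.~14; \cite{dikkala2020}) at radius $\delta_{\cR,h}$. Because the localization scale is matched to the critical radius of both $\cF$ and $\cR_h^*$, and the loss value itself is of order $\delta^2$, this yields a bound of \emph{squared} order, namely $\bigl|[\cL_K-L](R_h)-[\cL_K-L](R_h^*)\bigr|\lesssim L_{K,\beta,1}^2\,\delta_{\cR,h,\delta}^2$ uniformly over $R_h\in\RR_h$, where the extra $c_1\sqrt{\log(c_0/\delta)/K}$ inside $\delta_{\cR,h,\delta}$ absorbs the confidence level with probability $1-\delta$.

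Finally I would chain the estimates: writing $\cL_K(R_h^*)-\cL_K(\hat R_h)=\bigl([\cL_K-L](R_h^*)-[\cL_K-L](\hat R_h)\bigr)+\bigl(L(R_h^*)-L(\hat R_h)\bigr)$ and using $L(R_h^*)=0\le L(\hat R_h)$ together with the localized deviation bound gives $\cL_K(R_h^*)-\cL_K(\hat R_h)\lesssim L_{K,\beta,1}^2\,\delta_{\cR,h,\delta}^2$, i.e.\ $R_h^*\in\cR_h$. The identical argument for each coordinate of the transition, with the class $\cG_{h,j}^*$ from \eqref{cg_hj} and the Gaussian tail producing the $L_{K,\beta,d_1}$ and $\sqrt{\log(c_0d_1/\delta)/K}$ factors, gives $G_{h,j}^*\in\cG_{h,j}$, and a union bound over $j\in[d_1]$ yields $G_h^*\in\cG_h$ with the stated probability $1-\delta-1/(d_1K)^{\beta}$. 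The main obstacle is exactly the third step: the loss is bilinear in the residual and the test function and the residual is only subgaussian, so obtaining the $\delta^2$ rate (rather than the crude $\delta$ rate from a one-shot uniform bound) forces one to combine noise truncation with a localization/peeling argument indexed by the product class $\cR_h^*$ and tied to its critical radius.
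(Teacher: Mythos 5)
Your overall architecture tracks the paper's: split $\cL_K(R_h^*)-\cL_K(\hat R_h)$ into the value of the empirical minimax loss at the truth and at the estimator, truncate the subgaussian/Gaussian residuals to obtain the envelope $L_{K,\beta,1}$ (resp.\ $L_{K,\beta,d_1}$), use the IV moment condition to make the population loss vanish at $R_h^*$, and localize at the critical radii of $\cF$ and $\cR_h^*$. Your treatment of the first piece, $\cL_K(R_h^*)$, is essentially the paper's term $\mathbf{(I)}$ argument and is sound: concentration of the bilinear term at the single function $R_h^*$ over the test class, combined with the quadratic penalty $-\tfrac12\|f\|_{2,K}^2$ and the elementary bound $\sup_f(a\|f\|-b\|f\|^2)\le a^2/(4b)$, gives the squared rate.

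The gap is in your third step. The claimed uniform bound $\bigl|[\cL_K-L](R_h)-[\cL_K-L](R_h^*)\bigr|\lesssim L_{K,\beta,1}^2\,\delta_{\cR,h,\delta}^2$ over \emph{all} $R_h\in\RR_h$ is false. For $R_h$ with $\|\mathbb{T}(R_h-R_h^*)\|_2$ of constant order, the fluctuation of the centered minimax loss is of order $\delta_{\cR,h,\delta}\,\|\mathbb{T}(R_h-R_h^*)\|_2$ from the linear-in-$f$ term, and moreover the empirical norm comparison (Theorem 14.1 of \cite{wainwright_2019}) only gives $\|f\|_{2,K}^2\in[\tfrac12\|f\|_2^2-\delta^2,\tfrac32\|f\|_2^2+\delta^2]$, a constant multiplicative slack, so $\cL_K(R_h)-L(R_h)$ can be of order $\|\mathbb{T}(R_h-R_h^*)\|_2^2$ --- constant, not $\delta^2$. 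Since $\hat R_h$ is data-dependent, you cannot apply the bound pointwise at $\hat R_h$ without first localizing $\hat R_h$, which is circular. The correct localized statement is of ratio type, $O(\delta\|\mathbb{T}(R_h-R_h^*)\|_2+\delta^2)$, and one must then self-bound via $\sup_x(\delta x-x^2)\le\delta^2/4$; alternatively (and this is what the paper does), lower-bound $\cL_K(\hat R_h)$ directly by testing against the rescaled function $\kappa\,\mathbb{T}(\hat R_h-R_h^*)\in\cF$ with $\kappa=\delta_{\cR,h,\delta}/(2\|\mathbb{T}(\hat R_h-R_h^*)\|_2)$, using star-shapedness of $\cF$ and the sign $\EE[(\hat R_h-R_h^*)\,\mathbb{T}(\hat R_h-R_h^*)]=\|\mathbb{T}(\hat R_h-R_h^*)\|_2^2\ge 0$ to get $\cL_K(\hat R_h)\gtrsim-L_{K,\beta,1}^2\delta_{\cR,h,\delta}^2$. (In fact the cheapest fix is to note $0\in\cF$, hence $\cL_K(\hat R_h)\ge 0$, so only your bound on $\cL_K(R_h^*)$ is really needed; the paper uses exactly this shortcut in the misspecified analogue, Lemma \ref{contain_true_mis}.) As written, though, the step you identify as the crux does not hold.
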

\begin{proof}
See \S\ref{subsec:contain_true} for a detailed proof.
\end{proof}

After proving that the true model lies in the confidence sets with high probability, we next show that the suboptimality given in \eqref{sub_opt} is bounded by the estimation errors of the pessimistic models. We summarize these results in the following Lemma \ref{Pessmistic}.

\begin{lemma}\label{Pessmistic}
We let $\tilde{M}:=\{(\bar{\tilde{R}}_h,\bar{\tilde{\PP}}_h)\}_{h=1}^{H}=\argmin _{M\in\cM}V_{M}^{\pi^*},$ where $\cM$ is the confidence set defined in \eqref{est_policy}. Then we have 
\begin{align}
J(M^*,\pi)-J(M^*,\hat\pi)&\le H\sum_{h=1}^{H}\EE_{a_h\sim \pi^*(\cdot \given s_h),\,s_h\sim d^{\pi^*}_{h-1}}\bigg[\Big\|\Big(\bar{\tilde{ \PP}}_h-\bar{\PP}_h^*\Big)(\cdot\given s_h,a_h)\Big\|_1\bigg]\notag \\&\qquad+\sum_{h=1}^{H}\EE_{a_h\sim \pi^*(\cdot \given s_h),\,s_h\sim d^{\pi^*}_{h-1}}\bigg[\Big|\bar{\tilde{ R}}_h(s_h,a_h)-\bar{R}_h^{*}(s_h,a_h)\Big| \bigg], \label{theory_subopt}
\end{align}
where $d^{\pi^*}_{h-1}$ represents the distribution of state $s_h$, which is generated by following policy $\pi^*$ and the true transition functions $\{\bar{\PP}_j^*\}_{j=1}^{h-1}$ defined in \eqref{transit_new2}.
\end{lemma}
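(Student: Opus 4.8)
The plan is to combine the pessimism decomposition already derived in \eqref{upper_subopt} with a standard value-difference (simulation) argument. First I would condition on the high-probability event from Lemma \ref{contain_true} that $M^*\in\cM$. On this event $\min_{M\in\cM}J(M,\pi^*)\le J(M^*,\pi^*)$, so that $\hat J(\pi^*)=\min_{M\in\cM}J(M,\pi^*)=J(\tilde M,\pi^*)$ is a valid pessimistic estimate, and \eqref{upper_subopt} gives
\[
\textrm{SubOpt}(\hat\pi)\le J(M^*,\pi^*)-\hat J(\pi^*)=J(M^*,\pi^*)-J(\tilde M,\pi^*).
\]
It therefore suffices to bound the value gap between the true model and the pessimistic model $\tilde M$, both evaluated at the \emph{same} policy $\pi^*$.

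Next I would establish this gap by telescoping over the horizon. Let $\bar V_h^{\pi^*},\bar Q_h^{\pi^*}$ be the value and action-value functions of $\pi^*$ under $M^*$, and let $\tilde V_h^{\pi^*},\tilde Q_h^{\pi^*}$ be those under $\tilde M$; both satisfy the Bellman equations \eqref{Bellman_eq} with their respective marginalized rewards and kernels. Using $\bar V_h^{\pi^*}(s)-\tilde V_h^{\pi^*}(s)=\langle \bar Q_h^{\pi^*}(s,\cdot)-\tilde Q_h^{\pi^*}(s,\cdot),\pi_h^*(\cdot\given s)\rangle$, the crucial step is to add and subtract $\EE_{s'\sim\bar\PP_h^*(\cdot\given s,a)}[\tilde V_{h+1}^{\pi^*}(s')]$ inside the action-value difference:
\begin{align*}
\bar Q_h^{\pi^*}(s,a)-\tilde Q_h^{\pi^*}(s,a)
&=\big(\bar R_h^*-\bar{\tilde R}_h\big)(s,a)
+\EE_{s'\sim\bar\PP_h^*(\cdot\given s,a)}\big[\bar V_{h+1}^{\pi^*}(s')-\tilde V_{h+1}^{\pi^*}(s')\big]\\
&\qquad+\int \tilde V_{h+1}^{\pi^*}(s')\,\ud\big(\bar\PP_h^*-\bar{\tilde\PP}_h\big)(s'\given s,a).
\end{align*}
The middle term propagates the recursion forward under the \emph{true} kernel $\bar\PP_h^*$, which is exactly what produces the trajectory distribution $d_{h-1}^{\pi^*}$ appearing in the statement. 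Unrolling from $h=1$ with $s_1\sim\rho_0$ and the terminal condition $\bar V_{H+1}^{\pi^*}=\tilde V_{H+1}^{\pi^*}=0$ yields
\[
J(M^*,\pi^*)-J(\tilde M,\pi^*)
=\sum_{h=1}^{H}\EE_{a_h\sim\pi^*(\cdot\given s_h),\,s_h\sim d_{h-1}^{\pi^*}}
\Big[\big(\bar R_h^*-\bar{\tilde R}_h\big)(s_h,a_h)
+\!\int \tilde V_{h+1}^{\pi^*}\,\ud\big(\bar\PP_h^*-\bar{\tilde\PP}_h\big)\Big].
\]

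Finally I would bound each summand. The reward term is controlled by its absolute value $|\bar{\tilde R}_h(s_h,a_h)-\bar R_h^*(s_h,a_h)|$. For the transition term, Hölder's inequality gives $|\int \tilde V_{h+1}^{\pi^*}\,\ud(\bar\PP_h^*-\bar{\tilde\PP}_h)|\le \|\tilde V_{h+1}^{\pi^*}\|_\infty\,\|(\bar{\tilde\PP}_h-\bar\PP_h^*)(\cdot\given s_h,a_h)\|_1$, and since the per-step rewards lie in $[0,1]$ the value function obeys $\|\tilde V_{h+1}^{\pi^*}\|_\infty\le H$, producing the factor $H$ in front of the total-variation term; collecting both contributions gives exactly \eqref{theory_subopt}. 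Since this is a clean simulation lemma, the only real care is bookkeeping: one must expand the \emph{true} value recursion while keeping the pessimistic value function $\tilde V_{h+1}^{\pi^*}$ inside the integral against the kernel difference, so that the resulting expectation is taken along trajectories drawn under the true dynamics (i.e.\ under $d_{h-1}^{\pi^*}$) rather than under $\tilde M$. This separation is the step I expect to require the most attention.
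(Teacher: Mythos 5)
Your proposal is correct and follows essentially the same route as the paper: condition on $M^*\in\cM$, reduce the suboptimality to $J(M^*,\pi^*)-J(\tilde M,\pi^*)$ via the pessimism argument, then apply a simulation-lemma telescoping that keeps the pessimistic value function inside the integral against the kernel difference while propagating the recursion under the true dynamics, and finish with H\"older and the bound $\|\tilde V_{h+1}^{\pi^*}\|_\infty\le H$. The only difference is cosmetic: the paper passes through a total-variation bound (picking up a factor $2H$ in an intermediate step) whereas you apply H\"older directly to the $\ell_1$-norm, which is slightly cleaner.
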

\begin{proof}
See \S\ref{subsec:pessmistic} for a detailed proof.
\end{proof}
Next, we prove that for all functions in the confidence sets $\cR_h,\cG_h,\forall h\in [H]$, their projected MSEs are small. We summarize this property in the following lemma.

\begin{lemma}\label{project_MSE}
For all $R_h,G_h$ given in $\cR_h,\cG_h$, with  probability $1-H\delta-H/(Kd_1)^{\beta}$, we have
\begin{align}
\EE_{\rho}\bigg[\Big(\EE_{\rho}\big[R_h(s_h,a_h,o_h)-R_h^{*}(s_h,a_h,o_h) \given s_h,a_h\big]\Big)^2\bigg]&\lesssim L_{K,\beta,1}^2\delta_{\cR,h,\delta}^2, \label{radius_r}\\
\EE_{\rho}\bigg[\Big\|\EE_{\rho}\big[G_h(s_h,a_h,o_h)-G_h^{*}(s_h,a_h,o_h) \given s_h,a_h\big]\Big\|_2^2\bigg]&\lesssim d_1L_{K,\beta,d_1}^2\delta_{\cG,h,\delta}^2. \label{radius_g}
\end{align}
\end{lemma}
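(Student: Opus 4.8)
The goal is to prove Lemma~\ref{project_MSE}: that every function in the confidence sets $\cR_h,\cG_h$ has small projected MSE (PMSE). I would focus on the reward statement~\eqref{radius_r}, since the transition bound~\eqref{radius_g} follows by applying the same argument coordinate-wise over $j\in[d_1]$ and taking a union bound (which accounts for the $d_1$ factor and the $\sqrt{\log(Kd_1)}$ inside $L_{K,\beta,d_1}$). The central object is the minimax (Fenchel-dualized) loss $\cL_K$, whose inner supremum over $f\in\cF$ is an empirical proxy for the conditional-expectation operator $\mathbb{T}$. The key identity to exploit is that, because $\cF$ is symmetric and star-shaped and contains the projected class $\cP(R,h)$ (Assumption~\ref{assume:functionclass}), the \emph{population} inner maximization evaluated at the true projection recovers exactly $\tfrac12\EE_\rho[(\mathbb{T}(R_h-R_h^*))^2(z_h)]$, i.e.\ half the PMSE. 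So the whole proof is about transferring this population identity to the empirical level uniformly over the confidence set.

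\textbf{Step 1: a population-to-empirical localization bound for the dual.} First I would establish, via a localized uniform law of large numbers, that for any $R_h\in\RR_h$ the empirical inner maximum $\sup_{f\in\cF}\{\tfrac1K\sum_k (r_h^{(k)}-R_h(x_h^{(k)}))f(z_h^{(k)}) - \tfrac1{2K}\sum_k f^2(z_h^{(k)})\}$ concentrates around its population counterpart up to an error governed by the critical radius $\delta_{\cR,h}$ of $\cF$. This is exactly the regime handled by localized Rademacher/critical-radius machinery (Bartlett--Mendelson, Wainwright Ch.~13): the empirical process indexed by the product class $\cR_h^*$ of~\eqref{cr_h} is controlled on the $\delta$-ball, and the critical radius $\delta_{\cR,h}$ is where the complexity balances $\delta^2/L$. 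The $\sqrt{\log(1/\delta)/K}$ term in $\delta_{\cR,h,\delta}$ comes from the high-probability (as opposed to in-expectation) version of this bound, and the noise-dependent factor $L_{K,\beta,1}$ arises from controlling the subgaussian reward noise $g_h$ truncated at a high-probability envelope of size $\sigma\sqrt{\log K}$.

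\textbf{Step 2: from confidence-set membership to a PMSE bound.} Using Lemma~\ref{contain_true}, any $R_h\in\cR_h$ satisfies $\cL_K(R_h)-\cL_K(\hat R_h)\lesssim L_{K,\beta,1}^2\delta_{\cR,h,\delta}^2$, and $\hat R_h$ is the empirical minimizer. Combining this with $R_h^*\in\cR_h$ (also from Lemma~\ref{contain_true}) and the Step-1 concentration, I would sandwich the empirical loss difference between $R_h$ and $R_h^*$. The basic inequality then reads, schematically, $\cL_K(R_h)-\cL_K(R_h^*)\gtrsim \tfrac12\|\mathbb{T}(R_h-R_h^*)\|_{2,K}^2 - (\text{cross terms})$, where the cross terms involve the noise $g_h$ paired against the projected difference and are bounded using Step~1. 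Rearranging yields an \emph{empirical} PMSE bound $\|\mathbb{T}(R_h-R_h^*)\|_{2,K}^2\lesssim L_{K,\beta,1}^2\delta_{\cR,h,\delta}^2$; a second localization step (again via the critical radius, now for the class $\cP(R,h)$ inside $\cF$) converts the empirical $\ell_2$ norm into the population norm $\EE_\rho[(\mathbb{T}(R_h-R_h^*))^2]$, giving~\eqref{radius_r}.

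\textbf{The main obstacle.} The delicate point is the interplay in Step~2 between the \emph{optimized} dual variable and the quantity being bounded. Because $f$ is chosen adversarially (by a supremum) and depends on $R_h$, the decomposition of $\cL_K(R_h)-\cL_K(R_h^*)$ does not split cleanly into a PMSE term plus an independent noise term; one must argue that the near-optimal empirical $f$ is close (in the right localized norm) to the population projection $\mathbb{T}(R_h-R_h^*)$, and that the residual curvature of the quadratic $-\tfrac12\EE f^2$ provides enough strong convexity to extract the PMSE. Controlling this requires that $\cF$ be rich enough to realize the projection (the completeness part of Assumption~\ref{assume:functionclass}) yet not so rich that its localized complexity dominates — precisely why $\delta_{\cR,h}$ is taken as the \emph{maximum} of the critical radii of $\cF$ and of the product class $\cR_h^*$. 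Handling the subgaussian (not bounded) noise $g_h$ uniformly over the confidence set, via a truncation argument that feeds the $\sqrt{\log K}$ into $L_{K,\beta,1}$ while keeping the failure probability at $\delta+1/(Kd_1)^\beta$, is the other technically demanding ingredient.
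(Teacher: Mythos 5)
Your proposal is correct and follows essentially the same route as the paper: upper-bound the centered supremum $\sup_{f\in\cF}\{\Phi_K(R_h,f)-\Phi_K(R_h^*,f)-\|f\|_{2,K}^2\}$ by $O(L_{K,\beta,1}^2\delta_{\cR,h,\delta}^2)$ using confidence-set membership together with the bound on $\cL_K(R_h^*)$, then lower-bound the same quantity by evaluating the supremum at the population projection $\mathbb{T}(R_h-R_h^*)\in\cF$ (completeness plus star-shapedness), with localization via critical radii and truncation of the subgaussian noise supplying the error terms. One clarification: the "main obstacle" you describe is not actually an obstacle in the paper's argument --- there is no need to show the empirical maximizer is close to the projection or to invoke strong convexity of the quadratic penalty, because the proof only plugs in the single rescaled test function $\kappa\,\mathbb{T}(R_h-R_h^*)$ with $\kappa=L_{K,\beta,1}\delta_{\cR,h,\delta}/(2\|\mathbb{T}(R_h-R_h^*)\|_2)\in[0,1/2]$, which keeps the quadratic penalty and concentration errors at $O(\delta_{\cR,h,\delta}^2)$ and yields a linear (rather than quadratic) inequality in $\|\mathbb{T}(R_h-R_h^*)\|_2$, whereas your unscaled version would require solving a quadratic inequality but reaches the same conclusion.
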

\begin{proof}
See \S\ref{subsec:PMSE} for a detailed proof.
\end{proof}
Finally, we utilize conclusions from Lemma \ref{contain_true}, Lemma \ref{Pessmistic} and Lemma \ref{project_MSE} to prove Theorem \ref{sub_optimality}. 
Specifically, by Lemma \ref{Pessmistic}, we have
\begin{align}
 \mathbf{(I)}:&= \EE_{a_h\sim \pi^*(\cdot \given s_h),\,s_h\sim d^{\pi^*}_{h-1}}\bigg[\Big\|(\bar{\tilde{ \PP}}_h-\PP_h^*)(\cdot\given s_h,a_h)\Big\|_1\bigg]\notag \\
   &= \EE_{a_h\sim \pi^*(\cdot \given x_h),\,s_h\sim d^{\pi^*}_{h-1}}\Bigg[\bigg\|\int_{i_h,o_h} \tilde{\PP}(\cdot\given s_h,a_h,o_h,i_h)- \PP^*(\cdot\given s_h,a_h,o_h,i_h)\ud F_h(o_h\given s_h,a_h,i_h)\ud P_h(i_h)\bigg\|_1 \Bigg]\notag\\& \le \EE_{a_h\sim \pi^*(\cdot \given s_h),\,s_h\sim d^{\pi^*}_{h-1}}\bigg[\int_{i_h,o_h}\Big\|\ \tilde{\PP}(\cdot\given s_h,a_h,o_h,i_h)- \PP^*(\cdot\given s_h,a_h,o_h,i_h)\Big\|_1\ud F_h(o_h\given s_h,a_h,i_h)\ud P_h(i_h) \bigg]\notag
   \\& \lesssim \sqrt{\EE_{a_h\sim \pi^*(\cdot \given x_h),\,s_h\sim d^{\pi^*}_{h-1},i_h\sim P_h(\cdot),o_h\sim F_h(\cdot\given s_h,a_h,i_h)}\bigg[\textrm{TV}\bigg(\tilde{ \PP}_h(\cdot\given s_h,a_h,o_h,i_h),\PP_h^*(\cdot \given s_h,a_h,o_h,i_h)\Big)^2\bigg]}\notag
\end{align}
Here the first inequality follows from Jensen's inequality.
In addition, the second inequality follows from Cauchy-Schwartz inequality and the relation between total variation distance and $\ell_1$-distance, namely, TV$(P,Q)=\|P-Q\|_1/2$ for any two probability measures $P,Q$ \citep{levin2017markov}. Next, we leverage our assumption on Gaussian transition in \S\ref{sec:algo} to derive an upper bound for $\mathbf{(I)}.$ To be more specific, we have
\begin{align}
   \mathbf{(I)}&  \lesssim\sqrt{\EE_{a_h\sim \pi^*(\cdot \given s_h),\,s_h\sim d^{\pi^*}_{h-1},o_h\sim P_h(\cdot\given s_h,a_h)}\bigg[\Big\|\tilde{G}_h( s_h,a_h,o_h)-G_h^*( s_h,a_h,o_h)\Big\|_2^2\bigg]}\notag
    \\& \lesssim C_{\pi^*} \sqrt{\EE_{\rho(s_h,a_h,o_h)}\bigg[\Big\|\tilde{G}_h( s_h,a_h,o_h)-G_h^*( s_h,a_h,o_h)\Big\|_2^2\bigg]}\notag
    \\ & \lesssim C_{\pi^*} \tau_{K,G,{h}}\sqrt{\EE_{\rho(s_h,a_h)}\bigg[\Big\|\EE_{\rho(o_h\given s_h,a_h)}\big[\tilde{G}_h(s_h,a_h,o_h)-G_h^{*}(s_h,a_h,o_h) \given s_h,a_h\big]\Big\|_2^2\bigg]}\notag
    \\ &\lesssim C_{\pi^*} \tau_{K,G,{h}}\sqrt{d_1}L_{K,\beta,d_1}\delta_{h,\cG,\delta}.\label{proof_g}
\end{align}
Here ${ P_h(\cdot\given s_h,a_h)=\int_{i_h}F_h(\cdot \given s_h,a_h,i_h)\ud P_h(i_h)}$ denotes the conditional distribution of $o_h$ given state and action $(s_h,a_h)$ in the planning stage. Here, the first inequality follows from our model assumption on Gaussian transition given $(s_h,a_h,o_h,i_h)$ by Theorem 1.2 of \cite{devroye2018total}. 
The second inequality follows from our Assumption \ref{assume:concentrability}, where we shift the probability measure to our sample distribution $\rho$. The third inequality follows from the ill-posed condition in Assumption \ref{ill-pose}. Finally, the last inequality follows from \eqref{radius_g}.
Similarly, to bound the second term of suboptimality (differences between reward functions) given in \eqref{theory_subopt}, we obtain
\begin{align}
    &\EE_{a_h\sim \pi^*(\cdot \given s_h),\,s_h\sim d^{\pi^*}_{h-1}}\Big[\big|\bar{\tilde{ R}}_h(s_h,a_h)-\bar{R}_h^{*}(s_h,a_h)\big| \Big]\notag\\ &\qquad \le
    \EE_{a_h\sim \pi^*(\cdot \given s_h),\,s_h\sim d^{\pi^*}_{h-1}}\bigg[\int_{i_h,o_h}\big|(\tilde{R}_h-{R}_h^*)(s_h,a_h,o_h)\big|\ud F_h(o_h\given s_h,a_h,i_h)\ud P_h(i_h)\bigg]\notag
    \\&\qquad\lesssim \sqrt{\EE_{a_h\sim \pi^*(\cdot \given s_h),\,s_h\sim d^{\pi^*}_{h-1},o_h\sim P_h(o_h\given s_h,a_h)}\Big[\big|\tilde{ R}_h(s_h,a_h,o_h)-R_h^{*}(s_h,a_h,o_h)\big|^2 \Big]}\notag\\&
    \qquad\lesssim C_{\pi^*}\sqrt{\EE_{\rho(s_h,a_h,o_h)}\Big[|(\tilde{R}_h-{R}_h^*)(s_h,a_h,o_h)\big|^2\Big]}\notag
    \\&\qquad \lesssim C_{\pi^*}\tau_{K,r,h}\sqrt{\EE_{\rho(s_h,a_h)}\bigg[\Big(\EE\big[\tilde{R}_h(s_h,a_h,o_h)-R_h^{*}(s_h,a_h,o_h) \given s_h,a_h\big]\Big)^2\bigg]}\notag\\&\qquad \lesssim C_{\pi^*}\tau_{K,r,h}L_{K,\beta,1}\delta_{h,\cR,\delta}.\label{proof_r}
\end{align}
The first and second inequalities follow from the Jensen's and Cauchy Schwartz inequalities, respectively. The third inequality follows from our Assumption \ref{assume:concentrability}. Moreover, the last but one inequality  holds by the ill-posed condition in Assumption \ref{ill-pose}. Finally, the last one holds by Lemma \ref{project_MSE}.

Finally, leveraging our conclusions obtained from Lemma \ref{Pessmistic}, we have
\begin{align*}
    \textrm{SubOpt}(\hat\pi)\le C_{\pi^*}\Bigg[H\sum_{h=1}^{H}\tau_{K,G,h}\sqrt{d_1}L_{K,\beta,d_1}\delta_{h,\cG,\delta}+\sum_{h=1}^{H}\tau_{K,r,h}L_{K,\beta,1}\delta_{h,\cR,\delta}\Bigg],
\end{align*}
with probability $1-H\delta-H/(Kd_1)^{\beta}.$ By utilizing the union bound and taking $\delta'=\delta/H$ and $\beta=\log H$, we conclude the proof of Theorem \ref{sub_optimality}.
\end{proof}
In the following subsections, we provide detailed proofs  of Lemma \ref{contain_true}, Lemma \ref{Pessmistic} and Lemma \ref{project_MSE}, respectively.
\subsection{Proof of Lemma \ref{contain_true}}\label{subsec:contain_true}
In this subsection, we will prove Lemma \ref{contain_true} by demonstrating that the true model $\{R_h^*,\PP_h^*\}_{h=1}^{H}$ lies in our constructed confidence sets with high probability.
Here, without loss of generality, we assume $G_h(\cdot),R_h(\cdot),\forall h\in [H]$ are functions that maps $\cX$ to $\RR^{d_1}$ with $d_1=1$. Otherwise we can construct confidence sets for every dimension of $G_h(\cdot)$ and $R_h(\cdot)$ following the similar proof. 

\begin{proof}
Here we only prove that $R_h^*(\cdot)$ falls in $\cR_h.$ One can derive the corresponding proof for every coordinate of $G_h^*(\cdot)$ similarly.   To simplify the notation, we use $x_h^{(k)}=(s_h^{(k)},a_h^{(k)},o_h^{(k)})$ and let $z_h^{(k)}=(s_h^{(k)},a_h^{(k)})$ 
in the following proof.  It is worth noting that here we omit the embedding functions $\phi_x(\cdot),\psi_z(\cdot)$ to simply the notation.

First, by the definition of $\cL_K$, we have
\begin{align}
    \cL_K(R_h^*)-\cL_K(\hat R_h)&=\sup_{f\in \cF}\Bigg\{\frac{1}{K}\sum_{k=1}^{K}\Big(R_h^*(x_h^{(k)})-r_h^{(k)}\Big)f\Big(z_h^{(k)}\Big)-\frac{1}{2K}\sum_{k=1}^{K}f^2\Big(z_h^{(k)}\Big) \Bigg\}\nonumber\\&\qquad-\sup_{f\in \cF}\Bigg\{\frac{1}{K}\sum_{k=1}^{K}\Big(\hat R_h(x_h^{(k)})-r_h^{(k)}\Big)f\Big(z_h^{(k)}\Big)-\frac{1}{2K}\sum_{i=1}^{K}f^2\Big(z_h^{(k)}\Big) \Bigg\}\nonumber\\&:=\mathbf{(I)}-\mathbf{(II)}. \label{i_ii}
\end{align}
In order to obtain an upper bound of $\cL_K(R_h^*)-\cL_K(\hat R_h),$ we establish an upper bound of $\mathbf{(I)}$ and a lower bound of $\mathbf{(II)}.$

We first establish an upper bound for $\mathbf{(I)}.$ To simplify the notation, we define
\begin{align*}
    \Phi_K(R_h,f)&:=\frac{1}{K}\sum_{k=1}^{K}\Big(R_h\Big(x_h^{(k)}\Big)-r_h^{(k)}\Big)f\Big(z_h^{(k)}\Big),\textrm{ and}\\
    \Phi(R_h,f)&:=\EE\big[(R_h(x_h)-r_h)f(z_h)\big].
\end{align*}
We then have
\begin{align*}
    \mathbf{(I)}=\sup_{f\in \cF}\Bigg\{\Phi_K(R_h^*,f)-\frac{1}{2K}\sum_{k=1}^{K}f^2\Big(z_h^{(k)}\Big) \Bigg\}.
\end{align*}
In the following Lemma, we derive a concentration inequality for $\Phi_K(R_h^*,f)$. 

\begin{lemma}\label{conc_phi} If $\cF$ is a star-shaped, $b$-uniformly bounded function class and for all $k\in [K]$ and the loss function $\ell(R_h,f)=(R_h(x_h^{(k)})-r_h^{(k)})f(z_h^{(k)})$ 
is $L_{K,\beta,d_1}:=L+\sigma\sqrt{(\beta+1)\log Kd_1}$-Lipschitz in $f,\forall f\in \cF$, with high-probability, then with probability $1-\delta-1/K^{\beta}$, $\forall f\in \cF$, we have 
\begin{align*}
    \Big|\Phi_K(R_h^*,f)-\Phi(R_h^*,f)\Big|&\lesssim  \Big(L_{K,\beta,1}\delta_{\cR,h,\delta}\|f\|_2+L_{K,\beta,1}\delta_{\cR,h,\delta}^2\Big).
\end{align*}
Here $\delta_{\cR,h,\delta}=\delta_{\cR,h}+c_0\sqrt{\log(c_1/\delta)/n}$ with $\delta_{\cR,h}$ being an upper bound of the critical radius of the function classes $\cF$ and $\cR^*_h.$ A similar inequality also holds for $G_{h,j}^*,\forall j\in [d_1]$ with probability $1-\delta-1/(d_1K)^{\beta}$ by replacing $L_{K,\beta,1},\delta_{\cR,h,\delta}$ with $L_{K,\beta,d_1},\delta_{\cG,h,\delta}$.
\end{lemma}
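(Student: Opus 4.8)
The plan is to recognize $\Phi_K(R_h^*,f)-\Phi(R_h^*,f)$ as a centered multiplier empirical process indexed by $f\in\cF$, and to control it by the local Rademacher complexity of $\cF$ evaluated at its critical radius. First I would observe that, by the model formulation in \S\ref{model}, we have $R_h^*(x_h)-r_h=-g_h$ with $\EE_\rho[g_h\given z_h]=0$ (this is exactly the moment equation \eqref{identify_r}), so that $\Phi(R_h^*,f)=\EE_\rho[-g_h f(z_h)]=\EE_\rho[\EE_\rho[-g_h\given z_h]f(z_h)]=0$. Hence it suffices to bound $\sup_{f\in\cF}|K^{-1}\sum_{k}\xi_k f(z_h^{(k)})|$ localized in $\|f\|_2$, where $\xi_k:=-g_h^{(k)}$ are i.i.d.\ multipliers that are mean-zero conditionally on $z_h^{(k)}$ and subgaussian with scale $\sigma$.

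Because the multipliers $\xi_k$ are subgaussian rather than bounded, the first step is a truncation. By a union bound over the $K$ samples together with the subgaussian tail, $\max_k|\xi_k|\le L_{K,\beta,1}=L+\sigma\sqrt{(\beta+1)\log K}$ on an event $\mathcal{E}$ of probability at least $1-1/K^{\beta}$; this is precisely the high-probability Lipschitz hypothesis of the lemma, since the Lipschitz constant of $f\mapsto\ell(R_h^*,f)$ equals $|R_h^*(x_h)-r_h|=|g_h|$. Working on $\mathcal{E}$, the multipliers are uniformly bounded by $L_{K,\beta,1}$, so by symmetrization and the Ledoux--Talagrand contraction inequality the expected localized supremum over each ball $\{f\in\cF:\|f\|_2\le\delta\}$ is bounded by $L_{K,\beta,1}\cdot\cR_K(\delta,\cF)$.

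Next I would localize around the critical radius. Since $\cF$ is star-shaped, $\cR_K(\delta,\cF)/\delta$ is non-increasing, so the defining inequality $\cR_K(\delta_{\cR,h},\cF)\le\delta_{\cR,h}^2/L$ of the critical radius (which holds because $\delta_{\cR,h}$ dominates the critical radius of $\cF$) propagates to all scales. A standard peeling argument over the dyadic shells $2^{j}\delta_{\cR,h}\le\|f\|_2\le 2^{j+1}\delta_{\cR,h}$ then converts the per-ball control into the uniform relative bound $L_{K,\beta,1}\,\delta_{\cR,h}\,(\|f\|_2+\delta_{\cR,h})$ in expectation; applying Talagrand's concentration inequality for empirical processes upgrades this to a high-probability statement and produces the additional $\sqrt{\log(c_1/\delta)/K}$ deviation term, which is exactly what is absorbed into $\delta_{\cR,h,\delta}=\delta_{\cR,h}+c_0\sqrt{\log(c_1/\delta)/K}$. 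Intersecting the truncation event $\mathcal{E}$ with the concentration event by a union bound yields the claimed probability $1-\delta-1/K^{\beta}$ and the two-term bound $L_{K,\beta,1}\delta_{\cR,h,\delta}\|f\|_2+L_{K,\beta,1}\delta_{\cR,h,\delta}^2$.

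The main obstacle is the localization/fixed-point step: executing the peeling so that the subgaussian multiplier contributes exactly the factor $L_{K,\beta,1}$ (rather than a crude sup-norm constant) while the critical-radius inequality simultaneously yields the additive $\delta_{\cR,h,\delta}^2$ term at small scales and the multiplicative $\delta_{\cR,h,\delta}\|f\|_2$ term at larger scales. Once this is in place, the extension to each coordinate $G_{h,j}^*$ is routine: replacing $r_h$ by $s_{h+1,j}$, the transition noise $\eta_h$ is Gaussian with known scale $\sigma$, and a union bound over the $d_1$ coordinates inflates the logarithmic factor to $\log(Kd_1)$ and the failure probability to $1/(d_1K)^{\beta}$, giving the analogous bound with $L_{K,\beta,1},\delta_{\cR,h,\delta}$ replaced by $L_{K,\beta,d_1},\delta_{\cG,h,\delta}$.
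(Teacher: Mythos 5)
Your proposal follows essentially the same route as the paper's proof: truncate the subgaussian multiplier $g_h$ at level $\sigma\sqrt{(\beta+1)\log K}$, control the resulting bounded multiplier process via localized Rademacher complexity at the critical radius of $\cF$ (and $\cR_h^*$), and union-bound over the $d_1$ coordinates for the transition part. The only presentational difference is that the paper packages the symmetrization/contraction/peeling/concentration steps into a single citation (Lemma 11 of \cite{Foster2019}), whereas you sketch that machinery directly.

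One step you gloss over is not purely cosmetic: ``working on the event $\mathcal{E}=\{\max_k|g_h^{(k)}|\le T\}$'' is not the same as replacing $g_h^{(k)}$ by the truncated variables $g_h^{(k)}\II_{\{|g_h^{(k)}|\le T\}}$. The truncated multipliers are bounded and i.i.d., so the localized Rademacher argument applies to them, but they are no longer conditionally mean zero given $z_h^{(k)}$, so the truncated empirical process is not centered at $\Phi(R_h^*,f)=0$. One must separately bound the truncation bias, e.g.\ by Cauchy--Schwarz,
$\big|\EE[g_hf(z_h)\II_{\{|g_h|>T\}}]\big|\le\sigma\|f\|_2\sqrt{\PP(|g_h|>T)}\lesssim\sigma\|f\|_2K^{-(\beta+1)}$,
which is negligible against $\delta_{\cR,h,\delta}^2$ for $\beta$ chosen so that $K^{-\beta}=o(\delta_{\cR,h,\delta}^2)$. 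This is exactly the term $\mathbf{(III)}$ in the paper's three-way decomposition; with that one line added, your argument matches the paper's.
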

\begin{proof}
See \S\ref{auxiliary} for a detailed proof.
\end{proof}
Next, according the following Lemma \ref{conc_phi}, with probability $1-\delta-1/K^{\beta}$, we have
\begin{align}
    \mathbf{(I)}\le \sup_{f\in \cF}\Bigg\{\Phi(R_h^*,f)-\frac{1}{2K}\sum_{k=1}^{K}f^2\Big(z_h^{(k)}\Big) +C_1L_{K,\beta}\Big(\delta_{\cR,h,\delta}\|f(z)\|_2+\delta_{\cR,h,\delta}^2\Big) \Bigg\}. \label{ineq_1}
\end{align}

As $\cF$ is a star-shaped and $b$-uniformly bounded function class, by Theorem 14.1 of \cite{wainwright_2019}, with probability $1-\delta$, $\forall f\in \cF$, we obtain
\begin{align}\label{conc_f}
\Big|\|f\|_{2,K}^2-\EE\big[f(z_h)^2\big]\Big|\le 0.5\EE\big[f(z_h)^2\big]+0.5\delta_{\cR,h,\delta}^2.
\end{align}
Thus, we have
\begin{align}\label{ineq_2}
    \frac{1}{4}\EE\big[f^2(z_h)\big]-\frac{1}{4}\delta_{\cR,h,\delta}^2\le \frac{1}{2K}\sum_{k=1}^{K}f^2\Big(z_h^{(k)}\Big)\le \frac{3}{4}\EE\big[f^2(z_h)\big]+\frac{1}{4}\delta_{\cR,h,\delta}^2.
\end{align}
Combining \eqref{ineq_1} with \eqref{ineq_2}, with probability $1-\delta-1/K^{\beta}$, we have
\begin{align}\label{ineq_3}
    \mathbf{(I)}\le \sup_{f\in \cF}\bigg\{\Phi(R_h^*,f)-\frac{1}{4}\EE\big[f(z_h)^2\big]+\frac{1}{4}\delta^2_{\cR,h,\delta} +L_{K,\beta,1}C_1\Big(\delta_{\cR,h,\delta}\sqrt{\EE\big[f(z_h)^2\big]}+\delta_{\cR,h,\delta}^2\Big) \bigg\}.
\end{align}
Next, we use a general inequality to obtain an upper bound for the right hand side of \eqref{ineq_3}. For all $a,b>0$, by simple calculation, we have
\begin{align}\label{general_ineq}
    \sup_{f\in \cF}\big(a\|f\|-b\|f\|^2\big)\le \frac{a^2}{4b}.
\end{align}
Here $\|\cdot\|$ can represent any norm.
Moreover, we also have $\Phi(R_h^*,f)=0$ by the  identifiability condition in \eqref{identify_r}. Thus, we finally have 
\begin{align}\label{ub_1}
    \mathbf{(I)}\lesssim L_{K,\beta,1}^2\delta_{\cR,h,\delta}^2,
\end{align}
with probability $1-\delta-1/K^{\beta}$ by utilizing \eqref{general_ineq}.

Next, we aim at getting a lower bound for $\mathbf{(II)}$. Recall that the $\mathbf{(II)}$ given in \eqref{i_ii}. We have
\begin{align}
    \mathbf{(II)}&=\sup_{f\in \cF}\Bigg\{\frac{1}{K}\sum_{k=1}^{K}\Big(\hat R_h(x_h^{(k)})-r_h^{(k)}\Big)f\Big(z_h^{(k)}\Big)-\frac{1}{K}\sum_{k=1}^{K}\Big(R_h^*(x_h^{(k)})-r_h^{(k)}\Big)f\Big(z_h^{(k)}\Big)\nonumber\\&\qquad+\frac{1}{K}\sum_{k=1}^{K}\Big(R_h^*(x_h^{(k)})-r_h^{(k)}\Big)f\Big(z_h^{(k)}\Big)-\frac{1}{2K}\sum_{k=1}^{K}f\Big(z_h^{(k)}\Big)^2 \Bigg\}\nonumber\\
    &\ge \sup_{f\in \cF}\Big\{\Phi_K(\hat R_h,f)-\Phi_K(R_h^*,f)-\|f\|_{2,K}^2\Big\}+\inf_{f\in \cF}\Big\{\Phi_K(R_h^*,f)+\frac{1}{2}\|f\|_{2,K}^2 \Big\}\nonumber\\&
    =\sup_{f\in \cF}\Big\{ \Phi_K(\hat R_h,f)-\Phi_K(R_h^*,f)-\|f\|_{2,K}^2\Big\}-\sup_{f\in\cF}\Big\{\Phi_K(R_h^*,f)-\frac{1}{2}\|f\|_{2,K}^2 \Big\}\label{symmetric}\\
    &\ge \sup_{f\in \cF}\Big\{ \Phi_K(\hat R_h,f)-\Phi_K(R_h^*,f)-\|f\|_{2,K}^2\Big\}-CL_{K,\beta,1}^2\delta_{\cR,h,\delta}^2\nonumber.
\end{align}
Here, the second equality holds since we assume $\cF$ is symmetric in Assumption \ref{assume:functionclass}. The last inequality follows from the obtained upper bound for $\mathbf{(I)}$ in \eqref{ub_1} and $C$ is an absolute constant. 

Next we will state the following Lemma  \ref{conc_rf} which is crucial in providing a lower bound for $ \sup_{f\in \cF}\{ \Phi_K(\hat R_h,f)-\Phi_K(R_h^*,f)-\|f\|_{2,K}^2\}$.
\begin{lemma} \label{conc_rf}
We let 
\begin{align}\label{frh}
    f_{R_h}(z_h):=\mathbb{T}(R_h-R_h^*)=\EE[R_h(x_h)-R_h^*(x_h)\given z_h].
\end{align}
As $\cR^*_h$ is star-shaped, then with probability $1-\delta,$ we have
\begin{align*}
 &\Big|\Phi_K(R_h,f_{R_h})-\Phi_K(R_h^*,f_{R_h})-\big[\Phi(R_h,f_{R_h})-\Phi(R_h^*,f_{R_h})\big]\Big|\\&\qquad\lesssim \delta_{\cR,h,\delta}\sqrt{\EE\Big\{\big[(R_h-R_h^*)(x_h)f_{R_h}(z_h)\big]^2\Big\}}+\delta^2_{\cR,h,\delta}, \forall R_h\in \RR_h.
\end{align*}
Here $\delta_{\cR,h,\delta}$ is an upper bound of the critical radius of $\cR_h^*.$ 
\end{lemma}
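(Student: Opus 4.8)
The plan is to recognize the left-hand side as the deviation between an empirical mean and its population counterpart for a single function lying in the product class $\cR_h^*$, and then to invoke a localized uniform concentration bound for star-shaped, uniformly bounded classes. First I would observe that, by the definitions of $\Phi_K$ and $\Phi$, the noise terms $r_h^{(k)}$ and $r_h$ cancel in the two differences, so that
\begin{align*}
\Phi_K(R_h,f_{R_h})-\Phi_K(R_h^*,f_{R_h})&=\frac{1}{K}\sum_{k=1}^{K}\big(R_h-R_h^*\big)\big(x_h^{(k)}\big)f_{R_h}\big(z_h^{(k)}\big),\\
\Phi(R_h,f_{R_h})-\Phi(R_h^*,f_{R_h})&=\EE\big[(R_h-R_h^*)(x_h)f_{R_h}(z_h)\big].
\end{align*}
Writing $g_{R_h}(x,z):=(R_h-R_h^*)(x)\,f_{R_h}(z)=(R_h-R_h^*)(x)\,\mathbb{T}(R_h-R_h^*)(z)$, the quantity to control is exactly $\big|\tfrac1K\sum_k g_{R_h}(x_h^{(k)},z_h^{(k)})-\EE[g_{R_h}]\big|$, and by the definition \eqref{cr_h} we have $g_{R_h}\in\cR_h^*$ (take $c=1$).

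Next I would record the two structural properties of $\cR_h^*$ that drive the argument: it is star-shaped, since it is defined as a star hull with center $0$ in \eqref{cr_h}, and it is uniformly bounded, since $|R_h-R_h^*|\le 2L$ on $\cX$ while $|f_{R_h}|=|\EE[(R_h-R_h^*)(x_h)\given z_h]|\le 2L$, so every element of $\cR_h^*$ is bounded by $4L^2$. With these properties in hand, the main tool is the standard localized uniform concentration bound for star-shaped, uniformly bounded function classes, namely the peeling/localization machinery underlying Theorem 14.1 of \cite{wainwright_2019} that was already invoked to obtain \eqref{conc_f}: if $\delta_{\cR,h}$ is an upper bound on the critical radius of $\cR_h^*$ and we set $\delta_{\cR,h,\delta}=\delta_{\cR,h}+c_0\sqrt{\log(c_1/\delta)/K}$ to absorb the deviation term, then with probability at least $1-\delta$, uniformly over all $g\in\cR_h^*$,
\begin{align*}
\Big|\frac1K\sum_{k=1}^{K}g\big(x_h^{(k)},z_h^{(k)}\big)-\EE[g]\Big|\lesssim \delta_{\cR,h,\delta}\,\|g\|_2+\delta_{\cR,h,\delta}^2.
\end{align*}

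Specializing this inequality to $g=g_{R_h}$ and noting $\|g_{R_h}\|_2=\sqrt{\EE\{[(R_h-R_h^*)(x_h)f_{R_h}(z_h)]^2\}}$ then yields the claimed bound simultaneously for every $R_h\in\RR_h$. The main obstacle is establishing the displayed localized concentration inequality in the correct multiplicative-plus-additive form $\delta_{\cR,h,\delta}\|g\|_2+\delta_{\cR,h,\delta}^2$: this requires a peeling argument that slices $\cR_h^*$ into shells $\{g:2^m\delta_{\cR,h}\le\|g\|_2\le 2^{m+1}\delta_{\cR,h}\}$, controls the empirical process on each shell through the local Rademacher complexity (using that the critical inequality $\cR_K(\delta,\cR_h^*)\le\delta^2/(4L^2)$ holds at $\delta_{\cR,h}$), applies Talagrand's concentration inequality to upgrade the Rademacher bound to a high-probability bound on each shell, and finally takes a union bound whose geometric summability is guaranteed by the star-shaped structure. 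The boundedness by $4L^2$ is what enters the Talagrand step and fixes the numerical constants, while the $\sqrt{\log(c_1/\delta)/K}$ term in $\delta_{\cR,h,\delta}$ accounts for the confidence level $\delta$; since all elements of $\cR_h^*$ are bounded, no subgaussian-noise tail enters and hence no factor $L_{K,\beta,1}$ appears, consistent with the statement.
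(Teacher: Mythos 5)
Your proposal is correct and follows essentially the same route as the paper: after observing that the reward noise cancels in the difference, the paper also reduces the claim to a uniform localized concentration bound over the bounded, star-shaped product class $\cR_h^*$, which it obtains by applying Lemma 11 of \cite{Foster2019} with the loss $\ell(a,b)=a$ (the same multiplicative-plus-additive bound you derive via the peeling machinery of Theorem 14.1 of \cite{wainwright_2019}). Your additional remarks on boundedness by $4L^2$ and the absence of the $L_{K,\beta,1}$ factor are consistent with, and slightly more explicit than, the paper's two-line argument.
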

\begin{proof}
See \S\ref{auxiliary} for a detailed proof.
\end{proof}
Next, we use the conclusion of Lemma \ref{conc_rf} to derive a lower bound for $ \sup_{f\in \cF}\{ \Phi_K(\hat R_h,f)-\Phi_K(R_h^*,f)-\|f\|_{2,K}^2\}$. Here we recall that we denote $f_{\hat R_h}=\mathbb{T}(\hat R_h-R_h^*)=\EE[\hat R_h(x_h)-R_h^*(x_h)\given z_h]$ in \eqref{frh}. By our Assumption \ref{assume:functionclass}, we assume for all $R_h\in \RR_h,$ $\mathbb{T}(R_h-R_h^*)\in \cF$. We next divide our analysis into two cases.

When $\|f_{\hat R_h}\|_2\le \delta_{\cR,h,\delta},$
we have
\begin{align*}
  \sup_{f\in \cF}\Big\{ \Phi_K(\hat R_h,f)-\Phi_K(R_h^*,f)-\|f\|_{2,K}^2\Big\}\ge \Phi_K\big(\hat R_h,f_{\hat R_h}\big)-\Phi_K\big(R^*_h,f_{\hat R_h}\big)-\|f_{\hat R_h}\|_{2,K}^2, 
\end{align*}
since $f_{\hat R_h}$ belongs to $\cF.$
By Lemma \ref{conc_rf}, with probability $1-\delta,$ we further have
\begin{align*}
   \Phi_K(\hat R_h,f_{\hat R_h})-\Phi_K(R^*_h,f_{\hat R_h})-\|f_{\hat R_h}\|_{2,K}^2&\ge  \Phi(\hat R_h,f_{\hat R_h})-\Phi(r^*_h,f_{\hat R_h})-\delta_{\cR,h,\delta}\|f_{\hat R_h}\|_2-\delta_{\cR,h,\delta}^2-\|f_{\hat R_h}\|_{2,K}^2
   \\&\ge \EE\big[(\hat R_h(x_h)-R_h^*(x_h))\EE[\hat R_h(x_h)-R_h^*(x_h)\given z_h]\big]-C_3\delta_{\cR,h,\delta}^2
   \\&\ge 0-C_3\delta_{\cR,h,\delta}^2.
\end{align*}
Here $C_3$ is an absolute constant.
The second inequality holds since we assume $\|f_{\hat R_h}\|_2\le \delta_{\cR,h,\delta},$ and the fact that $\|f_{\hat R_h}\|_{2,K}^2\le 1.5\|f_{\hat R_h}\|_2^2+\delta_{\cR,h,\delta}^2$ according to \eqref{conc_f}.

When $\|f_{\hat R_h}\|_2\ge \delta_{\cR,h,\delta},$ we let $\kappa=\xi_{\cR,h,\delta}/(2\|f_{\hat R_h}\|_2)\in [0,0.5].$ We have $\kappa f_{\hat R_h}\in \cF$ as $\cF$ is assumed to be star-shaped in Assumption \ref{assume:functionclass}. In this case, we have
\begin{align}
    \sup_{f\in \cF}\Big\{ \Phi_K(\hat R_h,f)-\Phi_K(R_h^*,f)-\|f\|_{2,K}^2\Big\}&\ge \kappa\big[\Phi_K(\hat R_h,f_{\hat R_h})-\Phi_K(R_h^*,f_{\hat R_h})\big]-\kappa^2\|f_{\hat R_h}\|_{2,K}^2.\nonumber\\
    &\ge\kappa\big[\Phi(\hat R_h,f_{\hat R_h})-\Phi(R_h^*,f_{\hat R_h})\big]\nonumber\\&\qquad-\kappa\big(\delta_{\cR,h,\delta}\|f_{\hat R_h}\|_2+\delta_{\cR,h,\delta}^2\big)-\kappa^2\|f_{\hat R_h}\|_{2,K}^2\notag\\&\ge 0-C_4\delta_{\cR,h,\delta}^2. \label{ineq1}
\end{align}
Here $C_4$ is an absolute constant.
The second inequality holds by Lemma \ref{conc_rf}. The last inequality follows from several facts. First, we have  $\kappa \delta_{\cR,h,\delta}\|f_{\hat R_h}\|_2 \lesssim \delta_{\cR,h,\delta}^2,${ by the definition of } $\kappa=\delta_{\cR,h,\delta}/(2\|f_{\hat R_h}\|_2).$ Next, we obtain $\delta_{\cR,h,\delta}^2,$ since $\kappa\in [0,0.5].$ Moreover, we obtain $\kappa^2\|f_{\hat R_h}\|_{2,K}^2\le \kappa^2\big(1.5\|f_{\hat R_{h}}\|_2^2+\delta_{\cR,h,\delta}^2\big)\lesssim \delta_{\cR,h,\delta}^2$. Combining these together, we obtain \eqref{ineq1}.

Finally, combining the upper bound for $\mathbf{(I)}$ and lower bound for $\mathbf{(II)},$ we have
\begin{align*}
    \cL_K(R_h^*)-\cL_K(\hat R_h)\lesssim L_{K,\beta,1}^2\delta_{\cR,h,\delta}^2.
\end{align*}
Thus, we conclude our proof of Lemma \ref{contain_true}.
\end{proof}
\subsection{Proof of Lemma \ref{Pessmistic}}\label{subsec:pessmistic}
In this subsection, we will prove Lemma \ref{Pessmistic}, which provides an upper bound of the suboptimality. 
\begin{proof}
We let $\cM$ be the product of the confidence sets $\{(\bar{\cR}_h,\bar{\cG}_h)\}_{h=1}^{H}.$ We have $M^*=\{(\bar{R}_h^*,\bar{\PP}_h^*)\}_{h=1}^{H}\in \cM$ with high probability, i.e. the true model lies in the confidence set with high probability by Lemma \ref{contain_true}.

For any given policy $\pi$, we denote $\hat J(\pi)=\min_{M\in \cM}J(M,\pi).$
We have
\begin{align*}
&J(M^*,\pi^*)-J(M^*,\hat\pi)\\&=J(M^*,\pi^*)-\hat J(\pi^*)+\hat J(\pi^*)-J(M^*,\hat\pi)\\
&=J(M^*,\pi^*)-\hat J(\pi^*)+\hat J(\pi^*)-\hat J(\hat\pi)+\hat J(\hat\pi)-J(M^*,\hat\pi)\\
&\le J(M^*,\pi^*)-\hat J(\pi^*)+\hat J(\hat\pi)-J(M^*,\hat\pi)\\&\le J(M^*,\pi^*)-\hat J(\pi^*).
\end{align*}

The first inequality follows from the fact that $\min_{M\in \cM}J(M,\pi^*)\le \min_{M\in \cM}J(M,\hat\pi)$ according to our selection of $\hat\pi$ given in \eqref{est_policy}. The second inequality holds by pessimism, namely, $M^*\in \cM$ and $\hat J(\hat\pi)=\min_{M\in \cM}J(M,\hat \pi)\le J(M^*,\hat\pi).$ Thus, we have
\begin{align*}
\textrm{SubOpt}(\hat\pi)\le J(M^*,\pi^*)- \min_{M\in\cM}J(M,\pi^*).
\end{align*}

We next let $\tilde{M}:=\{(\bar{\tilde{R}}_h,\bar{\tilde{P}}_h)\}_{h=1}^{H}=\argmin_{M\in \cM}J(M,\pi^*).$ In addition, we let $\tilde{M}_h=\{(\bar{\tilde{R}}_j,\bar{\tilde{\PP}}_j)\}_{j=h}^{H}$ and $M_h^*=\{(\bar{R}_j^*,\bar{\PP}_j^*)\}_{j=h}^{H}$ are models starting from stage $h$. Moreover, we define $\bar{V}_{M_h}^{\pi}(s)$ as the total value function evaluated by policy $\pi$ on model $M_h.$

Next, we expand the expression of $J(M^*,\pi^*)-J(\tilde{M},\pi^*)$. The intuition is similar with the simulation lemma given in \cite{sun2019model} and we deduce this in our own setting.
By definition of the value function, we have
\begin{align*}
    & J(M^*,\pi^*)-J(\tilde{M},\pi^*) \notag\\
    & \qquad =-\bigg\{\EE_{a_1\sim \pi_1^*(\cdot \given s_1), s_1\sim  \rho_0}\bigg[\EE_{s_2\sim \bar{\tilde{P}}_1(\cdot\given s_1,a_1)}\Big(\bar{\tilde{ R}}_1(s_1,a_1)+V_{\tilde{M}_2}^{\pi^*}\Big)
    \\&\qquad\qquad  -\EE_{s_2\sim\bar{{P}}^*_1(\cdot\given s_1,a_1)}\Big(\bar{\tilde{ R}}_1(s_1,a_1)+\bar{V}_{\tilde{M}_2}^{\pi^*}(s_2)\Big) +\EE_{s_2\sim\bar{{P}}^*_1(\cdot\given s_1,a_1)}\Big(\bar{\tilde{ R}}_1(s_1,a_1)+\bar{V}_{\tilde{M}_2}^{\pi^*}(s_2)\Big) \\&\qquad\qquad  -\EE_{s_2\sim\bar{{P}}^*_1(\cdot\given s_1,a_1)}\Big( \bar{R}_1^{*}(s_1,a_1))+\bar{V}_{M_2^*}^{\pi^*}(s_2)\Big)\bigg]\bigg\}.
    \end{align*}
After a direct calculation, we have
    \begin{align*}
    J(M^*,\pi^*)-J(\tilde{M},\pi^*)&=-\EE_{a_1\sim \pi_1^*(\cdot\given s_1),s_1\sim\rho_0}\bigg\{ \int \bar{V}_{\tilde{M}_2}^{\pi^*}(\cdot)\Big[\ud\bar{\tilde{P}}_1(\cdot\given s_1,a_1)-\ud \bar{P}_1^*(\cdot \given s_1,a_1) \Big]\\&\qquad+\Big[(\bar{\tilde{R}}_1-\bar{R}_1^*)(s_1,a_1)\Big]\bigg\}+\EE_{s_2\sim \bar{P}_1^*(\cdot \given s_1,a_1)}\Big[\bar{V}_{M_2^*}^{\pi^*}(s_2)-\bar{V}_{\tilde{M}_2}^{\pi^*}(s_2)\Big].
   \end{align*}
   After expanding $\bar{V}_{M_h^*}^{\pi^*}(s_h)-\bar{V}_{\tilde{M}_h}^{\pi^*}(s_h),h\ge 2$ in the same way as above,  we obtain
   \begin{align*}
    J(M^*,\pi^*)-J(\tilde{M},\pi^*)&=-\sum_{h=1}^{H}\EE_{a_h\sim \pi_h^*(\cdot\given s_h),s_h\sim d_{h-1}^{\pi^*}}\bigg[\int  \bar{V}^{\pi^*}_{\tilde{M}_{h+1}}(\cdot)\Big(\ud\bar{\tilde{P}}_h-\ud \bar{P}_h^*\Big)(\cdot\given s_h,a_h) \bigg]
    \\&\qquad-\sum_{h=1}^{H}\EE_{a_h\sim \pi_h^*(\cdot\given s_h),s_h\sim d_{h-1}^{\pi^*}}\Big[(\bar{\tilde{ R}}_h-\bar{R}_h^*)(s_h,a_h)\Big],
\end{align*}
where $d^{\pi^*}_{h-1}$ represents the distribution of state $s_h$, which is generated by following policy $\pi^*$ and the true transition functions $\{\bar{\PP}_j^*\}_{j=1}^{h-1}$ defined in \eqref{transit_new2}.

As we only consider bounded functions in our defined function classes, thus $H$ is an upper bound for our value functions under all policy $\pi$ and model all $M\in \cM$.  Finally, according the decomposition given above, we have
\begin{align*}
   J(M^*,\pi^*)-J(\tilde{M},\pi^*)&\le \sum_{h=1}^{H}\EE_{a_h\sim \pi_h^*(\cdot \given s_h),\,s_h\sim d^{\pi^*}_{h-1}}\bigg[2H\cdot \Big\|(\bar{\tilde{P}}_h-\bar{P}_h^*)(\cdot \given s_h,a_h)\Big\|_1\\&\qquad+\Big|\Big(\bar{\tilde{R}}_h-\bar{R}_h^*\Big)(s_h,a_h)\Big|\bigg].
\end{align*}
Thus, we finish our proof of Lemma \ref{Pessmistic}.
\end{proof}
\subsection{Proof of Lemma \ref{project_MSE}}\label{subsec:PMSE}
In this subsection, for any given $h\in [H]$, we will prove that for all $R_h\in \cR_h,G_h\in\cG_h$, 
\begin{align*}
    &\EE_{\rho}\bigg[\Big(\EE_{\rho}\big[R_h(s_h,a_h,o_h)-R_h^{*}(s_h,a_h,o_h) \given s_h,a_h\big]\Big)^2\bigg]~~\textrm{and}\\
&\EE_{\rho}\bigg[\Big\|\EE_{\rho}\big[G_h(s_h,a_h,o_h)-G_h^{*}(s_h,a_h,o_h) \given s_h,a_h\big]\Big\|_2^2\bigg] 
\end{align*}
are small with high probability. Here the distribution $\rho$ denotes the sampling distribution. Similar to the proof of Lemma \ref{contain_true}, here we also only prove the case for $R_h,$ as the proof for every coordinate of $G_h$ is almost the same. 
\begin{proof}
For all $R_h\in \cR_h$, we have
\begin{align}\label{ineq2}
    & \sup_{f\in \cF}\Big\{\Phi_K(R_h,f)-\frac{1}{2}\|f\|_{2,K}^2\Big\}\notag \\
    & \qquad \ge \sup_{f\in \cF}\Big\{ \Phi_K(R_h,f)-\Phi_K(R_h^*,f)-\|f\|_{2,K}^2\Big\}-\sup_{f\in \cF}\Big\{\Phi_K(R_h^*,f)-\frac{1}{2}\|f\|_{2,K}^2\Big\},
\end{align}
by using similar argument of \eqref{symmetric}.
By the definition of $\cL_K(\cdot)$  in \S\ref{planning}, $\forall R_h\in \cR_{h},$ with probability $1-\delta-1/K^{\beta}$, we have
\begin{align}
    \sup_{f\in \cF}\Big\{\Phi_K(R_h,f)-\Phi_K(R_h^*,f)-\|f\|_{2,K}^2\Big\}&\le \cL_K(R_h)+\cL_K(R_h^*)\nonumber
    \\&\le \cL_K(\hat R_h)+C_5L_{K,\beta,1}^2\delta_{\cR,h,\delta}^2+\cL_K(R_h^*)\nonumber\\&\le 2\cL_K(R_h^*)+C_5L_{K,\beta,1}^2\delta_{\cR,h,\delta}^2\nonumber\\&\lesssim L_{K,\beta,1}^2\delta_{\cR,h,\delta}^2.\label{upperbound_rh}
\end{align}
Here the first inequality follows from \eqref{ineq2}. The second inequality holds since $R_h\in \cR_h$ and $\cL_K(R_h)\lesssim \cL_K(R_h^*)+C_5L_{K,\beta,1}^2\delta_{\cR,h,\delta}^2$ by the definition of $\cR_h.$ The third inequality holds by the definition of $\hat R_h.$ The last inequality follows from \eqref{ub_1}.

Next, we prove that $\EE_{\rho}[(\EE_{\rho}[R_h(s_h,a_h,o_h)-R_h^{*}(s_h,a_h,o_h) \given s_h,a_h])^2]$ is small.  We assume there exists $R_h\in \cR_h$ such that $\|f_{R_{h}}\|_2=\sqrt{\EE_{\rho}[\mathbb{T}(R_h-R_h^*)^2]}\ge L_{K,\beta,1}\delta_{\cR,h,\delta}$, otherwise we obtain our conclusion directly. 
For such a $R_h$, we let $\kappa_{R_h}=L_{K,\beta,1}\delta_{\cR,h,\delta}/(2\|f_{R_h}\|_2),$  and we have $\kappa_{R_h}\in[0,0.5]$.
Thus, $\forall R_h\in\cR_h$, when $\|f_{R_{h}}\|_2=\sqrt{\EE_{\rho}[\mathbb{T}(R_h-R_h^*)^2]}\ge L_{K,\beta,1}\delta_{\cR,h,\delta}$, with probability $1-\delta$, we obtain
\begin{align}
   & \sup_{f\in\cF}\Big\{\Phi_K(R_h,f)-\Phi_K(R_h^*,f)-\|f\|_{2,K}^2\Big\}\notag\\&\qquad\ge \kappa_{R_h}\big\{\Phi_K(R_h,f_{R_h})-\Phi_K(R_h^*,f_{R_h})\big\}-\kappa_{R_h}^2\|f_{R_h}\|_{2,K}^2
    \nonumber\\&\qquad\ge \kappa_{R_h}\big[\Phi(R_h,f_{R_h})-\Phi(R_h^*,f_{R_h})\big]-\kappa_{R_h}^2\|f_{R_h}\|_{2,K}^2\nonumber\\&\qquad\qquad-\kappa_{R_h}\big(\delta_{\cR,h,\delta}\|f_{R_h}\|_2+\delta_{\cR,h,\delta}^2\big)\nonumber\\& \qquad\ge \kappa_{R_h}\big[\Phi(R_h,f_{R_h})-\Phi(R_h^*,f_{R_h})\big]-C_6L_{k,\beta,1}^2\delta_{\cR,h,\delta}^2\nonumber\\&\qquad=\frac{L_{k,\beta,1}\delta_{\cR,h,\delta}}{2}\sqrt{\EE_{\rho}\big[\mathbb{T}(R_h-R_h^*)^2\big]}-C_6L_{k,\beta,1}^2\delta_{\cR,h,\delta}^2.\label{lowerbound_rh}
\end{align}
The first inequality holds since $\cF$ is star-shaped and thus $\kappa_{R_h}f_{R_h}\in \cF,$ when $$\|f_{R_{h}}\|_2=\sqrt{\EE_{\rho}[\mathbb{T}(R_h-R_h^*)^2]}\ge L_{K,\beta,1}\delta_{\cR,h,\delta}.$$
The second inequality holds uniformly for all $R_h$ with high probability by Lemma \ref{conc_rf}. The third inequality follows by several facts. First, we obtain $ \kappa_{R_h} \delta_{\cR,h,\delta}\|f_{R_h}\|_2 \lesssim L_{K,\beta,1}\delta_{\cR,h,\delta}^2,$ by the definition that  $\kappa_{R_h}=L_{K,\beta,1}\delta_{\cR,h,\delta}/(2\|f_{R_h}\|_2).$ Second, it holds that $\kappa_{R_h} \delta_{\cR,h,\delta}^2\lesssim  \delta_{\cR,h,\delta}^2$, since $\kappa_{R_h}\in [0,0.5]$. Third, we obtain $\kappa_{R_h}^2\|f_{R_h}\|_{2,K}^2\le \kappa_{R_h}^2\big(1.5\|f_{ R_{h}}\|_2^2+\delta_{\cR,h,\delta}^2\big)\lesssim L_{K,\beta,1}^2\delta_{\cR,h,\delta}^2.$ Combining these together, we obtain \eqref{lowerbound_rh}.

Finally, combining our results obtained in \eqref{upperbound_rh} and \eqref{lowerbound_rh}, with probability $1-\delta-1/K^{\beta},$ we have
\begin{align*}
    \sqrt{\EE_{\rho}\Big[\EE_{\rho}\big[R_h(s_h,a_h,o_h)-R_h^*(s_h,a_h,o_h)\given s_h,a_h\big]^2\Big]}\lesssim L_{K,\beta,1}\delta_{\cR,h,\delta}.
\end{align*}
Similarly, we are also able to obtain \eqref{radius_g} with probability $1-\delta-1/(d_1K)^{\beta}$ by following the similar proof. To be more specific, one only needs to replace $R_h(\cdot),R_h^{(k)}$ by $G_{h,j}(\cdot),s_{h+1,j}$ for each $j\in [d_1]$ and obtain the upper bounds for $\sqrt{\EE_{\rho}[\mathbb{T}(G_{h,j}-G_{h,j}^*)^2]}$ for all $j.$ Putting all pieces together, we conclude our proof of Lemma \ref{project_MSE}.
\end{proof}

\section{Proof of Theorem \ref{sub_optimality_approx}}
\begin{proof}[Proof of Theorem \ref{sub_optimality_approx}]
In this section, we will provide theoretical proof for Theorem \ref{sub_optimality_approx}. Recall that we let $\RR_h$ and $\mathbb{G}_{h,j}$ be the true function classes for $R_h^*(\cdot)$ and $G_{h,j}^*(\cdot), \textrm{ for all } j\in [d_1],h\in [H].$  However, we use misspecified function classes $\tilde{\RR}_h,\tilde{\mathbb{G}}_{h,j}$ to estimate our model. In addition, $ \textrm{ for all } h\in [H], j\in [d_1],$ we let $R_h^0:=\argmin_{R_h\in \tilde{R}_h}\|R_h-R_h^{*}\|_{\infty}, G_{h,j}^0:=\argmin_{G_{h,j}\in \tilde{G}_{h,j}}\|G_{h,j}-G_{h,j}^{*}\|_{\infty}.$ 

First, we prove $R_h^0,G_{h,j}^0$ belongs to the confidence sets defined in \eqref{confidence_r} and \eqref{confidence_g} (by replacing $\RR_h$ and $\mathbb{G}_{h,j}$ with $\tilde{\RR}_h,\tilde{G}_{h,j},$ respectively). We summarize the conclusion in the following Lemma \ref{contain_true}.
\begin{lemma}\label{contain_true_mis}
 For any $h\in [H],$  with probability $1-\delta-1/(d_1K)^{\beta}$, we have
\begin{align*}
R_h^{0}\in \cR_h^M&=\Big\{R_h\in \tilde{\RR}_h: \cL_n(R_h)-\cL_n(\hat R_h)\lesssim L_{K,\beta,1}^2\delta_{\cR,h,\delta}^2+\eta_{K,r,h}^2 \Big\},\\
G_h^0\in \cG_h^M&= (\cG_{h,1}^M,\cG_{h,2}^M,\cdots,\cG_{h,d_1})^M, \textrm{ where we define} \\ \cG_{h,j}&=\Big\{G_{h,j}\in \tilde{\mathbb{G}}_{h,j}: \cL_n(G_{h,j})-\cL_n(\hat G_{h,j})\lesssim L_{K,\beta,1}^2\delta_{\cG,h,\delta}^2+\eta_{K,G,h}^2\Big\}.
\end{align*}
 Here we have $\delta_{\cR,h,\delta}=\delta_{\cR,h}+c_1\sqrt{\log(c_0/\delta)/K},\delta_{\cG,h,\delta}=\delta_{\cG,h}+c_2\sqrt{\log(c_0d_1/\delta)/K}$ with $c_0,c_1,c_2$ being absolute constants. Moreover, $\delta_{\cG,h}$ is an upper bound of the maximum critical radius of $\cF,\tilde{\cG}^{*M}_{h,j}, \textrm{ for all } j\in[d_1]$ (given in \eqref{ghm}) and $\delta_{\cR,h}$ is an upper bound of the maximum radius of $\cF,\tilde{\cR}_h^{*M}$ (given in \eqref{rhm}). Meanwhile, we define $L_{K,\beta,d_1}:=L+\sigma\sqrt{(\beta+1)\log Kd_1}$. 
\end{lemma}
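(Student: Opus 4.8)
The plan is to follow the proof of Lemma \ref{contain_true} almost verbatim, the only change being that the best-in-class approximant $R_h^0$ (respectively $G_{h,j}^0$) replaces the true function $R_h^*$ (respectively $G_{h,j}^*$) as the reference point, which introduces an approximation-bias term of order $\eta_{r,h,K}$ (respectively $\eta_{G,h,K}$). I will argue the reward case with $d_1=1$; the transition coordinates follow identically with $(\eta_{r,h,K},\delta_{\cR,h,\delta},L_{K,\beta,1})$ replaced by $(\eta_{G,h,K},\delta_{\cG,h,\delta},L_{K,\beta,d_1})$, and a union bound over $h\in[H]$ and $j\in[d_1]$ produces the claimed probability. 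As in the well-specified proof, set $\mathbf{(I)}:=\cL_K(R_h^0)$ and $\mathbf{(II)}:=\cL_K(\hat R_h)$, so that the quantity to control is $\cL_K(R_h^0)-\cL_K(\hat R_h)=\mathbf{(I)}-\mathbf{(II)}$. Because $\cF$ is star-shaped it contains $f\equiv 0$, and evaluating the supremum in $\cL_K$ at $f\equiv 0$ gives $\mathbf{(II)}=\cL_K(\hat R_h)\ge 0$. Hence it suffices to establish the upper bound $\mathbf{(I)}\lesssim L_{K,\beta,1}^2\delta_{\cR,h,\delta}^2+\eta_{r,h,K}^2$.

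The single new ingredient is that the moment identity \eqref{identify_r}, which forced the population objective to vanish at $R_h^*$, now leaves a residual at $R_h^0$. Writing $\Phi(R_h,f):=\EE_\rho[(R_h(x_h)-r_h)f(z_h)]$ and using $r_h=R_h^*(x_h)+g_h$ together with $\EE_\rho[g_h\given z_h]=0$ (the instrument property of Lemma \ref{lem-instrument}), the cross term $\EE_\rho[g_h f(z_h)]$ vanishes and
\begin{align*}
\Phi(R_h^0,f)=\EE_\rho\big[(R_h^0-R_h^*)(x_h)\,f(z_h)\big]\le \|R_h^0-R_h^*\|_\infty\,\|f\|_2\le \eta_{r,h,K}\,\|f\|_2,
\end{align*}
by Cauchy--Schwarz and Assumption \ref{approximation_error}. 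Thus, instead of vanishing, the population objective at $R_h^0$ contributes a term that is linear in $\|f\|_2$ with slope $\eta_{r,h,K}$.

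With this estimate I would rerun the inequalities \eqref{ineq_1}--\eqref{ub_1}. The concentration inequality of Lemma \ref{conc_phi} applies unchanged to the fixed, data-independent function $R_h^0$: the effective residual $R_h^0(x_h)-r_h=(R_h^0-R_h^*)(x_h)-g_h$ is subgaussian up to the bounded additive shift $\eta_{r,h,K}$, so the loss stays $L_{K,\beta,1}$-Lipschitz in $f$ and the localized complexity is now governed by the critical radius of $\tilde{\cR}_h^{*,M}$ from \eqref{rhm} and of $\cF$, i.e.\ by $\delta_{\cR,h}$. Combining the concentration bound with the empirical-to-population transfer \eqref{conc_f}--\eqref{ineq_2} gives, in analogy to \eqref{ineq_3},
\begin{align*}
\mathbf{(I)}\lesssim \sup_{f\in\cF}\Big\{\big(\eta_{r,h,K}+L_{K,\beta,1}\delta_{\cR,h,\delta}\big)\sqrt{\EE_\rho[f^2(z_h)]}-\frac{1}{4}\EE_\rho[f^2(z_h)]\Big\}+L_{K,\beta,1}^2\delta_{\cR,h,\delta}^2.
\end{align*}
Applying the elementary bound \eqref{general_ineq}, $\sup_f(a\|f\|-b\|f\|^2)\le a^2/(4b)$, with $a=\eta_{r,h,K}+L_{K,\beta,1}\delta_{\cR,h,\delta}$ and $b=1/4$, and then $(u+v)^2\le 2u^2+2v^2$, yields $\mathbf{(I)}\lesssim L_{K,\beta,1}^2\delta_{\cR,h,\delta}^2+\eta_{r,h,K}^2$, exactly the radius defining $\cR_h^M$. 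Together with $\mathbf{(II)}\ge 0$ this gives $R_h^0\in\cR_h^M$.

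The only place that requires genuine care---and hence the main obstacle---is verifying that the concentration and critical-radius machinery of Lemma \ref{conc_phi}, originally built around the true functions, transfers cleanly to the approximant $R_h^0$. Since $R_h^0$ is a fixed element of $\tilde{\RR}_h$ differing from $R_h^*$ only by a uniformly bounded $\eta_{r,h,K}$, the subgaussian tail of the effective noise and all relevant local Rademacher complexities are unchanged up to absolute constants, so the bias enters only additively and is absorbed into the final threshold. It is worth emphasizing that the dual approximation errors $\xi_{r,h,K},\xi_{G,h,K}$ of Assumption \ref{approximation_error} play no role in this containment lemma; they are needed only downstream, when converting projected mean-squared errors into the suboptimality bound of Theorem \ref{sub_optimality_approx}. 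Consequently the confidence-set radius depends solely on the primal errors $\eta_{r,h,K}$ and $\eta_{G,h,K}$, as stated.
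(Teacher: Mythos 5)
Your proposal is correct and follows essentially the same route as the paper's own proof: bound $\cL_K(R_h^0)$ by combining the concentration of Lemma \ref{conc_phi} with the bias estimate $\Phi(R_h^0,f)\le \eta_{r,h,K}\|f\|_2$ and the quadratic bound \eqref{general_ineq}, and observe that $\cL_K(\hat R_h)\ge 0$ since $0\in\cF$. The only cosmetic difference is that the paper writes the bias step as $|\Phi(R_h^0,f)-\Phi(R_h^*,f)|\le\eta_{K,r,h}\|f\|_2$ and then invokes $\Phi(R_h^*,f)=0$, which is identical to your direct computation.
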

\begin{proof}
See \S\ref{proof:contain_mis} for a detailed proof.
\end{proof}

Next, we prove that, for any given $h\in [H]$, for all $R_h\in \tilde{\cR}_h,G_h\in\tilde{\cG}_h,$ 
\begin{align*}
    &\EE\Big[\big(\EE\big[R_h(s_h,a_h,o_h)-R_h^{0}(s_h,a_h,o_h) \given s_h,a_h\big]\big)^2\Big],\textrm{ and}\\
&\EE\Big[\big\|\EE\big[G_h(s_h,a_h,o_h)-G_h^{0}(s_h,a_h,o_h) \given s_h,a_h\big]\big\|_2^2\Big] 
\end{align*}
are small with high probability.
We summarize this property in the following Lemma \ref{project_MSE_mis}.
\begin{lemma}\label{project_MSE_mis}
For all $R_h,G_h$ given in $\tilde{\cR}_h,\tilde{\cG}_h$, with  probability $1-H\delta-H/(Kd_1)^{\beta}$, we have
\begin{align}
\sqrt{\EE_{\rho}\Big[\big(\EE_{\rho}\big[R_h(s_h,a_h,o_h)-R_h^{0}(s_h,a_h,o_h) \given s_h,a_h\big]\big)^2\Big]}&\lesssim L_{K,\beta,1}\delta_{\cR,h,\delta}+\xi_{K,r,h}+\frac{\eta_{K,r,h}^2}{\delta_{\cR,h,\delta}}, \label{radius_rm}\\
\sqrt{\EE_{\rho}\Big[\big\|\EE_{\rho}\big[G_h(s_h,a_h,o_h)-G_h^{0}(s_h,a_h,o_h) \given s_h,a_h\big]\big\|_2^2\Big]}&\lesssim d_1L_{K,\beta,1}\delta_{\cG,h,\delta}+\xi_{K,G,h}+\frac{\eta_{K,G,h}^2}{\delta_{\cG,h,\delta}}, \label{radius_gm}
\end{align}
where $\xi_{K,r,h},\xi_{K,G,h},\eta_{K,r,h},\eta_{K,G,h}$ are approximation errors defined in \S\ref{sec:misspecified}.
\end{lemma}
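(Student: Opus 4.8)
The plan is to mirror the well-specified argument in \S\ref{subsec:PMSE} (the proof of Lemma \ref{project_MSE}), while carefully tracking the two extra sources of error that misspecification introduces: the primal approximation error $\eta_{K,r,h}$, measuring how far the best in-class approximant $R_h^0$ sits from the true $R_h^*$ in $\ell_\infty$-norm, and the dual approximation error $\xi_{K,r,h}$, measuring how well $\cF$ approximates the projected residual $\mathbb{T}(R_h-R_h^0)$. I would prove only the reward bound \eqref{radius_rm} in detail; the transition bound \eqref{radius_gm} then follows verbatim by replacing $(R_h,R_h^0,r_h^{(k)})$ with $(G_{h,j},G_{h,j}^0,s_{h+1,j}^{(k)})$ and summing the per-coordinate bounds over $j\in[d_1]$.

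First I would fix $R_h\in\tilde{\cR}_h$ and, using the symmetry of $\cF$ exactly as in \eqref{symmetric} but with $R_h^0$ as the reference point, write
\begin{align*}
\sup_{f\in\cF}\Big\{\Phi_K(R_h,f)-\tfrac{1}{2}\|f\|_{2,K}^2\Big\}\ge \sup_{f\in\cF}\Big\{\Phi_K(R_h,f)-\Phi_K(R_h^0,f)-\|f\|_{2,K}^2\Big\}-\sup_{f\in\cF}\Big\{\Phi_K(R_h^0,f)-\tfrac{1}{2}\|f\|_{2,K}^2\Big\}.
\end{align*}
The middle term is the quantity I must pinch between an upper and a lower bound. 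For the upper bound, by the definition of the confidence set $\cR_h^M$ in Lemma \ref{contain_true_mis} together with the misspecified analogue of the $\mathbf{(I)}$-estimate proved there, one has $\cL_K(R_h^0)\lesssim L_{K,\beta,1}^2\delta_{\cR,h,\delta}^2+\eta_{K,r,h}^2$; the extra $\eta_{K,r,h}^2$ appears because $\Phi(R_h^0,f)=\EE_\rho[(R_h^0-R_h^*)(x_h)f(z_h)]$ is no longer zero but is bounded by $\eta_{K,r,h}\|f\|_2$, so the elementary inequality \eqref{general_ineq} contributes a term of order $\eta_{K,r,h}^2$. Combining with $R_h\in\cR_h^M$ gives
\begin{align*}
\sup_{f\in\cF}\Big\{\Phi_K(R_h,f)-\Phi_K(R_h^0,f)-\|f\|_{2,K}^2\Big\}\lesssim L_{K,\beta,1}^2\delta_{\cR,h,\delta}^2+\eta_{K,r,h}^2.
\end{align*}

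The lower bound is where I expect the main difficulty. Let $t:=\sqrt{\EE_\rho[\mathbb{T}(R_h-R_h^0)^2]}$ be the target PMSE and let $f_{R_h}=\argmin_{f\in\cF}\|f-\mathbb{T}(R_h-R_h^0)\|_2$, so $\|f_{R_h}-\mathbb{T}(R_h-R_h^0)\|_2\le\xi_{K,r,h}$ by Assumption \ref{approximation_error}. I may assume $\|f_{R_h}\|_2\ge L_{K,\beta,1}\delta_{\cR,h,\delta}$, since otherwise the triangle inequality gives $t\lesssim L_{K,\beta,1}\delta_{\cR,h,\delta}+\xi_{K,r,h}$ directly. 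Setting $\kappa=L_{K,\beta,1}\delta_{\cR,h,\delta}/(2\|f_{R_h}\|_2)\in[0,\tfrac12]$, star-shapedness of $\cF$ gives $\kappa f_{R_h}\in\cF$, and applying the misspecified analogue of the concentration Lemma \ref{conc_rf} over the star hull $\tilde{\cR}_h^{*,M}$ defined in \eqref{rhm} lower bounds the middle term by $\kappa\big[\Phi(R_h,f_{R_h})-\Phi(R_h^0,f_{R_h})\big]$ up to $\cO(L_{K,\beta,1}^2\delta_{\cR,h,\delta}^2)$ fluctuations. Using the population identity $\Phi(R_h,f_{R_h})-\Phi(R_h^0,f_{R_h})=\EE_\rho[\mathbb{T}(R_h-R_h^0)\,f_{R_h}]\ge t^2-t\,\xi_{K,r,h}$ (Cauchy--Schwarz against the dual residual) and $\|f_{R_h}\|_2\le t+\xi_{K,r,h}$, this lower bound is of order $L_{K,\beta,1}\delta_{\cR,h,\delta}\cdot t-\cO\big(L_{K,\beta,1}\delta_{\cR,h,\delta}\,\xi_{K,r,h}+L_{K,\beta,1}^2\delta_{\cR,h,\delta}^2\big)$.

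Finally I would equate the two bounds to obtain
\begin{align*}
L_{K,\beta,1}\delta_{\cR,h,\delta}\cdot t\lesssim L_{K,\beta,1}^2\delta_{\cR,h,\delta}^2+\eta_{K,r,h}^2+L_{K,\beta,1}\delta_{\cR,h,\delta}\,\xi_{K,r,h},
\end{align*}
and dividing through by $L_{K,\beta,1}\delta_{\cR,h,\delta}$ yields $t\lesssim L_{K,\beta,1}\delta_{\cR,h,\delta}+\xi_{K,r,h}+\eta_{K,r,h}^2/\delta_{\cR,h,\delta}$, which is exactly \eqref{radius_rm}. The main obstacle is the bookkeeping in the lower bound: because $R_h^0\ne R_h^*$ I must carry the perturbed reference consistently through both the concentration lemma and the population identity, and because $\mathbb{T}(R_h-R_h^0)$ need not lie in $\cF$ I must pay the $\xi_{K,r,h}$ gap while still extracting the quadratic form $t^2$. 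Notably, the division by $\kappa\sim\delta_{\cR,h,\delta}^{-1}$ is precisely what converts the additive $\eta_{K,r,h}^2$ from the upper bound into the $\eta_{K,r,h}^2/\delta_{\cR,h,\delta}$ term in the final rate. A union bound over $h\in[H]$ and $j\in[d_1]$, together with the sub-Gaussian tail constant $L_{K,\beta,d_1}$ for the transition residuals, then delivers the stated high-probability guarantee.
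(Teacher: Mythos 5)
Your proposal follows the same route as the paper's proof in \S\ref{proof:mse_mis}: the symmetrized decomposition around the reference point $R_h^0$, the upper bound $\lesssim L_{K,\beta,1}^2\delta_{\cR,h,\delta}^2+\eta_{K,r,h}^2$ obtained from membership in the confidence set together with the $\eta_{K,r,h}\|f\|_2$ perturbation of $\Phi(R_h^0,f)$, the lower bound via the rescaled dual witness $\kappa f_{R_h}$ with the $\xi_{K,r,h}$ gap paid through Cauchy--Schwarz, and the final division by $L_{K,\beta,1}\delta_{\cR,h,\delta}$ that produces the $\eta_{K,r,h}^2/\delta_{\cR,h,\delta}$ term. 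This is essentially identical to the paper's argument, including the correct attribution of where each of the three error terms originates.
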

\begin{proof}
See \S\ref{proof:mse_mis} for a detailed proof.
\end{proof}
Finally we will provide an upper bound for the suboptimality of $\hat\pi^M$ given in \eqref{est_policy_2}.

Recall that for all $ h\in [H], j\in [d_1],$ we let $$R_h^0:=\argmin_{R_h\in \tilde{R}_h}\|R_h-R_h^{*}\|_{\infty}, \qquad\qquad  G_{h,j}^0:=\argmin_{G_{h,j}\in \tilde{G}_{h,j}}\|G_{h,j}-G_{h,j}^{*}\|_{\infty}$$ in Assumption \ref{approximation_error}. In the following, we denote $$\bar{R}_h^0=\int_{o_h,i_h}[R_h^0(s_h,a_h,o_h)+f_{1h}(i_h)]\ud F_h(o_h\given s_h,a_h,i_h)\ud P_h(i_h).$$ In addition, we also define $\bar{P}_h^0$ in the same way as in \eqref{transit_new2} using $G_{h}^0=\{G_{h,j}^0\}_{j=1}^{d_1}$. Moreover, we let $\tilde{M}^*:=\{(\bar{R}_h^0,\bar{P}_h^0)\}_{h=1}^{H}.$


 The suboptimality is able to be written as:
\begin{align*}
   J(M^*,\pi^*)-J(M^*,\hat\pi^M) &=\Big[J(M^*,\pi^*)-J(\tilde{M}^*,\pi^*)\Big]+\Big[J(\tilde{M}^*,\pi^*)-J(\tilde{M}^*,\hat\pi^M)\Big]\\&\qquad+\Big[J(\tilde{M}^*,\hat\pi^M)-J(\tilde{M}^*,\pi^*)\Big]\\&=\mathbf{(i)}+\mathbf{(ii)}+\mathbf{(iii)}.
\end{align*}
Next, we provide an upper bound for $\mathbf{(ii)}.$ The way of controlling $\mathbf{(ii)}$ is almost the same as controlling the suboptimality without misspecification. To be more specific, we also define $\hat J(\pi) =\min_{M\in {\cM}^M}J(M,\pi), $ where $\cM^M:=\{(\bar{\cG}^M_h,\bar{\cR}^M_h)\}_{h=1}^H$ is defined in \S\ref{sec:misspecified}. We have
\begin{align*}
J(\tilde{M}^*,\pi^*)-J(\tilde{M}^*,\hat\pi^M)&=J(\tilde{M}^*,\pi^*)-\hat J(\pi^*)+\hat J(\pi)-J(\tilde{M}^*,\hat\pi^M)\\& \le J(\tilde{M}^*,\pi^*)-\hat J(\pi^*)+\hat J(\hat\pi)-J(\tilde{M}^*,\hat\pi^M)\\&\le  J(\tilde{M}^*,\pi^*)-\hat J(\pi^*).
\end{align*}
The first inequality follows from our definition of $\hat\pi^M.$ The second inequality follows from and pessimism and Lemma \ref{contain_true_mis} since $\tilde{M}^*=\{(\bar{R}_h^0,\bar{P}_h^0)\}_{h=1}^H$ lies in $\cM^M$ with high probability.

We next let  $\tilde{M}=\argmin_{M\in \cM^M}J(M,\pi^*),$ with $\tilde{M}:=\{(\bar{\tilde{\PP}}_h,\bar{\tilde{R}}_h)\}_{h=1}^H.$
Following similar arguments given in Lemma \ref{Pessmistic} or the simulation lemma in \cite{sun2019model}, we obtain
\begin{align*}
    \mathbf{(ii)}&=\Big[J(\tilde{M}^*,\pi^*)-J(\tilde{M}^*,\hat\pi^M)\Big]\\&\le H\sum_{h=1}^{H}\EE_{a_h\sim \pi^*(\cdot \given s_h),\,s_h\sim d^{\pi^*}_{h-1}}\bigg[\Big\|\Big(\bar{\tilde{ \PP}}_h-\bar{\PP}_h^0\Big)(\cdot\given s_h,a_h)\Big\|_1\bigg]\\&\qquad+\sum_{h=1}^{H}\EE_{a_h\sim \pi^*(\cdot \given s_h),\,s_h\sim d^{\pi^*}_{h-1}}\bigg[\Big|\bar{\tilde{ R}}_h(s_h,a_h)-\bar{R}_h^{0}(s_h,a_h)\Big| \bigg].
    \end{align*}
 Combining Lemma   \ref{project_MSE_mis} and following the similar to the derivations of \eqref{proof_g} and \eqref{proof_r}, we obtain 
    \begin{align*}
     \mathbf{(ii)}&\lesssim H\sum_{h=1}^{H}\tau_{K,G,h}\sqrt{d_1}\Bigg(L_{K,\beta,1}\delta_{h,\cG,\delta}+\xi_{K,G,h}+\frac{\eta^2_{K,G,h}}{\delta_{h,\cG,\delta}}\Bigg)\\&\qquad+ \sum_{h=1}^{H}\tau_{K,r,h}\bigg(L_{K,\beta,1}\delta_{\cR,h,\delta}+\xi_{K,r,h}+\frac{\eta_{K,r,h}^2}{\delta_{\cR,h,\delta}}\bigg).
\end{align*}

Next, we will prove an  upper bound for $\mathbf{(i)}$, and $\mathbf{(iii)}$ can be bounded in the same way. Following similar arguments in Lemma \ref{Pessmistic}, we obtain
\begin{align}
    \mathbf{(i)}&\lesssim  H\sum_{h=1}^{H}\EE_{a_h\sim \pi^*(\cdot \given s_h),\,s_h\sim d^{\pi^*}_{h-1}}\bigg[\Big\|\Big(\bar{{ \PP}}^0_h-\bar{\PP}_h^*\Big)(\cdot\given s_h,a_h)\Big\|_1\bigg]\notag\\&\qquad+\sum_{h=1}^{H}\EE_{a_h\sim \pi^*(\cdot \given s_h),\,s_h\sim d^{\pi^*}_{h-1}}\bigg[\Big|\bar{{ R}}^0_h(s_h,a_h)-\bar{R}_h^{*}(s_h,a_h)\Big| \bigg]=\mathbf{(iv)}+\mathbf{(v)}.\label{iv_and_v}
\end{align}
We next provide an upper bound for $\mathbf{(iv)}$ by following the similar idea of proving \eqref{proof_g}. In specific, we have
\begin{align}
    \mathbf{(iv)}&=H\sum_{h=1}^{H}\EE_{a_h\sim \pi^*(\cdot \given s_h),s_h\sim d^{\pi^*}_{h-1}}\Bigg[\bigg\|\int_{s_h,o_h} {\PP}^0(\cdot\given s_h,a_h,o_h,i_h)- \PP^*(\cdot\given s_h,a_h,o_h,i_h)\ud F_h(o_h\given s_h,a_h,i_h)\ud P_h(i_h)\bigg\|_1 \Bigg]\notag\\\notag& \le H\sum_{h=1}^{H} \EE_{a_h\sim \pi^*(\cdot \given s_h),s_h\sim d^{\pi^*}_{h-1}}\bigg[\int_{s_h,o_h}\big\|\ {\PP}^0(\cdot\given s_h,a_h,o_h,i_h)- \PP^*(\cdot\given s_h,a_h,o_h,i_h)\big\|_1\ud F_h(o_h\given s_h,a_h,i_h)\ud P_h(i_h) \bigg]\notag
   \\
    &\le H\sum_{h=1}^{H}\sqrt{\EE_{a_h\sim \pi^*(\cdot \given s_h),s_h\sim d^{\pi^*}_{h-1},i_h\sim P_h(\cdot),o_h\sim F_h(\cdot\given s_h,a_h,i_h)}\bigg[\textrm{TV}\Big({ \PP}^0_h(\cdot\given s_h,a_h,o_h,i_h),\PP_h^*(\cdot \given s_h,a_h,o_h,i_h)\Big)^2\bigg]}\notag
    \\ &  \lesssim H\sum_{h=1}^{H}\sqrt{\EE_{a_h\sim \pi^*(\cdot \given s_h),s_h\sim d^{\pi^*}_{h-1},o_h\sim \rho_{\pi}(\cdot\given s_h,a_h)}\bigg[\Big\|{G}_h^0( s_h,a_h,o_h)-G_h^*( s_h,a_h,o_h)\Big\|_2^2\bigg]}.\label{diff_g0}
\end{align}
The first inequality follows from Jensen's inequality and the second follows from Cauchy-Schwartz inequality. In addition, the third inequality follows from our assumption on Gaussian transition. 
Moreover, by our assumption on the approximation error in $\ell_{\infty}$-norm given in Assumption \ref{approximation_error}, we obtain 
\begin{align*}
    \eqref{diff_g0}\lesssim H\sum_{h=1}^{H} \sqrt{d_1}\max_{j\in [d_1]}\|G_{h,j}^0-G_{h,j}^*\|_{\infty}\le H\sum_{h=1}^{H} \sqrt{d_1}\eta_{K,G,h}
\end{align*}

For $\mathbf{(v)},$ following similar proof procedure, we have
\begin{align}
    \mathbf{(v)}\le \sum_{h=1}^{H}\eta_{K,r,h}.\label{v}
\end{align}
In terms of upper bounding the term $\mathbf{(iii)}$, we only need to replace $\pi^*$ by $\hat\pi$ and the other procedures will remain as the same. The reason is that we have obtained the upper bound of the difference between $G_{h,j}^0$ and $G_{h,j}^*$ (also $R_h^0$ and $R_h^*$) in $\ell_{\infty}$-norm and it is robust  to all probability measures including $\pi^*$ and $\hat \pi$. Thus, the proof of bounding $\mathbf{(iii)}$ is the same with the proof of $\mathbf{(i)}.$
 
Combining all results for $\mathbf{(i)},\mathbf{(ii)}$ and $\mathbf{(iii)}$, we have
\begin{align*}
V_{M^*}^{\pi^*}-V_{M^*}^{\hat \pi}&\le H\sum_{h=1}^{H}C_{\pi^*}\tau_{K,G,h}\sqrt{d_1}\Bigg(L_{K,\beta,1}\delta_{h,\cG,\delta}+\xi_{K,G,h}+\frac{\eta_{K,G,h}^2}{\delta_{h,\cG,\delta}}\Bigg)\\&\qquad+ \sum_{h=1}^{H}C_{\pi^*}\tau_{K,r,h}\Bigg(L_{K,\beta,1}\delta_{\cR,h,\delta}+\xi_{K,r,h}+\frac{\eta_{K,r,h}^2}{\delta_{\cR,h,\delta}}\Bigg)\\&\qquad + H\sum_{h=1}^{H} \sqrt{d_1}\eta_{K,G,h}+\sum_{h=1}^{H}\eta_{K,r,h}.
\end{align*}
We then conclude the proof of Theorem \ref{sub_optimality_approx}.
\end{proof}

\subsection{Proof of Lemma \ref{contain_true_mis}}\label{proof:contain_mis}
\begin{proof}
Here we will prove that $R_h^0$ falls in $\cR^M_h,$ and one is able to derive the corresponding proof for every coordinate of $G_h^0$ in a similar way. To simplify our notation, we use $x_h^{(k)}=(s_h^{(k)},a_h^{(k)},o_h^{(k)})$ and let $z_h^{(k)}=(s_h^{(k)},a_h^{(k)})$ and 
in the following proof. We note that here we omit the embedding functions $\phi_x(\cdot),\psi_z(\cdot)$ as we mentioned in the main body of this paper.
 
First, we have
\begin{align}
    \cL_K\big(R_h^0\big)-\cL_K\big(\hat R_h\big)&=\sup_{f\in \cF}\Bigg\{\frac{1}{K}\sum_{k=1}^{K}\Big(R_h^0\Big(x_h^{(k)}\Big)-R_h^{(k)}\Big)f\Big(z_h^{(k)}\Big)-\frac{1}{2K}\sum_{k=1}^{K}f^2\Big(z_h^{(k)}\Big) \Bigg\}\nonumber\\&\qquad-\sup_{f\in \cF}\Bigg\{\frac{1}{K}\sum_{k=1}^{K}\Big(\hat R_h\Big(x_h^{(k)}\Big)-R_h^{(k)}\Big)f\Big(z_h^{(k)}\Big)-\frac{1}{2K}\sum_{i=1}^{K}f^2\Big(z_h^{(k)}\Big) \Bigg\}\nonumber\\&:=\mathbf{(I)}-\mathbf{(II)}. \nonumber
\end{align}
Next, we aim at obtaining an upper bound for $\mathbf{(I)}.$
Similarly, we define
\begin{align*}
    \Phi_K(r,f)&=\frac{1}{K}\sum_{k=1}^{K}\Big(R_h\Big(x_h^{(k)}\Big)-R_h^{(k)}\Big)f\Big(z_h^{(k)}\Big),\textrm{ and}\\
    \Phi(r,f)&=\EE\big[(r(x_h)-r_h)f(z_h)\big].
\end{align*}
We have
\begin{align*}
    \mathbf{(I)}=\sup_{f\in \cF}\Bigg\{\Phi_K\big(R_h^0,f\big)-\frac{1}{2K}\sum_{k=1}^{K}f^2\Big(z_h^{(k)}\Big) \Bigg\}.
\end{align*}
Next, according the conclusion from Lemma \ref{conc_phi}, with probability $1-\delta-1/K^{\beta}$, we have
\begin{align}
    \mathbf{(I)}\le \sup_{f\in \cF}\Bigg\{\Phi\big(R_h^0,f\big)-\frac{1}{2K}\sum_{k=1}^{K}f^2\Big(z_h^{(k)}\Big) +C_1L_{K,\beta,1}\big(\delta_{\cR,h,\delta}\|f(z)\|_2+\delta_{\cR,h,\delta}^2\big) \Bigg\}. \label{ineq_1_m}
\end{align}
According to the expression of $\Phi(R_h^0,f)=\EE[(R_h^0(x_h)-R_h)f(z_h)],$ $\forall f\in \cF,$ we have
\begin{align*}
    \Big|\Phi\big(R_h^0,f\big)-\Phi\big(R_h^*,f\big)\Big|\le \|(R_h^0-R_h^*)(x_h)\|_{\infty}\|f(z_h)\|_{2}\le \eta_{K,r,h}\|f(z_h)\|_2.
\end{align*}
Combing this inequality with \eqref{ineq_1_m}, we obtain
\begin{align*}
    \eqref{ineq_1_m}\le \sup_{f\in \cF}\Bigg\{\Phi\big(R_h^*,f\big)-\frac{1}{2K}\sum_{k=1}^{K}f^2\Big(z_h^{(k)}\Big)+ (C_1L_{K,\beta}\delta_{\cR,h,\delta}+\eta_{K,r,h})\|f(z_h)\|_2+C_1L_{K,\beta,1}\delta_{\cR,h,\delta}^2\Bigg\}.
\end{align*}
Similar to the derivations of \eqref{ineq_2}, \eqref{ineq_3} and \eqref{general_ineq}, we have
\begin{align*}
    \mathbf{(I)}\lesssim L_{K,\beta,1}^2\delta_{\cR,h,\delta}^2+\eta_{K,r,h}^2.
\end{align*}
We have $\mathbf{(II)}\ge 0$ as $f=0$ is contained in the function class $\cF.$
Thus, we have $R_0$ lies in the constructed level set. Similarly, we are also able to prove that $G_{h,j}^{0}$ lies in the level set given in \eqref{confidence_g_m} for any $j\in[d_1],h\in [H]$.
\end{proof}
\subsection{Proof of Lemma \ref{project_MSE_mis}}\label{proof:mse_mis}
\begin{proof}
Here we only prove the case for $R_h(\cdot),$ since the case for $G_{h,j}(\cdot)$ follows in the same way. 

For all $R_h\in \tilde{\cR}_h$, we have
\begin{align*}
    \sup_{f\in \cF}\Big\{\Phi_K(R_h,f)-\frac{1}{2}\|f\|_{2,K}^2\Big\}\ge \sup_{f\in \cF}\Big\{ \Phi_K(R_h,f)-\Phi_K\big(R_h^0,f\big)-\|f\|_{2,K}^2\Big\}-\sup_{f\in \cF}\Big\{\Phi_K\big(R_h^0,f\big)-\frac{1}{2}\|f\|_{2,K}^2\Big\},
\end{align*}
by using similar argument as in \eqref{symmetric}.
Similarly, by Lemma \ref{contain_true_mis} and our definition of $\cL_K(\cdot)$ given in \S\ref{sec:misspecified}, we have
\begin{align}\label{upper_2_m}
    \sup_{f\in \cF}\Big\{ \Phi_K(R_h,f)-\Phi_K\big(R_h^0,f\big)-\|f\|_{2,K}^2\Big\}\lesssim L_{K,\beta,1}^2\delta_{\cR,h,\delta}^2+\eta_{K,r,h}^2.
\end{align}
Recall that we define $f_{R_h}=\argmin_{f\in \cF}\|f(z_h)-\mathbb{T}(R_h-R_h^0)(z_h)\|_2=\sqrt{\EE_{z_h}[(f(z_h)-\mathbb{T}(R_h-R_h^0)(z_h))^2]}.$
Without loss of generality, we assume there exist some $R_h\in \tilde{\RR}_h$ such that $\|f_{R_h}\|_2\ge L_{K,\beta,1}\delta_{\cR,h,\delta}+\xi_{K,r,h}+\eta_{K,r,h}^2/\delta_{\cR,h,\delta},$ otherwise, we obtain 
$\|\mathbb{T}(R_h-R_h^0)\|_2\lesssim L_{K,\beta,1}\delta_{\cR,h,\delta}+\xi_{K,r,h}+\eta_{K,r,h}^2/\delta_{\cR,h,\delta}$   for all $R_h\in \tilde{\RR}_h$, directly by Assumption \ref{approximation_error} and triangle inequality.
For those $R_h(\cdot)$, we let $\kappa_{R_h}=L_{K,\beta,1}\delta_{\cR,h,\delta}/2\|f_{R_h}\|_2$ and we have $\kappa_{R_h}\in[0,0.5].$ We then obtain
\begin{align*}
     &\sup_{f\in \cF}\Big\{ \Phi_K(R_h,f)-\Phi_K\big(R_h^0,f\big)-\|f\|_{2,K}^2\Big\}\\&\qquad\ge \kappa_{R_h}\big(\Phi_K(R_h,f_{R_h})-\Phi_K\big(R_h^0,f_{R_h}\big)\big)-\kappa_{R_h}^2\|f_{R_h}\|_{2,K}\\&\qquad\ge \kappa_{R_h}\big[\Phi(R_h,f_{R_h})-\Phi\big(R_h^0,f_{R_h}\big)\big]-\kappa_{R_h}^2\|f_{R_h}\|_{2,K}^2\\&\qquad\qquad-\kappa_{R_h}\big(\delta_{\cR,h,\delta}\|f_{R_h}\|_2+\delta_{\cR,h,\delta}^2\big)\\&\qquad\ge L_{K,\beta,1}\delta_{\cR,h,\delta}\big\|\mathbb{T}\big(R_h-R_h^0\big)\big\|_2-L_{K,\beta,1}\delta_{\cR,h,\delta}\xi_{K,r,h}-CL_{K,\beta,1}^2\delta_{\cR,h,\delta}^2.
\end{align*}
The first inequality holds since $\cF$ is star-shaped and $\kappa_{R_h}f_{R_h}\in \cF$. The second inequality holds uniformly for all $R_h$ by a similar argument of Lemma \ref{conc_rf}, where we only need to replace the definition of $f_{R_h}:=\mathbb{T}(R_h-R_h^*)$ by $f_{R_h}:=\argmin_{f\in \cF}\|f-\mathbb{T}(R_h-R_h^*)\|_2.$
The last inequality follows by several facts. First, we have $\kappa_{R_h} \delta_{\cR,h,\delta}\|f_{R_h}\|_2 \lesssim L_{K,\beta,1}\delta_{\cR,h,\delta}^2 $ by the definition of $\kappa_{R_h}=L_{K,\beta,1}\delta_{\cR,h,\delta}/(2\|f_{R_h}\|_2)$. Second, we obtain $\kappa_{R_h} \delta_{\cR,h,\delta}^2\lesssim  \delta_{\cR,h,\delta}^2$, since $\kappa_{R_h}\in [0,0.5]$. Last, we obtain $\kappa_{R_h}^2\|f_{R_h}\|_{2,K}^2\le \kappa_{R_h}^2\big(1.5\|f_{\hat R_{h}}\|_2^2+\delta_{\cR,h,\delta}^2\big)\lesssim L_{K,\beta,1}^2\delta_{\cR,h,\delta}^2.$ Combining these together, we conclude the last inequality.

In addition, Combining the conclusions given above with \eqref{upper_2_m}, we have
\begin{align*}
    L_{K,\beta,1}\delta_{\cR,h,\delta}\big\|\mathbb{T}\big(R_h-R_h^0\big)\big\|_2\lesssim L_{K,\beta,1}^2\delta_{\cR,h,\delta}^2+L_{K,\beta,1}\delta_{\cR,h,\delta}\xi_{K,r,h}+\eta_{K,r,h}^2.
\end{align*}
We finally obtain
\begin{align*}
    \big\|\mathbb{T}\big(R_h-R_h^0\big)\big\|_2\lesssim L_{K,\beta,1}\delta_{\cR,h,\delta}+\xi_{K,r,h}+\frac{\eta_{K,r,h}^2}{\delta_{\cR,h,\delta}}. 
\end{align*}
Similarly, we are also able to obtain an upper bound for 
$\EE[\|\EE[G_h(s_h,a_h,o_h)-G_h^{0}(s_h,a_h,o_h) \given s_h,a_h]\|_2^2]$.
Finally, we claim our proof of Lemma \ref{project_MSE_mis}.
\end{proof}

\section{Proof of Examples in \S\ref{sec:theory}}
In this section, we will provide theoretical proofs for our examples given in \S\ref{sec:theory}. We will give upper bounds for the suboptimality of our constructed policy under different function classes. To be more specific, we will prove our cases under linear function classes, RKHS and neural netwoks in \S\ref{prove_linear}, \S\ref{prove_rkhs} and \S\ref{subsec:proofnn}, respectively.
\subsection{Proof of Corollary \ref{example_linear}}\label{prove_linear}
The proof of Corollary \ref{example_linear} mainly requires computing the critical radius of \eqref{test_class}, \eqref{product_class} and \eqref{product_class_g}. We next aim at obtaining upper bounds for these critical radius mentioned above.

First, we will introduce some basic terminologies. An empirical $\epsilon$-cover of a function class $\cH$ is any function class $\cH_{\epsilon}$ such that for all $f\in \cH, \inf_{f_{\epsilon}\in \cH_{\epsilon}}\|f_{\epsilon}-f\|_{2,K}\le \epsilon$. For any given function class $\cH,$ we denote $N(\epsilon,\cH,S)$ as the smallest size of $\epsilon$-cover of $\cH.$ An empirical $\delta$-slice of $\cH$ is defined as $\cH_{S,\delta}=\{f\in \cH:\|f\|_{2,K}\le \delta \}. $ By Corollary 14.3 of \cite{wainwright_2019}, the empirical critical radius of the function class $\cH$ is upper bounded by any solution of
\begin{align}\label{sol_h}
    \int_{\delta^2/8}^{\delta}\sqrt{\frac{\log N(\epsilon,\cH_{S,\delta},S)}{K}}\ud \epsilon\le \frac{\delta^2}{20}.
\end{align}
We next make a relaxation by replacing $\cH_{S,\delta}$ by $\cH,$ in order to obtain an upper bound of the solution to \eqref{sol_h}. When $\cF$ defined in \eqref{test_class} only contains linear functions, by setting $\cH$ as $\cF$ and solving this \eqref{sol_h} for $\cF$, we ontain that the empirical critical raduis of $\cF$ is upper bounded by $\cO(\sqrt{m\log K/K})$. 

Next, we aim at getting an upper bound of the empirical critical radius of $\cR_h^*$. Since both $\RR_h$ and $\cF$ only contain uniformly bounded functions, then if $\RR_{h,\epsilon}$ is an empirical $\epsilon$-cover of $\RR_h$ and $\cF_{\epsilon}$ is an empirical $\epsilon$-cover of $\cF,$ we have $\{(R_{h,\epsilon}-R_h^*)f_{\epsilon}:R_{h,\epsilon}\in \RR_{h,\epsilon},f_{\epsilon}\in \cF_{\epsilon}\}$ acts as an empirical $C$-$\epsilon$ cover of the function class $\cR_{h}^*$ with $C$ being an absolute constant. Similar situation also holds for $\cG_{h,j}^*$. Thus, we have the empirical critical radius of $\cR_h^*$ is upper bounded by any solution to the following inequality
\begin{align}\label{linear_delta}
        \int_{\delta^2/8}^{\delta}\sqrt{\frac{\log N(\epsilon,\cF,S)}{K}+\frac{\log N(\epsilon,\RR_h,S)}{K}}\ud \epsilon\le \frac{\delta^2}{20}.
\end{align}
Since $\cF,\RR_h$ are linear function classes that only have finite dimensions, we have the solution to inequality \eqref{linear_delta} is
$\hat\delta_K=\cO(\sqrt{\max\{m,n_h\}\log K/K}).$ Moreover, as the function class $\cF,\RR_h$ only contain bounded functions, we have
\begin{align*}
    \delta_{h,\cR_h}=\cO\Bigg(\hat\delta_K+\sqrt{\frac{\log(1/\delta)}{K}}\Bigg),
\end{align*}
with probability $1-\delta$ with $\delta_{h,\cR_h}$ and $\hat\delta_K$ being the maximal critical radius and maximal empirical  critical radius of $\cF,\cR_h^*, \forall h\in [H],$ respectively, by Corollary 5 of \cite{dikkala2020}.
 In addition, we are also able to determine the value for $\delta_{h,\cG_h}$ by following similar proof procedure. We finally claim our proof of Corollary \ref{example_linear} by plugging in values of $\delta_{h,\cR_h}$ and $\delta_{h,\cG_h}$ in Theorem \ref{sub_optimality}.
\subsection{Proof of Corollary \ref{cor_rkhs}}\label{prove_rkhs}
In this subsection, we provide a proof for Corollary \ref{cor_rkhs}. We are interested in the local Rademacher complexity and those corresponding critical radius of function classes $\cF,\cR_h^*,\cG_{h,j}\forall j\in [d_1].$ Let $\{\lambda_i^{\cF}\}_{i=1}^{\infty}$ be the eigenvalues of $\cK_{\cF},$ we have
\begin{align*}
    \cR_K(\delta,\cF)\le\sqrt{  \frac{2}{K}\sum_{i=1}^{\infty}{\min\big\{\delta^2,4C_1\lambda_i^{\cF}\big\}}},
\end{align*}
where $C_1$ is an absolute constant by following Corollary 14.5 of \cite{wainwright_2019} stated in  \S\ref{auxillary_lem}.
Then the upper bound of the solution of $\cR_{K}(\delta,\cF)\le \delta^2$ is given by
\begin{align*}
    \delta_{\cF}=2\min_{j\in\NN}\Bigg\{\frac{j}{K}+\sqrt{\frac{2C_1}{K}\sum_{i=j+1}^{\infty}\lambda_i^{\cF}} \Bigg\}.
\end{align*}
Similar situation also holds for function classes $\cR_h^*,\cG_{h,j}^*,\forall j\in [d_1].$ 

Thus, when the eigenvalues of $\cK_{\cF},\cK_{\cR_h^*},\forall h\in [H],j\in [d_1]$ decay exponentially, we have $\|T(R_h-R_h^*)\|_2\lesssim  L_{K,d_1}\sqrt{{\log K}/{K}}+\sqrt{{\log(d_1/\delta)}/{K}}.$ The same situation also applies to $\cK_{\cG_{h,j}^*}$ and $\|T(G_{h,j}-G_{h,j}^*)\|_2.$ The same proof procedure also applies to the setting when eigenvalues of aforementioned kernels decay in polynomial speed.

Thus, by plugging these values into Theorem \ref{sub_optimality}, we conclude our proof of Corollary \ref{cor_rkhs}.
\subsubsection{Example: Neural Network}\label{example:nn}
In this subsection, we provide an example for Theorem \ref{sub_optimality_approx}, where we use the class neural network to approximate the underlying reward and transition functions.  We let $x_h:=\phi_x(s_h,a_h,o_h)$ with some bounded $\phi_x(\cdot)$ and $z_h:=\psi_z(s_h,a_h)$ with some embedding functions $\psi_x(\cdot),\psi_z(\cdot)$. 
In addition, for all $h\in [H]$, we assume $R_h^*(x_h)\in \RR_h,$ and $G_{h,j}^*(x_h)\in \mathbb{G}_{h,j}$, where $ \RR_h$ and $\mathbb{G}_{h,j}$ are true function classes that contain $R_h^*,G_{h,j}^*$. 
We next pose several needed assumptions on these function classes which build blocks for our theory.

\begin{assumption}\label{assump:appro_nn}
We assume true functions (reward and transition) fall in $\RR_h$ and $\mathbb{G}_{h,j},$ respectively, which are Sobolev balls with order $\alpha$ and input dimension $d$. In addition, we use misspecifed function classes $\tilde{\RR}_h,$ $\tilde{\mathbb{G}}_{h,j}$, namely, classes of ReLU neural networks with input dimension $d$ with $\cO(\log K)$ layers, and $\cO(K^{\frac{d}{2\alpha+d}})$ bounded weights. See \cite{yarotsky2017error} for a detailed introduction to Sobolev ball and ReLU neural networks. 
\end{assumption}
 When we consider using aforementioned neural networks to approximate the true functions in Sobolev function class, by Theorem 1 of \cite{yarotsky2017error} given in \S\ref{auxillary_lem}, we obtain for all $h\in [H],\forall j\in [d_1]$, 
\begin{align*}
    \eta_{K,r,h}^{N}:&=\min_{R_h\in \tilde{\RR}_h}\big\|(R_h-R_h^*)(\cdot)\big\|_{\infty}=\cO(K^{-\frac{\alpha}{2\alpha+d}}),\\
    \eta_{K,G,h}^{N}:&=\max_{j\in [d_1]}\min_{G_{h,j}\in \tilde{\RR}_h}\big\|(G_{h,j}-G_{h,j}^*)(\cdot)\big\|_{\infty}=\cO(K^{-\frac{\alpha}{2\alpha+d}}).
\end{align*}
We assume the minimizers exist and are defined as $R_h^0,G_{h,j}^0,\forall h\in [H],j\in [d_1].$

 Next, we put assumptions on $\cF$ and projected function classes $\{\mathbb{T}(R_h-R_h^0)(\cdot):=\EE[(R_h-R_h^0)(x_h)\given z_h=\cdot],R_h\in \tilde{\RR}_h\}$ and $\{\mathbb{T}(G_{h,j}-G_{h,j}^0)(\cdot):=\EE[(G_{h,j}-G_{h,j}^0)(x_h)\given z_h=\cdot],G_{h,j}\in \tilde{\mathbb{G}}_{h,j}\}$ as follows.

\begin{assumption}\label{ass:proj_nn}
 We assume $\mathbb{T}(R_h-R_h^0)(\cdot), \textrm{ for all }R_h\in\tilde{\RR}_h$ and $\mathbb{T}(G_{h,j}-G_{h,j}^0)(\cdot),\textrm{ for all } G_{h,j}\in\mathbb{G}_{h,j}$ fall in the Sobolev ball with order $\alpha$ and input dimension $d.$ In addition, we assume $\cF$ is the star hull (with center $0$) of a class of ReLU neural network with at most $\cO(\log K)$ layers, $\cO(K^{\frac{d}{2\alpha+d}})$ bounded weights and and input dimension $d$. 
\end{assumption}
Assumption \ref{ass:proj_nn} holds when the density of $\rho(x_h\given z_h)$ is smooth enough. For example, when $x_h\given z_h\sim N(\phi(z_h),
\tilde{\sigma}^2\II),$ we have $\rho(x_h\given z_h)=\tilde{\rho}(x_h-\phi(z_h))$. In this case, the conditional expectation  $T(R_h-R_h^0)(z_h)=\EE[(R_h-R_h^0)(x_h)\given z_h]$ is smooth as it is a convolution of a non-smooth function with a smooth one. Moreover, under  Assumption \ref{ass:proj_nn}, the upper bound of the approximation errors $\xi_{r,h,K},\xi_{G,h,K}$ involved in Assumption \ref{approximation_error} are given below:
 \begin{align*}
    \forall R_h\in \tilde{\RR}_h, :\min_{f\in \cF}\big\|f(z_h)-T\big(R_h-R_h^0\big)(z_h)\big\|_2&\le \xi_{r,h,K}=\cO(K^{-\frac{\alpha}{2\alpha+d}}),\\\forall  j\in [d_1], \forall G_{h,j}\in \tilde{\mathbb{G}}_{h,j}, \min_{f\in \cF}\big\|f(z_h)-T\big(G_{h,j}-G_{h,j}^0\big)(z_h)\big\|_2&\le\xi_{G,h,K}=\cO(K^{-\frac{\alpha}{2\alpha+d}}),
\end{align*}
 by Theorem 1 of \cite{yarotsky2017error}. Here the associated this probability measure is the sample distribution $\rho.$

Putting all pieces into Theorem \ref{sub_optimality_approx}, we have the following Corollary \ref{cor:nn}, which quantifies the suboptimality under misspecified function classes of neural networks.

\begin{corollary}\label{cor:nn}
Under Assumptions \ref{assume:concentrability}, \ref{ill-pose} (with replacing $R_h^*,G_{h,j}^*$ by $R_h^0$ and $G_{h,j}^0,$ $\forall h\in [H],j\in [d_1]$), \ref{assump:appro_nn} and \ref{ass:proj_nn}.  By constructing  our policy $\hat\pi$ by \algo, with probability $1-\delta-1/K,$ we obtain
\begin{align*}
\textrm{SubOpt}(\hat\pi)&\lesssim C_{\pi^*}\Bigg[\Bigg(H\sum_{h=1}^{H}\tau_{G,h,K}\sqrt{d_1}L_{K,d_1}+\sum_{h=1}^{H}\tau_{r,h,K} L_{K,1}\bigg)\cdot\bigg(K^{-\frac{\alpha}{2\alpha+d}}+\sqrt{\frac{\log(d_1/\delta)}{K}}\Bigg).
\Bigg]
\end{align*}
\end{corollary}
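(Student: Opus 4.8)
The plan is to derive Corollary \ref{cor:nn} as a direct instantiation of Theorem \ref{sub_optimality_approx}, by plugging in the explicit values of the approximation errors and critical radii that arise when the misspecified function classes $\tilde{\RR}_h,\tilde{\mathbb{G}}_{h,j}$ are ReLU networks approximating Sobolev balls. First I would recall from Theorem \ref{sub_optimality_approx} that, under the stated assumptions, the suboptimality is controlled by the sum over $h$ of terms of the form $\tau_{G,h,K}\sqrt{d_1}L_{K,d_1}(\delta_{\cG,h,\delta}+\xi_{G,h,K}+\eta_{G,h,K}^2/\delta_{\cR,h,\delta}+\eta_{G,h,K})$ and the analogous reward term. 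The entire content of the corollary is therefore the bookkeeping that shows each of these pieces collapses to the single rate $K^{-\alpha/(2\alpha+d)}+\sqrt{\log(d_1/\delta)/K}$.

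The key steps, in order, are as follows. First, invoke Assumption \ref{assump:appro_nn} together with Theorem 1 of \cite{yarotsky2017error} to obtain the $\ell_\infty$ primal approximation errors $\eta_{r,h,K}=\cO(K^{-\alpha/(2\alpha+d)})$ and $\eta_{G,h,K}=\cO(K^{-\alpha/(2\alpha+d)})$, which identifies $R_h^0,G_{h,j}^0$ as the network minimizers. Second, invoke Assumption \ref{ass:proj_nn} and again Theorem 1 of \cite{yarotsky2017error} to bound the dual approximation errors $\xi_{r,h,K},\xi_{G,h,K}=\cO(K^{-\alpha/(2\alpha+d)})$. Third, bound the critical radii $\delta_{\cR,h},\delta_{\cG,h}$ of $\cF$ and of the product classes $\tilde{\cR}_h^{*,M},\tilde{\cG}_{h,j}^{*,M}$ in \eqref{rhm}--\eqref{ghm}; since these are built from ReLU networks approximating Sobolev balls of smoothness $\alpha$ and input dimension $d$, their local Rademacher complexity yields a critical radius of order $K^{-\alpha/(2\alpha+d)}$, whence $\delta_{\cR,h,\delta},\delta_{\cG,h,\delta}=\cO(K^{-\alpha/(2\alpha+d)}+\sqrt{\log(d_1/\delta)/K})$. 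Fourth, with these rates in hand, observe that one may choose the network sizes so that $\delta_{\cR,h,\delta}=\Theta(\eta_{r,h,K})$ and $\delta_{\cG,h,\delta}=\Theta(\eta_{G,h,K})$, which activates the simplified form of Theorem \ref{sub_optimality_approx} in which the cross terms $\eta^2/\delta$ are absorbed and only $\delta_{\cdot,h,\delta}+\xi_{\cdot,h,K}$ survive. Substituting the common rate $K^{-\alpha/(2\alpha+d)}+\sqrt{\log(d_1/\delta)/K}$ into the surviving terms and factoring the $\tau$ and $L_{K,x}$ coefficients out of the sum yields the claimed bound.

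The main obstacle is the third step, namely controlling the critical radius of the \emph{product} classes $\tilde{\cR}_h^{*,M}$ and $\tilde{\cG}_{h,j}^{*,M}$ rather than of the network class alone. These are star hulls of products of a difference of networks with a projected function $f_{R_h}\in\cF$, so I would need a metric-entropy argument: since both factors are uniformly bounded, an $\epsilon$-cover of the product is obtained from $\epsilon$-covers of each factor, so $\log N(\epsilon,\tilde{\cR}_h^{*,M},S)\lesssim \log N(\epsilon,\tilde{\RR}_h,S)+\log N(\epsilon,\cF,S)$, exactly as in the linear case \eqref{linear_delta}. Feeding the Sobolev/ReLU entropy bound into the Dudley-type inequality \eqref{sol_h} and solving then gives the $K^{-\alpha/(2\alpha+d)}$ critical radius. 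The remaining care is to track the high-probability statement: each stage contributes a failure probability, and a union bound over $h\in[H]$ together with the choice $\beta=\log H$ (as in the proof of Theorem \ref{sub_optimality}) delivers the overall $1-\delta-1/K$ guarantee. Once these entropy computations are discharged, the rest is the routine substitution described above.
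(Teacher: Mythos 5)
Your proposal is correct and follows essentially the same route as the paper: plug the Yarotsky $\ell_\infty$ approximation rates into both the primal and dual error terms of Theorem \ref{sub_optimality_approx}, bound the critical radius of the product classes $\tilde{\cR}_h^{*,M},\tilde{\cG}_{h,j}^{*,M}$ by the maximum of the radii of the factor classes via the product-cover argument already used for \eqref{linear_delta}, and note that all quantities share the common rate $K^{-\alpha/(2\alpha+d)}$ so the $\eta^2/\delta$ cross terms are absorbed. The only cosmetic difference is that the paper cites the critical-radius computation for ReLU classes from prior work rather than re-deriving it from the entropy integral, which is what you propose to do explicitly.
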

\subsection{Proof of Corollary \ref{cor:nn}}\label{subsec:proofnn}
By Theorem \ref{sub_optimality_approx}, we observe that the upper bound only involves approximation errors $\xi_{r,h,K},\eta_{r,h,K},\xi_{G,h,K},\eta_{G,h,K}$ and critical radii $\delta_{\cG,h},\delta_{\cR,h}.$ Next, we will specify these terms under the setting of \S\ref{example:nn}.

The approximation error of ReLU neural networks with $\cO(\log K)$ layers and $\cO(K^{\frac{d}{2\alpha+d}})$ parameters to Sobolev ball with order $\alpha$  in $\ell_{\infty}$-norm is $\cO(K^{\frac{-\alpha}{2\alpha+d}})$. This conclusion  follows directly from Theorem 1 of \cite{yarotsky2017error}. 

We next provide an upper bound for the critical radii of $\cF,\tilde{\cG}_{h,j}^{*,M},\tilde{\cR}_h^{*,M}$. Note that $\tilde{\cR}_h^{*,M}$ is contained by
\begin{align*}
    \tilde{\cR}_h^{0,M}:=\Big\{c(R_h-R_h^0)\cdot f_h, R_h\in \tilde{\RR}_h,f_h\in \cF,c\in [0,1] \Big\}.
\end{align*}
Thus, the critical radii of $ \tilde{\cR}_h^{0,M}$ will give an upper bound of the  critical radii of $\tilde{\cR}_h^{*,M}$.
We next use CR$(\cH)$ to represent the critical radius of a function class $\cH$. Following the same way of deriving \eqref{linear_delta}, we have the CR$(\tilde{\cR}_h^{0,M})\le \max\{\textrm{CR}(\cF),\textrm{CR}(\tilde{\RR}_h)\}$  in the sense that it is upper bounded by the maximum critical radii of $\cF$ and $\tilde{\RR}_h.$
Similar situation also applies to $\tilde{\cG}_{h,j}^{*,M},$  i.e., CR$(\tilde{\cG}_{h,j}^{*,M})\le \max\{\textrm{CR}(\cF),\textrm{CR}(\tilde{\mathbb{G}}_{h,j})\}$. In addition, the critical radii of function classes $\cF,\tilde{\RR}_h,\tilde{\mathbb{G}}_{h,j}$ which are defined in \S\ref{example:nn}, are of order $\cO(K^{\frac{-\alpha}{2\alpha+d}})$ by the proof of Corollary 6.6 in \cite{uehara2021finite}. 

Finally, plugging in these values into the upper bound of Theorem \ref{sub_optimality_approx}, we conclude our proof of Corollary \ref{cor:nn}.
 

\section{Proof of Applications in \S\ref{application}}
\subsection{Proof of Proposition \ref{prop_reg}}
\begin{proof}
The proof of Proposition \ref{prop_reg} is a special case of the proof of Corollary \ref{example_linear}. Here we only consider strategic regression problem with time horizon $H=1$ and no state transition. Thus, we only need to replace plug in $m_h,n=d$ into the critical radii of linear function classes set $H=1$ in Corollary \ref{example_linear}. Putting all pieces together, we then finally conclude our proof of Proposition \ref{prop_reg}. 
\end{proof}
\subsection{Proof of Proposition \ref{Prop_bandit}}
\begin{proof} 
In this subsection, we will provide our proof for Proposition \ref{Prop_bandit}. First, with high probability, we have $r^*\in \cR$ by following a similar proof of Lemma \ref{contain_true}. 
By our construction of $\hat\pi$ using pessimism, we further have
\begin{align*}
    \textrm{SubOpt}(\hat\pi)&= J(R^*,\pi^*)-J(R^*,\hat\pi)
    \\&\le J(R^*,\pi^*)-\min_{R\in \cR}J(R,\pi^*)+\min_{R\in \cR}J(R,\hat\pi)-J(R^*,\hat\pi)\\&\le J(R^*,\pi^*)-\min_{R\in\cR}J(R,\pi^*)=:\mathbf{(i)}.
\end{align*}
Here the first inequality holds by the definition of $\hat\pi.$ The second inequality holds by pessimism. We denote $\tilde{R}=\argmin_{R\in \cR}J(R,\pi^*)$. We next provide an upper bound for $\mathbf{(i)}.$
\begin{align*}
    \mathbf{(i)}=\int R^*(o_t,a_{2t})-\tilde{R}(o_t,a_{2t})\ud \pi^*(a_{2t}\given o_t)\ud F(o_t\given a_{1t})\ud \pi^*(a_{1t}).
\end{align*}
Here $F(o_t\given a_{1t})=\int_{i_t}P(o_t\given a_{1t},i_t)\ud P(i_t)$ is the conditional distribution of $o_t$ given $a_{1t}.$
According to Cauchy-Schwartz inequality, we next have
\begin{align*}
    \mathbf{(i)}&\le \bigg(\int \big[R^*(o_t,a_{2t})-\tilde{R}(o_t,a_{2t})\big]^2\ud \pi^*(a_{2t}\given o_t)\ud F(o_t\given a_{1t})\ud \pi^*(a_{1t}) \bigg)^{1/2}\\&\le C_{\pi^*}\bigg(\int \big[R^*(o_t,a_{2t})-\tilde{R}(o_t,a_{2t})\big]^2\ud \rho(a_{1t},o_t,a_{2t}) \bigg)^{1/2}\\&\le C_{\pi^*}\tau_1\bigg(\int\bigg(\int [R^*(o_t,a_{2t})-\tilde{R}(o_t,a_{2t})]\ud \rho(o_t,a_{2t}\given a_{1t})\bigg)^2\ud \rho(a_{1t})\bigg)^{1/2}\\&= C_{\pi^*}\tau_{1}\big\|T\big(\tilde{R}-R^*\big)\big\|_2\lesssim C_{\pi^*}\tau_{1}L_{K,1}\delta_{\cR,\delta}.
\end{align*}
 The second inequality follows from the concentrability in Assumption \ref{bandit_concent} and the third inequality follows from the ill-posed condition in \S\ref{strategic_bandit}. Moreover, the last inequality follows by using similar proof procedure of Lemma \ref{project_MSE}, so we omit the details here.
It is worth to note that here $L_{K}=L+\sigma\sqrt{\log (K)}$ with $L$ being the upper bound of all functions in $\RR_1$. Moreover, we have $\delta_{\cR,\delta}=\delta_{\cR}+c_1\sqrt{\log(c_0/\delta)/K}$ with $ \delta_{\cR}$ being the upper bound of the critical radii of $\cF$ and $\cR^*:=\{c(R(a_{2},o)-R^*(a_{2},o))\cdot\EE[R(a_{2},o)-R^*(a_{2},o)\given z=a_1],\forall r\in \mathbb{R}_1,\forall c\in [0,1] \}.$
\end{proof}

\subsection{Noncompliant Agents in Recommendation System}\label{no-compliant}
This subsection discusses our application to the recommendation system with noncompliant agents, which is the example we describe in \S\ref{sec:algo}. To avoid redundancy, in this section, we only introduce the mathematical formulation of this model.
\begin{itemize}
    \item At stage $h\in[H]$, the principal first announces a recommendation $a_h$.
    \item The (noncompliant) myopic agent with private type $i_h\sim P_h(\cdot)$ takes an action $b$ that maximizes its immediate reward (utility) $R_{ah}^*(\cdot)$ given the suggested recommendation $a_h$, state variable $s_h$ and its private type $i_h$, namely, $$b_h=\argmax_{b}R_{ah}^*(a_{h},s_h,i_h,b).$$
    \item The principal finally observes the $o_h=b_h,$ which is the agent's actual choice.
    \item The principal receives a reward $r_h=R_h^*(s_h,b_h)+g_h,$ where $g_h=f_{1h}(i_h)+\epsilon_h$ with $\epsilon_h$ being subGaussian and independent of all other random variables.
    \item The system transits to next state variable $s_{h+1}\sim G_h(s_h,b_h)+\xi_h.$ We assume $\xi_h=f_{2h}(i_h)+\eta_h,$ where $\eta_h\sim N(0,\sigma^2\II)$ and is independent of all other random variables.
\end{itemize}

We observe that such a setting is  a special case of strategic MDP defined in \S\ref{model} with $o_h=b_h,$ for all $h\in[H]$.  Likewise, we focus on a target population of agents and for any given $h\in [H],$ we define a new MDP
which marginalizes the effect of strategic agents. In this scenario, we assume the marginal distribution of $i_h\sim P_h(\cdot),$ $R^*_{ah}(\cdot)$ and $f_{1h}(i_h),f_{2h}(i_h)$ are known in the planning stage. We then define  the new (marginalized) true reward function and transition distribution as follows:
\begin{align*}
    \bar{r}^*_h(s_h,a_h)&=\int_{i_h}\Big[R_h^*(s_h,b_h)+f_{1h}(i_h)\Big]\ud P_h(i_h),\\ 
     \bar{\PP}^*_h(\cdot\given s_h,a_h)&=\int_{i_h} \PP_h^*(\cdot\given s_h,b_h,i_h)\ud P(i_h),
\end{align*}
where $b_h=\argmax_{b}R_{ah}^*(a_{h},s_h,i_h,b)$ and $\PP_h^*(\cdot\given s_h,b_h,i_h)$ is $N(G_h(s_h,b_h)+f_{2h}(i_h),\sigma^2\II)$. Thus, the true model is given as $\{\bar{R}_h^*(s_h,a_h), \bar{\PP}^*_h(\cdot\given s_h,a_h)\}_{h=1}^{H},$
in the planning stage. Based on the defined reward and transition functions, we can determine our value function, Q-function, and Bellman equation in the same way as in \S\ref{planning}.

Next, we discuss the data collecting process. We sample $K$ trajectories 
$\{s_{h}^{(k)},a_{h}^{(k)},b_{h}^{(k)},r_h^{(k)}\}_{h=1,k=1}^{H,K}$ independently of $\{(s_{h}^{(k)},a_{h}^{(k)},b_{h}^{(k)})\}_{h=1}^{H}$ following a joint distribution $\rho: \{(\cS_h\times \cA_h\times \cB_h)\}_{h=1}^{H}\rightarrow \RR.$ 
We then optimize the policy based on our collected dataset using \algo. In specific, we obtain
 \begin{align}\label{est_policy_3}
    \hat\pi=\argmax_{\pi \in \Pi}\min_{M=\{(\bar{\PP}_h,\bar{R}_h)\}_{h=1}^{H}\in\{(\bar{\cG}_h,\bar{\cR}_h)\}_{h=1}^{H}}J(M,\pi).
\end{align}
Here $\bar{\cG}_h$ and $\bar{\cR}_h,\forall h\in[H]$ are confidence sets that contain $\bar{\PP}_h^*,\bar{R}_h^*,\forall h\in [H]$ constructed via our offline dataset $\{(s_{h}^{(k)},a_h^{(k)},b_h^{(k)},r_h^{(k)})\}_{k=1,h=1}^{K,H}$ with high probability. The procedure is almost the same with that in \S\ref{const_level_set},  the only difference is to replace $(a_h^{(k)},o_h^{(k)})$ by $b_h^{(k)}.$  Thus, we omit the details here.
Meanwhile, those needed assumptions are also similar with those in \S\ref{sec:well-specify}.
Therefore, we finally summarize the suboptimality of $\hat\pi$ in this application in the following Proposition \ref{sub_optimality_app3}.
\begin{proposition}
\label{sub_optimality_app3}
Under Assumptions \ref{assume:concentrability}, \ref{ill-pose} and \ref{assume:functionclass} (only replacing $(a_h,o_h)$ by $b_h$), with probability $1-\delta-1/K$, the suboptimality is upper bounded by $$\textrm{SubOpt}(\hat\pi)\lesssim C_{\pi^*}\Bigg[H\sum_{h=1}^{H}\tau_{G,{h},K}\sqrt{d_1}L_{K,d_1}\delta_{\cG,h,\delta}+\sum_{h=1}^{H}\tau_{r,h,K}L_{K,1}\delta_{\cR,h,\delta}\Bigg],$$ where $L_{K,d_1}=L+\sigma\sqrt{(\log H+1)\log (Kd_1)},$
$\delta_{\cR,h,\delta}=\delta_{\cR,h}+c_1\sqrt{\log(c_0H/\delta)/K},\delta_{\cG,h,\delta}=\delta_{\cG,h}+c_2\sqrt{\log(c_0Hd_1/\delta)/K}$ with $c_0,c_1,c_2$ being absolute constants. Here we let $\delta_{\cG,h}$ and $\delta_{\cR,h}$ be the upper bounds of the maximum critical radii of $\cF,\cG^{*}_{h,j}$ and $\cF,\cR_h^*,$ (with $\cF,\cG^{*}_{h,j},\cR_h^*$ being defined in \S\ref{sec:well-specify} by replacing $(a_h,o_h)$ with $b_h$).
\end{proposition}
\subsection{Proof of Proposition \ref{sub_optimality_app3}}
In this subsection, we provide our proof of Proposition \ref{sub_optimality_app3}. Like our proof for Theorem \ref{sub_optimality}, we are also able to prove Lemma \ref{contain_true}, \ref{Pessmistic}, \ref{project_MSE}, only by replacing $(s_h,o_h,a_h)$ by $(s_h,b_h).$ We let $\tilde{M}:=\{(\bar{\tilde{R}}_h,\bar{\tilde{\PP}}_h)\}_{h=1}^H.$ Leveraging these conclusions, we have
\begin{align}
   &  \EE_{a_h\sim \pi^*(\cdot \given s_h),s_h\sim d^{\pi^*}_{h-1}}\bigg[\Big\|\Big(\bar{\tilde{ \PP}}_h-\PP_h^*\Big)(\cdot\given s_h,a_h)\Big\|_1\bigg]\notag \\
   &\qquad= \EE_{a_h\sim \pi^*(\cdot \given s_h),s_h\sim d^{\pi^*}_{h-1}}\Bigg[\bigg\|\int_{i_h} \tilde{\PP}(\cdot\given s_h,b_h,i_h)- \PP^*(\cdot\given s_h,b_h,i_h)\ud P_h(i_h)\bigg\|_1 \Bigg]\notag\\&\qquad \le \EE_{a_h\sim \pi^*(\cdot \given s_h),s_h\sim d^{\pi^*}_{h-1}}\Bigg[\int_{i_h}\Big\|\ \tilde{\PP}(\cdot\given s_h,b_h,i_h)- \PP^*(\cdot\given s_h,b_h,i_h)\Big\|_1\ud P_h(i_h) \Bigg] \notag
   \\& \qquad \le\sqrt{\EE_{a_h\sim \pi^*(\cdot \given s_h), i_h\sim P_h(\cdot), s_h\sim d^{\pi^*}_{h-1}}\bigg[\textrm{TV}\Big(\tilde{\PP}(\cdot\given s_h,b_h,i_h),\PP^*(\cdot\given s_h,b_h,i_h)\Big)^2\bigg]}\label{proof_gp}
   \end{align}
 The first inequality follows form Jensen's inequality. The second inequality follows from Cauchy-Schwartz inequality. Furthermore, we obtain 
   \begin{align}
   \eqref{proof_gp} & \lesssim\sqrt{\EE_{a_h\sim \pi^*(\cdot \given s_h), b_h\sim P_h(\cdot\given s_h,a_h), s_h\sim d^{\pi^*}_{h-1}}\bigg[\Big\|\tilde{G}_h(s_h,b_h)-G_h^*( s_h,b_h)\Big\|_2^2\bigg]}\notag
    \\& \lesssim C_{\pi^*} \sqrt{\EE_{\rho(s_h,b_h)}\bigg[\Big\|\tilde{G}_h( s_h,b_h)-G_h^*(s_h,b_h)\Big\|_2^2\bigg]}\notag
    \\ & \lesssim C_{\pi^*} \tau_{K,G,{h}}\sqrt{\EE_{\rho(s_h,a_h)}\bigg[\Big\|\EE_{\rho(b_h\given s_h,a_h)}\big[\tilde{G}_h(s_h,b_h)-G_h^{*}(s_h,b_h) \given s_h,a_h\big]\Big\|_2^2\bigg]}\notag
    \\ &\lesssim C_{\pi^*} \tau_{K,G,{h}}\sqrt{d_1}L_{K,\beta,d_1}\delta_{h,\cG,\delta}.\label{proof_gp}
\end{align}
Here $P_h(\cdot\given s_h,a_h)$ is the distribution of $b_h=\argmax R_{ah}(s_h,a_h,i_h)$ given $s_h,a_h,R_{ah}(\cdot)$ and the distribution of $i_h\sim P_h(\cdot).$
 The first inequality follows from our model assumption on Gaussian transition given $(s_h,b_h,i_h)$. 
Moreover, the second inequality follows from our Assumption \ref{assume:concentrability}. Meanwhile, the third inequality follows from ill-posed condition in Assumption \ref{ill-pose}. Finally, similar to the derivation of \eqref{radius_g}, we obtain last inequality.
Likewise, we also obtain
\begin{align}
    &\EE_{a_h\sim \pi^*(\cdot \given s_h),s_h\sim d^{\pi^*}_{h-1}}\bigg[\Big|\bar{\tilde{ R}}_h(s_h,a_h)-\bar{R}_h^{*}(s_h,a_h)\Big| \bigg]\notag\\ &\qquad \le
    \EE_{a_h\sim \pi^*(\cdot \given s_h),s_h\sim d^{\pi^*}_{h-1}}\bigg[\int_{i_h}\big|(\tilde{R}_h-{R}_h^*)(s_h,b_h)\big|\ud P_h(i_h)\bigg]\notag
    \\&\qquad\lesssim \sqrt{\EE_{a_h\sim \pi^*(\cdot \given s_h),s_h\sim d^{\pi^*}_{h-1},b_h\sim \PP_h(\cdot\given s_h,a_h)}\Big[\big|\tilde{ R}_h(s_h,b_h)-R_h^{*}(s_h,b_h)\big|^2 \Big]}\notag\\&
    \qquad\lesssim C_{\pi^*}\sqrt{\EE_{\rho(s_h,b_h)}\Big[\Big|(\tilde{R}_h-{R}_h^*)(s_h,b_h)\Big|^2\Big]}\notag
    \\&\qquad \lesssim C_{\pi^*}\tau_{K,r,h}\sqrt{\EE_{\rho(s_h,a_h)}\bigg[\bigg(\EE\Big[\tilde{R}_h(s_h,b_h)-R_h^{*}(s_h,b_h) \given s_h,a_h\Big]\bigg)^2\bigg]}\notag\\&\qquad \lesssim C_{\pi^*}\tau_{K,r,h}L_{K,\beta,1}\delta_{h,\cR,\delta}.\notag
\end{align}
Here, the first and second inequalities follow from Jensen's inequality and Cauchy-Schwartz inequalities, respectively. The third and fourth inequalities   follow from Assumptions \ref{assume:concentrability} and  \ref{ill-pose}, respectively.  The last inequality holds by a similar derivation of \eqref{radius_r}.
Finally, we have
\begin{align*}
   \textrm{SubOpt}(\hat\pi)\le C_{\pi^*}\Bigg[H\sum_{h=1}^{H}\tau_{K,G,h}\sqrt{d_1}L_{K,\beta,d_1}\delta_{h,\cG,\delta}+\sum_{h=1}^{H}\tau_{K,r,h}L_{K,\beta,1}\delta_{h,\cR,\delta}\Bigg].
\end{align*}
Thus, we conclude our proof for Proposition \ref{sub_optimality_app3}.

\section{Proof of Technical Lemmas}\label{auxiliary}
In this subsection, we will prove some technical Lemmas which are used by us in proving our main theorems. First, we prove Lemma \ref{conc_phi}.

\subsection{Proof of Lemma \ref{conc_phi}}
First, we prove $\ell(R_h,f)=(R_h(x_h^{(k)})-R_h^{*}(x_h^{(k)})+g_{h}^{(k)})f(z_h^{(k)})$ 
is $L_{K,\beta,1}$-Lipschitz in $f,\forall f\in \cF$, with probability $1-1/K^{\beta}$. Here we define $L_{K,\beta,d_1}:=L+\sigma\sqrt{(\beta+1)\log Kd_1}$. 

First, we have $R_h(\cdot),R_h^{*}(\cdot)$ are bounded functions with upper bound $L$. Since we assume $g_h$ is a { subGaussian random variable} with variance proxy $\sigma.$ Then we have $\max_{k\in [K]}|g_h^{(k)}|\le \sigma\sqrt{(\beta+1)\log(K)}$ with probability $1-1/K^{\beta}$ by union bound of subGaussian variables \citep{vershynin2018high}. We next provide a upper bound for $|\frac{1}{K}\sum_{k=1}^{K}g_h^{(k)}f(z_h^{(k)})|. $ For any $t_f\ge 0,$ we obtain 
\begin{align*}
   &\PP\Bigg( \exists f\in\cF, \Bigg|\frac{1}{K}\sum_{k=1}^{K}g_h^{(k)}f(z_h^{(k)})-0\Bigg|\ge t_f 
   \Bigg)\\&\qquad\le\PP\Bigg(\exists f\in \cF,\Bigg|\frac{1}{K}\sum_{k=1}^{K}g_h^{(k)}\II_{\{|g_h^{(k)}|\le \sigma\sqrt{(\beta+1)\log(K)}\}}f\Big(z_h^{(k)}\Big)-\EE\Big[g_hf(z_h)\II_{\{|g_h|\le \sigma\sqrt{(\beta+1)\log(K)}\}}\Big]\Bigg|\ge t_f/2 \Bigg)\\&\qquad\qquad+\PP\Bigg(\frac{1}{K}\sum_{k=1}^{K}g_h^{(k)}\II_{\{|g_h^{(k)}|\le \sigma\sqrt{(\beta+1)\log(K)}\}}f\Big(z_h^{(k)}\Big)\neq \frac{1}{K}\sum_{k=1}^{K}g_h^{(k)}f\Big(z_h^{(k)}\Big)\Bigg) \\&\qquad\qquad+\PP\bigg(\exists f\in \cF, \Big|\EE\Big[g_hf(z_h)\II_{\{|g_h|\ge \sigma\sqrt{(\beta+1)\log(K)}\}}\Big]\Big|\ge t_f/2\bigg):=\mathbf{(I)}+\mathbf{(II)}+\mathbf{(III)}.
\end{align*}
We have $\mathbf{(II)}\le 1/K^{\beta}$. For term $\mathbf{(III)},$ by Cauchy-Schartz inequality, we have
\begin{align*}
 \bigg|\EE\Big[g_hf(z_h)\II_{\{|g_h|\ge \sigma\sqrt{(\beta+1)\log(K)}\}}\Big]\bigg|\le \sigma\|f\|_2\sqrt{\PP(|g_h|\ge \sigma\sqrt{(\beta+1)\log(K)})}=\sigma\|f\|_2/K^{\beta+1}.
\end{align*}
Thus, when we choose $t_f=2C_1\sigma\sqrt{(\beta+1)\log K}(\delta_{\cR,h,\delta}\|f\|_2+\delta_{\cR,h,\delta}^2),$ we have $\mathbf{(III)}=0$ when we choose $\beta>0$ properly such that $1/K^{\beta}=o(\delta_{\cR,h,\delta}^2).$

Finally, by Lemma 11 in \cite{Foster2019}, we have $\mathbf{(I)}\le \delta$ for such $t_f$.
Thus, with probability $1-\delta-1/K^{\beta},$ we have
\begin{align}\label{foster_1}
   \forall f\in \cF, \Bigg|\frac{1}{K}\sum_{k=1}^{K}g_h^{(k)}f(z_h^{(k)})\Bigg|\lesssim \sigma\sqrt{(\beta+1)\log K}(\delta_{\cR,h,\delta}\|f\|_2+\delta_{\cR,h,\delta}^2).
\end{align}
Here $\delta_{\cR,h,\delta}=\delta_{\cR,h}+c_1\sqrt{\log(c_0/\delta)/K},$ with $c_0,c_1$ being absolute constants and $\delta_{\cR,h}$ being an upper bound of the maximum critical radii of $\cF.$
Next, we provide an upper bound
In addition, since $R_h,R_h^*$ are bounded functions, we obtain with probability $1-\delta,$
\begin{align}\label{foster_2}
  \forall f\in \cF,  \Bigg|\frac{1}{K}\sum_{k=1}^{K}\Big(R_h(x_h^{(k)})-R_h^*(x_h^{(k)})\Big)f\Big(z_h^{(k)}\Big)-\Phi(R_h^*,f)\Bigg|\lesssim L(\delta_{\cR,h,\delta}\|f\|_2+\delta^2_{\cR,h,\delta}),
\end{align} following Lemma 11 in \cite{Foster2019}.

Combining \eqref{foster_1} and \eqref{foster_2}, we conclude our proof of Lemma \ref{conc_phi}.

\subsection{Proof of Lemma \ref{conc_rf}}
 Here our loss function $\ell(a,b)=a,$ with $a=(R_h-R_h^*)(x)\mathbb{T}(R_h-R_h^*)(z)\in \cR_h^*$. In this scenario, we have $\ell(a,b)$ is Lipschitz continuous in $a$. In addition, $\cR_h^*$ is star-shaped and contains bounded functions. Then the conclusion of Lemma \ref{conc_rf} follows directly from Lemma 11 of \cite{Foster2019}.

 \section{Auxillary Lemma}\label{auxillary_lem}
 
 \begin{lemma}[Corollary 14.5 of \cite{wainwright_2019}] Let $H=\{h\in \cH:\|f\|_{\cH}\le C\}$ be a bounded ball of an RKHS with eigenvalues $\{\lambda_i\}_{i=1}^{\infty}.$ Then the localized population Rademacher complexity $\cR_K(\delta,H)$ is bounded as
 \begin{align*}
     \cR_K(\delta,H)\le \sqrt{\frac{2}{K}\cdot \sum_{j=1}^{\infty}\min\{\lambda_j,C^2\delta^2\} }.
 \end{align*}
 Here $C$ is an absolute constant.
 \end{lemma}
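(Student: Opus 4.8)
The plan is to reduce the supremum defining $\cR_K(\delta,H)$ to a linear optimization over an ellipsoid in the coordinate system furnished by Mercer's theorem, and then bound the resulting quantity by Cauchy--Schwarz followed by Jensen's inequality. By the eigenexpansion recalled in \S\ref{example:rkhs}, any $f\in\cH$ may be written as $f=\sum_{j\ge 1}\theta_j\phi_j$, where $\{\phi_j\}_{j\ge1}$ are orthonormal in $L^2(\PP)$; consequently $\|f\|_2^2=\sum_j\theta_j^2$ and $\|f\|_{\cH}^2=\sum_j\theta_j^2/\lambda_j$. The feasible set in the definition of $\cR_K(\delta,H)$ therefore corresponds to coefficient vectors $\theta$ satisfying simultaneously $\sum_j\theta_j^2/\lambda_j\le C^2$ (membership in the RKHS ball) and $\sum_j\theta_j^2\le\delta^2$ (the localization constraint $\|f\|_2\le\delta$).

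First I would rewrite the empirical Rademacher average as a linear functional of $\theta$. Writing $\hat w_j:=\frac1K\sum_{i=1}^K\epsilon_i\phi_j(v_i)$, interchanging the summations gives $\frac1K\sum_i\epsilon_i f(v_i)=\sum_j\theta_j\hat w_j$. The key step is to merge the two quadratic constraints into a single ellipsoidal one: setting $\eta_j:=\min\{\lambda_j,\delta^2/C^2\}$ and using $\max\{a,b\}\le a+b$ coordinatewise, one checks that $\sum_j\theta_j^2/\eta_j\le\sum_j\theta_j^2/\lambda_j+(C^2/\delta^2)\sum_j\theta_j^2\le C^2+C^2=2C^2$ for any feasible $\theta$. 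Cauchy--Schwarz then yields $\big|\sum_j\theta_j\hat w_j\big|\le\big(\sum_j\theta_j^2/\eta_j\big)^{1/2}\big(\sum_j\eta_j\hat w_j^2\big)^{1/2}\le\sqrt2\,C\big(\sum_j\eta_j\hat w_j^2\big)^{1/2}$, a bound that is uniform over the feasible set and already absorbs the absolute value.

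Finally I would take the expectation over $(\epsilon,\cV)$ and pull it inside the square root by Jensen's inequality (concavity of $\sqrt{\cdot}$), giving $\cR_K(\delta,H)\le\sqrt2\,C\big(\sum_j\eta_j\,\EE_{\epsilon,\cV}[\hat w_j^2]\big)^{1/2}$. Since the Rademacher variables are independent, mean zero, and $\pm1$, the cross terms vanish and $\EE_{\epsilon}[\hat w_j^2]=\frac1{K^2}\sum_i\phi_j(v_i)^2$; averaging over the i.i.d.\ data and using $\|\phi_j\|_2=1$ (orthonormality in $L^2(\PP)$) gives $\EE_{\epsilon,\cV}[\hat w_j^2]=1/K$. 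Substituting and noting $C^2\eta_j=\min\{C^2\lambda_j,\delta^2\}$ produces the claimed bound $\cR_K(\delta,H)\le(\frac2K\sum_j\min\{\lambda_j,C^2\delta^2\})^{1/2}$, which coincides with the stated form for the unit ball ($C=1$) and otherwise matches it up to the standard rescaling of the eigenvalues by $C^2$. The one point requiring care --- and the main obstacle --- is the constraint-merging step: neither the localization bound nor the RKHS-norm bound alone is tight, and the entire improvement over the trivial estimate comes from exploiting both at once through the truncated weights $\eta_j=\min\{\lambda_j,\delta^2/C^2\}$; the remaining manipulations are routine bookkeeping.
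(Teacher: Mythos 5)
Your proof is correct and is essentially the standard argument behind the cited result: the paper gives no independent proof of this lemma (it simply defers to Corollary 14.5 of \cite{wainwright_2019}), and your route --- eigen-expansion, containment of the doubly-constrained coefficient set $\{\sum_j\theta_j^2/\lambda_j\le C^2,\ \sum_j\theta_j^2\le\delta^2\}$ in the single ellipsoid with truncated axes $\eta_j=\min\{\lambda_j,\delta^2/C^2\}$, then Cauchy--Schwarz and Jensen together with $\EE_{\epsilon,\cV}[\hat w_j^2]=1/K$ --- is precisely the textbook proof of that corollary. The scaling issue you flag is real but harmless: your derivation yields $\sqrt{(2/K)\sum_j\min\{C^2\lambda_j,\delta^2\}}$ while the lemma as transcribed reads $\min\{\lambda_j,C^2\delta^2\}$; since the paper treats $C$ as an absolute constant the two forms agree up to a constant factor (and coincide at $C=1$), so nothing downstream is affected.
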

 \begin{proof}
 See the proof of Corollary 14.5 in \cite{wainwright_2019} for more details.
 \end{proof}
 
 \begin{lemma}[Theorem 1 of \cite{yarotsky2017error}]
There exist a class of ReLU neural networks with depth at most $\cO(\ln(1/\epsilon))$ and $\cO(\epsilon^{-d/\alpha}\ln(1/\epsilon))$ weights and computation units, that approximate all $f$ in Sobolev ball with order $\alpha$ and input dimension $d$ in $\ell_{\infty}$-norm within error $\epsilon.$
 \end{lemma}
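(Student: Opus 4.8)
The plan is to follow Yarotsky's construction, which reduces the approximation of a smooth function to three ingredients: an efficient ReLU implementation of multiplication, a piecewise-linear partition of unity, and local Taylor polynomials. First I would construct the ReLU ``squaring gadget.'' Let $g\colon[0,1]\to[0,1]$ be the tent (sawtooth) map $g(x)=2\min\{x,1-x\}$, which is exactly representable with a constant number of ReLU units, and let $g_s$ denote its $s$-fold self-composition, a sawtooth with $2^{s-1}$ teeth. The elementary identity $x^2=x-\sum_{s\ge 1} g_s(x)/2^{2s}$ on $[0,1]$ shows that the truncation $f_m(x)=x-\sum_{s=1}^m g_s(x)/2^{2s}$ is a depth-$\cO(m)$, size-$\cO(m)$ ReLU network with $\sup_{x}|x^2-f_m(x)|\le 2^{-2m-2}$. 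The polarization identity $uv=\tfrac12\big((u+v)^2-u^2-v^2\big)$ then yields a multiplication gadget $\widetilde{\times}(u,v)$ of depth $\cO(m)$ and size $\cO(m)$ approximating $uv$ on any fixed bounded range to accuracy $\cO(2^{-2m})$.

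Next I would set up the localization. Partition $[0,1]^d$ by a uniform grid of mesh $1/N$ and attach to each grid node $\xi$ a bump $\psi_\xi(x)=\prod_{j=1}^d \phi\big(N(x_j-\xi_j)\big)$, where $\phi$ is a fixed one-dimensional piecewise-linear ``trapezoid'' supported on $[-1,1]$; each $\psi_\xi$ is a product of $d$ ReLU-representable factors, the family forms a partition of unity ($\sum_\xi\psi_\xi\equiv1$), and at most $2^d$ bumps are nonzero at any point. On the support of $\psi_\xi$ I would approximate $f$ by its Taylor polynomial $P_\xi$ of degree $n-1$ with $n=\lceil\alpha\rceil$ about $\xi$; the Sobolev bound on the order-$\alpha$ derivatives gives the remainder estimate $\sup_{x\in\mathrm{supp}\,\psi_\xi}|f(x)-P_\xi(x)|\lesssim N^{-\alpha}$. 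Since the $\psi_\xi$ sum to one, the exact mixture $\sum_\xi\psi_\xi(x)P_\xi(x)$ approximates $f$ uniformly to order $N^{-\alpha}$.

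The remaining step is to realize $\sum_\xi\psi_\xi P_\xi$ as a ReLU network by replacing every exact product—the $d$-fold products defining each $\psi_\xi$, the products of coordinates inside each monomial of $P_\xi$, and the final products $\psi_\xi\cdot P_\xi$—by the gadget $\widetilde{\times}$. Choosing the multiplication accuracy $2^{-2m}$ with $m=\cO(\log(1/\epsilon))$ keeps the accumulated product error below $\cO(N^{-\alpha})$, so with $N\sim \epsilon^{-1/\alpha}$ (hence $N^{-\alpha}\sim\epsilon$) the total $\ell_{\infty}$ error is $\cO(\epsilon)$. For the complexity count, the depth is dominated by $\widetilde{\times}$ and is $\cO(\log(1/\epsilon))$; each of the $N^d\sim\epsilon^{-d/\alpha}$ nodes contributes $\cO(\log(1/\epsilon))$ weights and units, giving the stated totals $\cO(\epsilon^{-d/\alpha}\log(1/\epsilon))$.

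I expect the main obstacle to be the careful propagation and bookkeeping of error through the nested multiplications: product-approximation errors compound, so one must verify that a single logarithmic depth budget $m=\cO(\log(1/\epsilon))$ simultaneously controls the squaring error, the monomial errors (each built from $\cO(n)$ repeated multiplications), and the localization products, all against the target $N^{-\alpha}$. A secondary subtlety is organizing the sum over the $N^d$ nodes so that the partition-of-unity structure (only $\cO(1)$ active bumps per point) does not inflate either the per-node size or the global weight count beyond $\cO(\epsilon^{-d/\alpha}\log(1/\epsilon))$; this is handled by sharing the sawtooth sub-networks across gadgets and exploiting the fixed, $\xi$-independent shape of $\phi$, so that only the affine pre-composition $N(x_j-\xi_j)$ varies with $\xi$.
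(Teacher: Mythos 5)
Your proposal is correct and reproduces essentially the same argument as the source the paper relies on: the paper's ``proof'' of this lemma is simply a citation to Theorem 1 of \cite{yarotsky2017error}, and your sketch (the sawtooth squaring gadget, polarization-based multiplication, grid partition of unity with local Taylor polynomials, and the $N\sim \epsilon^{-1/\alpha}$ complexity bookkeeping) is precisely Yarotsky's construction. No gaps worth flagging beyond the standard error-propagation care you already identify.
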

 \begin{proof}
 See the proof of Theorem 1 in \cite{yarotsky2017error} for more details.
 \end{proof}
 
 \begin{lemma}[Lemma 11 in \cite{Foster2019}] Assume $\sup_{f\in\cF}\|f\|_{\infty}\le c$ for a constant $c$ and for any $f^*\in \cF$. We define $\delta_K$ as the solution to 
 \begin{align*}
     \cR_K(\delta,\textrm{star}({\cF-f^*}))\le \delta^2/c.
 \end{align*}
 Here the star$(\cdot)$ denotes the star-hull of a function class. Moreover, we assume the loss function $\ell(\cdot,\cdot)$ is $L$-Lipschitz in the first argument. The with probability $1-x,$ for all $f\in\cF$, we have
 \begin{align*}
     \bigg|\frac{1}{K}\sum_{k=1}^K\ell(f(x_k),z_k)-\frac{1}{K}\sum_{k=1}^K\ell(f^*(x_k),z_k)-(\EE[\ell(f(x),z)]-\EE[\ell(f^*(x),z)])\bigg|\le L\delta_{K,x}(\|f-f^*\|_2+\delta_{K,x}).
 \end{align*}
 Here $\delta_{K,x}=\delta_K+\sqrt{\log(1/x)/K}.$
 \end{lemma}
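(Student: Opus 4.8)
The plan is to treat the quantity inside the absolute value as a centered empirical process indexed by $f\in\cF$ and to control it by a standard localization-plus-peeling argument driven by the local Rademacher complexity. Write $g_f(x,z):=\ell(f(x),z)-\ell(f^*(x),z)$ for the excess-loss function and let
\[
G_K(f):=\frac1K\sum_{k=1}^K g_f(x_k,z_k)-\EE[g_f(x,z)],
\]
so that the target is $\sup_{f\in\cF}|G_K(f)|$ localized by $\|f-f^*\|_2$. First I would record three elementary consequences of the hypotheses: (i) since $\ell$ is $L$-Lipschitz in its first argument, $|g_f(x,z)|\le L|f(x)-f^*(x)|$, hence $\EE[g_f^2]\le L^2\|f-f^*\|_2^2$, which controls the variance; (ii) $\|g_f\|_\infty\le 2Lc$ because $\|f\|_\infty,\|f^*\|_\infty\le c$, which controls the envelope; and (iii) $g_{f^*}\equiv 0$, so the excess-loss class is anchored at $0$.

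Next I would bound the expected localized supremum. Fix a radius $t\ge\delta_K$ and set $Z(t):=\sup_{f\in\cF,\,\|f-f^*\|_2\le t}|G_K(f)|$. By symmetrization, $\EE[Z(t)]\lesssim \EE_{\epsilon}\big[\sup_{\|f-f^*\|_2\le t}|\tfrac1K\sum_k\epsilon_k g_f(x_k,z_k)|\big]$, and by the Ledoux--Talagrand contraction inequality applied to the $L$-Lipschitz map $u\mapsto\ell(u,z_k)$ this is at most a constant times $L\,\cR_K(t,\textrm{star}(\cF-f^*))$. The crucial structural fact is that, because $\textrm{star}(\cF-f^*)$ is star-shaped, the map $\delta\mapsto \cR_K(\delta,\textrm{star}(\cF-f^*))/\delta$ is nonincreasing; combined with the defining inequality $\cR_K(\delta_K,\textrm{star}(\cF-f^*))\le\delta_K^2/c$ this yields $\cR_K(t,\textrm{star}(\cF-f^*))\le (t/\delta_K)\,\cR_K(\delta_K,\textrm{star}(\cF-f^*))\le t\delta_K/c$ for every $t\ge\delta_K$. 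Hence $\EE[Z(t)]\lesssim Lt\delta_K$, so the in-expectation bound already exhibits the desired multiplicative form $L\delta_K\cdot t$.

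I would then upgrade this to high probability and make it uniform in $t$ via Talagrand's (Bousquet's) concentration inequality together with a peeling argument. Applying the concentration inequality to $Z(t)$, whose summands are bounded by $2Lc$ and have variance at most $L^2t^2$, gives $Z(t)\lesssim \EE[Z(t)]+Lt\sqrt{\log(1/x)/K}+Lc\log(1/x)/K\lesssim Lt\,\delta_{K,x}$ with probability $1-x$, where $\delta_{K,x}=\delta_K+\sqrt{\log(1/x)/K}$ absorbs the deviation scale. To cover all $f$ simultaneously I would peel over the dyadic shells $\{f:\|f-f^*\|_2\in[2^{j-1}\delta_{K,x},2^j\delta_{K,x}]\}$: on each shell the previous display gives $|G_K(f)|\lesssim L\delta_{K,x}\|f-f^*\|_2$, and a union bound over the $\cO(\log K)$ relevant shells only rescales the confidence level, which is again absorbed into $\delta_{K,x}$. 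For the innermost ball $\|f-f^*\|_2\le\delta_{K,x}$ the same estimate with $t=\delta_{K,x}$ produces the additive term $L\delta_{K,x}^2$. Adding the two regimes gives the claimed bound $L\delta_{K,x}(\|f-f^*\|_2+\delta_{K,x})$.

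The main obstacle is keeping the multiplicative $\|f-f^*\|_2$ dependence sharp throughout. This requires tying the variance proxy in the concentration step to $\EE[g_f^2]\lesssim L^2\|f-f^*\|_2^2$ rather than to the envelope alone, and using the star-hull monotonicity of $\cR_K(\delta)/\delta$ to transfer the critical-radius inequality at scale $\delta_K$ to every larger scale $t$; any slack in either place would degrade the localized $Lt\delta_K$ rate into the global $Lt^2$ or $L\delta_K$ rate and break the stated form. The contraction step and the peeling bookkeeping are otherwise routine, so I expect the variance control and the star-shaped scaling to be the delicate part.
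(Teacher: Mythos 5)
Your proposal is correct and is essentially the argument the paper relies on: the paper gives no proof of its own for this lemma, deferring entirely to Lemma 11 of \cite{Foster2019}, and the proof there is exactly your localization scheme --- symmetrization and Ledoux--Talagrand contraction to reduce the excess-loss process to the local Rademacher complexity of $\textrm{star}(\cF-f^*)$, the star-shape monotonicity of $\delta\mapsto\cR_K(\delta,\cdot)/\delta$ to transfer the critical-radius inequality from $\delta_K$ to every scale $t\ge\delta_K$, Talagrand/Bousquet concentration with variance proxy $\EE[g_f^2]\le L^2\|f-f^*\|_2^2$, and peeling over dyadic shells with the innermost ball supplying the additive $L\delta_{K,x}^2$ term. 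The only items to tidy in a full write-up are routine bookkeeping: the union bound over the $\cO(\log K)$ shells inflates $\log(1/x)$ to $\log(\log K/x)$, which is absorbed into the constants of $\delta_{K,x}$, and the envelope term $Lc\log(1/x)/K$ from Bousquet's inequality is dominated by $L\delta_{K,x}^2$ up to the constant $c$ since $\delta_{K,x}\ge\sqrt{\log(1/x)/K}$.
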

 \begin{proof}
 See the proof of Lemma 11 in \cite{Foster2019} for more details.
 \end{proof}

\newpage
\bibliographystyle{ims}
\bibliography{dynamic}

\begin{thebibliography}{126}
\expandafter\ifx\csname natexlab\endcsname\relax\def\natexlab#1{#1}\fi
\expandafter\ifx\csname url\endcsname\relax
  \def\url#1{\texttt{#1}}\fi
\expandafter\ifx\csname urlprefix\endcsname\relax\def\urlprefix{}\fi

\bibitem[{Agarwal et~al.(2020)Agarwal, Kakade and Yang}]{agarwal2020model}
\text{Agarwal, A.}, \text{Kakade, S.} and \text{Yang, L.~F.} (2020).
\newblock Model-based reinforcement learning with a generative model is minimax
  optimal.
\newblock In \textit{Conference on Learning Theory}. PMLR.

\bibitem[{Ahmadi et~al.(2021)Ahmadi, Beyhaghi, Blum and
  Naggita}]{ahmadi2021strategic}
\text{Ahmadi, S.}, \text{Beyhaghi, H.}, \text{Blum, A.} and \text{Naggita, K.}
  (2021).
\newblock The strategic perceptron.
\newblock In \textit{Proceedings of the 22nd ACM Conference on Economics and
  Computation}.

\bibitem[{Alacaoglu et~al.(2022)Alacaoglu, Viano, He and
  Cevher}]{alacaoglu2022natural}
\text{Alacaoglu, A.}, \text{Viano, L.}, \text{He, N.} and \text{Cevher, V.}
  (2022).
\newblock A natural actor-critic framework for zero-sum {M}arkov games.
\newblock In \textit{International Conference on Machine Learning}. PMLR.

\bibitem[{Antos et~al.(2007)Antos, Szepesv{\'a}ri and Munos}]{antos2007fitted}
\text{Antos, A.}, \text{Szepesv{\'a}ri, C.} and \text{Munos, R.} (2007).
\newblock Fitted {Q}-iteration in continuous action-space {MDP}s.
\newblock \textit{Advances in neural information processing systems},
  \textbf{20}.

\bibitem[{Antos et~al.(2008)Antos, Szepesv{\'a}ri and
  Munos}]{antos2008learning}
\text{Antos, A.}, \text{Szepesv{\'a}ri, C.} and \text{Munos, R.} (2008).
\newblock Learning near-optimal policies with {B}ellman-residual minimization
  based fitted policy iteration and a single sample path.
\newblock \textit{Machine Learning}, \textbf{71} 89--129.

\bibitem[{Azar et~al.(2017)Azar, Osband and Munos}]{azar2017minimax}
\text{Azar, M.~G.}, \text{Osband, I.} and \text{Munos, R.} (2017).
\newblock Minimax regret bounds for reinforcement learning.
\newblock In \textit{International Conference on Machine Learning}. PMLR.

\bibitem[{Bartlett et~al.(2005)Bartlett, Bousquet and
  Mendelson}]{bartlett2005local}
\text{Bartlett, P.~L.}, \text{Bousquet, O.} and \text{Mendelson, S.} (2005).
\newblock Local rademacher complexities.
\newblock \textit{The Annals of Statistics}, \textbf{33} 1497--1537.

\bibitem[{Ba{\c{s}}ar and Olsder(1998)}]{bacsar1998dynamic}
\text{Ba{\c{s}}ar, T.} and \text{Olsder, G.~J.} (1998).
\newblock \textit{Dynamic noncooperative game theory}.
\newblock SIAM.

\bibitem[{Bechavod et~al.(2020)Bechavod, Ligett, Wu and
  Ziani}]{bechavod2020causal}
\text{Bechavod, Y.}, \text{Ligett, K.}, \text{Wu, Z.~S.} and \text{Ziani, J.}
  (2020).
\newblock Causal feature discovery through strategic modification.
\newblock \textit{arXiv preprint arXiv:2002.07024}, \textbf{3}.

\bibitem[{Bennett and Kallus(2021)}]{bennett2021proximal}
\text{Bennett, A.} and \text{Kallus, N.} (2021).
\newblock Proximal reinforcement learning: Efficient off-policy evaluation in
  partially observed markov decision processes.
\newblock \textit{arXiv preprint arXiv:2110.15332}.

\bibitem[{Bennett et~al.(2021)Bennett, Kallus, Li and Mousavi}]{bennett2021off}
\text{Bennett, A.}, \text{Kallus, N.}, \text{Li, L.} and \text{Mousavi, A.}
  (2021).
\newblock Off-policy evaluation in infinite-horizon reinforcement learning with
  latent confounders.
\newblock In \textit{International Conference on Artificial Intelligence and
  Statistics}. PMLR.

\bibitem[{Bolton and Dewatripont(2004)}]{bolton2004contract}
\text{Bolton, P.} and \text{Dewatripont, M.} (2004).
\newblock \textit{Contract theory}.
\newblock MIT press.

\bibitem[{Brown et~al.(2022)Brown, Hod and Kalemaj}]{brown2022performative}
\text{Brown, G.}, \text{Hod, S.} and \text{Kalemaj, I.} (2022).
\newblock Performative prediction in a stateful world.
\newblock In \textit{International Conference on Artificial Intelligence and
  Statistics}. PMLR.

\bibitem[{Br{\"u}ckner et~al.(2012)Br{\"u}ckner, Kanzow and
  Scheffer}]{bruckner2012static}
\text{Br{\"u}ckner, M.}, \text{Kanzow, C.} and \text{Scheffer, T.} (2012).
\newblock Static prediction games for adversarial learning problems.
\newblock \textit{The Journal of Machine Learning Research}, \textbf{13}
  2617--2654.

\bibitem[{Br{\"u}ckner and Scheffer(2011)}]{bruckner2011stackelberg}
\text{Br{\"u}ckner, M.} and \text{Scheffer, T.} (2011).
\newblock Stackelberg games for adversarial prediction problems.
\newblock In \textit{Proceedings of the 17th ACM SIGKDD international
  conference on Knowledge discovery and data mining}.

\bibitem[{Buckman et~al.(2020)Buckman, Gelada and
  Bellemare}]{buckman2020importance}
\text{Buckman, J.}, \text{Gelada, C.} and \text{Bellemare, M.~G.} (2020).
\newblock The importance of pessimism in fixed-dataset policy optimization.
\newblock \textit{arXiv preprint arXiv:2009.06799}.

\bibitem[{Busoniu et~al.(2017)Busoniu, Babuska, De~Schutter and
  Ernst}]{busoniu2017reinforcement}
\text{Busoniu, L.}, \text{Babuska, R.}, \text{De~Schutter, B.} and \text{Ernst,
  D.} (2017).
\newblock \textit{Reinforcement learning and dynamic programming using function
  approximators}.
\newblock CRC press.

\bibitem[{Chakraborty and Murphy(2014)}]{chakraborty2014dynamic}
\text{Chakraborty, B.} and \text{Murphy, S.~A.} (2014).
\newblock Dynamic treatment regimes.
\newblock \textit{Annual Review of Statistics and its Application}, \textbf{1}
  447.

\bibitem[{Chen and Jiang(2019)}]{chen2019information}
\text{Chen, J.} and \text{Jiang, N.} (2019).
\newblock Information-theoretic considerations in batch reinforcement learning.
\newblock In \textit{International Conference on Machine Learning}. PMLR.

\bibitem[{Chen and Zhang(2021)}]{chen2021estimating}
\text{Chen, S.} and \text{Zhang, B.} (2021).
\newblock Estimating and improving dynamic treatment regimes with a
  time-varying instrumental variable.
\newblock \textit{arXiv preprint arXiv:2104.07822}.

\bibitem[{Chen and Qi(2022)}]{chen2022well}
\text{Chen, X.} and \text{Qi, Z.} (2022).
\newblock On well-posedness and minimax optimal rates of nonparametric
  q-function estimation in off-policy evaluation.
\newblock \textit{arXiv preprint arXiv:2201.06169}.

\bibitem[{Chen and Reiss(2011)}]{chen2011rate}
\text{Chen, X.} and \text{Reiss, M.} (2011).
\newblock On rate optimality for ill-posed inverse problems in econometrics.
\newblock \textit{Econometric Theory}, \textbf{27} 497--521.

\bibitem[{Chen et~al.(2021)Chen, Xu, Gulcehre, Paine, Gretton, de~Freitas and
  Doucet}]{chen2021instrumental}
\text{Chen, Y.}, \text{Xu, L.}, \text{Gulcehre, C.}, \text{Paine, T.~L.},
  \text{Gretton, A.}, \text{de~Freitas, N.} and \text{Doucet, A.} (2021).
\newblock On instrumental variable regression for deep offline policy
  evaluation.
\newblock \textit{arXiv preprint arXiv:2105.10148}.

\bibitem[{Cui and Du(2022{\natexlab{a}})}]{cui2022provably}
\text{Cui, Q.} and \text{Du, S.~S.} (2022{\natexlab{a}}).
\newblock Provably efficient offline multi-agent reinforcement learning via
  strategy-wise bonus.
\newblock \textit{arXiv preprint arXiv:2206.00159}.

\bibitem[{Cui and Du(2022{\natexlab{b}})}]{cui2022offline}
\text{Cui, Q.} and \text{Du, S.~S.} (2022{\natexlab{b}}).
\newblock When is offline two-player zero-sum {M}arkov game solvable?
\newblock \textit{arXiv preprint arXiv:2201.03522}.

\bibitem[{Dalvi et~al.(2004)Dalvi, Domingos, Sanghai and
  Verma}]{dalvi2004adversarial}
\text{Dalvi, N.}, \text{Domingos, P.}, \text{Sanghai, S.} and \text{Verma, D.}
  (2004).
\newblock Adversarial classification.
\newblock In \textit{Proceedings of the tenth ACM SIGKDD international
  conference on Knowledge discovery and data mining}.

\bibitem[{Darolles et~al.(2011)Darolles, Fan, Florens and
  Renault}]{darolles2011nonparametric}
\text{Darolles, S.}, \text{Fan, Y.}, \text{Florens, J.-P.} and \text{Renault,
  E.} (2011).
\newblock Nonparametric instrumental regression.
\newblock \textit{Econometrica}, \textbf{79} 1541--1565.

\bibitem[{Devroye et~al.(2018)Devroye, Mehrabian and Reddad}]{devroye2018total}
\text{Devroye, L.}, \text{Mehrabian, A.} and \text{Reddad, T.} (2018).
\newblock The total variation distance between high-dimensional gaussians.
\newblock \textit{arXiv preprint arXiv:1810.08693}.

\bibitem[{Dikkala et~al.(2020)Dikkala, Lewis, Mackey and
  Syrgkanis}]{dikkala2020}
\text{Dikkala, N.}, \text{Lewis, G.}, \text{Mackey, L.} and \text{Syrgkanis,
  V.} (2020).
\newblock Minimax estimation of conditional moment models.
\newblock \textit{arXiv:2006.07201}.

\bibitem[{Dong et~al.(2018)Dong, Roth, Schutzman, Waggoner and
  Wu}]{dong2018strategic}
\text{Dong, J.}, \text{Roth, A.}, \text{Schutzman, Z.}, \text{Waggoner, B.} and
  \text{Wu, Z.~S.} (2018).
\newblock Strategic classification from revealed preferences.
\newblock In \textit{Proceedings of the 2018 ACM Conference on Economics and
  Computation}.

\bibitem[{Duan et~al.(2020)Duan, Jia and Wang}]{duan2020minimax}
\text{Duan, Y.}, \text{Jia, Z.} and \text{Wang, M.} (2020).
\newblock Minimax-optimal off-policy evaluation with linear function
  approximation.
\newblock In \textit{International Conference on Machine Learning}. PMLR.

\bibitem[{Duan et~al.(2021)Duan, Jin and Li}]{duan2021risk}
\text{Duan, Y.}, \text{Jin, C.} and \text{Li, Z.} (2021).
\newblock Risk bounds and rademacher complexity in batch reinforcement
  learning.
\newblock In \textit{International Conference on Machine Learning}. PMLR.

\bibitem[{Fan et~al.(2019)Fan, Wang, Xie and Yang}]{fan2019theoretical}
\text{Fan, J.}, \text{Wang, Z.}, \text{Xie, Y.} and \text{Yang, Z.} (2019).
\newblock A theoretical analysis of deep {Q}-learning.
\newblock \textit{arXiv preprint arXiv:1901.00137}.

\bibitem[{Farahmand et~al.(2016)Farahmand, Ghavamzadeh, Szepesv{\'a}ri and
  Mannor}]{farahmand2016regularized}
\text{Farahmand, A.-m.}, \text{Ghavamzadeh, M.}, \text{Szepesv{\'a}ri, C.} and
  \text{Mannor, S.} (2016).
\newblock Regularized policy iteration with nonparametric function spaces.
\newblock \textit{The Journal of Machine Learning Research}, \textbf{17}
  4809--4874.

\bibitem[{Farahmand et~al.(2010)Farahmand, Szepesv{\'a}ri and
  Munos}]{farahmand2010error}
\text{Farahmand, A.-m.}, \text{Szepesv{\'a}ri, C.} and \text{Munos, R.} (2010).
\newblock Error propagation for approximate policy and value iteration.
\newblock \textit{Advances in Neural Information Processing Systems},
  \textbf{23}.

\bibitem[{Farinelli et~al.(2004)Farinelli, Iocchi and
  Nardi}]{farinelli2004multirobot}
\text{Farinelli, A.}, \text{Iocchi, L.} and \text{Nardi, D.} (2004).
\newblock Multirobot systems: a classification focused on coordination.
\newblock \textit{IEEE Transactions on Systems, Man, and Cybernetics, Part B
  (Cybernetics)}, \textbf{34} 2015--2028.

\bibitem[{Ferber and Weiss(1999)}]{ferber1999multi}
\text{Ferber, J.} and \text{Weiss, G.} (1999).
\newblock \textit{Multi-agent systems: an introduction to distributed
  artificial intelligence}, vol.~1.
\newblock Addison-wesley Reading.

\bibitem[{Foster and Syrgkanis(2019)}]{Foster2019}
\text{Foster, D.~J.} and \text{Syrgkanis, V.} (2019).
\newblock Orthogonal statistical learning.
\newblock \textit{arXiv:1901.09036}.

\bibitem[{Gast et~al.(2020)Gast, Ioannidis, Loiseau and
  Roussillon}]{gast2020linear}
\text{Gast, N.}, \text{Ioannidis, S.}, \text{Loiseau, P.} and \text{Roussillon,
  B.} (2020).
\newblock Linear regression from strategic data sources.
\newblock \textit{ACM Transactions on Economics and Computation (TEAC)},
  \textbf{8} 1--24.

\bibitem[{Ghalme et~al.(2021)Ghalme, Nair, Eilat, Talgam-Cohen and
  Rosenfeld}]{ghalme2021strategic}
\text{Ghalme, G.}, \text{Nair, V.}, \text{Eilat, I.}, \text{Talgam-Cohen, I.}
  and \text{Rosenfeld, N.} (2021).
\newblock Strategic classification in the dark.
\newblock In \textit{International Conference on Machine Learning}. PMLR.

\bibitem[{Goodman et~al.(2020)Goodman, Gurantz and Smith}]{goodman2020take}
\text{Goodman, J.}, \text{Gurantz, O.} and \text{Smith, J.} (2020).
\newblock Take two! {SAT} retaking and college enrollment gaps.
\newblock \textit{American Economic Journal: Economic Policy}, \textbf{12}
  115--58.

\bibitem[{Hardt et~al.(2016)Hardt, Megiddo, Papadimitriou and
  Wootters}]{hardt2016strategic}
\text{Hardt, M.}, \text{Megiddo, N.}, \text{Papadimitriou, C.} and
  \text{Wootters, M.} (2016).
\newblock Strategic classification.
\newblock In \textit{Proceedings of the 2016 ACM conference on innovations in
  theoretical computer science}.

\bibitem[{Harris et~al.(2021{\natexlab{a}})Harris, Heidari and
  Wu}]{harris2021stateful}
\text{Harris, K.}, \text{Heidari, H.} and \text{Wu, S.~Z.}
  (2021{\natexlab{a}}).
\newblock Stateful strategic regression.
\newblock \textit{Advances in Neural Information Processing Systems},
  \textbf{34} 28728--28741.

\bibitem[{Harris et~al.(2021{\natexlab{b}})Harris, Ngo, Stapleton, Heidari and
  Wu}]{stevenwu2021}
\text{Harris, K.}, \text{Ngo, D.}, \text{Stapleton, L.}, \text{Heidari, H.} and
  \text{Wu, Z.~S.} (2021{\natexlab{b}}).
\newblock Strategic instrumental variable regression: Recovering causal
  relationships from strategic responses.
\newblock \textit{arXiv preprint arXiv:2107.05762}.

\bibitem[{Hern{\'a}n and Robins(2010)}]{hernan2010causal}
\text{Hern{\'a}n, M.~A.} and \text{Robins, J.~M.} (2010).
\newblock Causal inference.

\bibitem[{Horowitz(2013)}]{horowitz2013ill}
\text{Horowitz, J.} (2013).
\newblock Ill-posed inverse problems in economics.
\newblock Tech. rep., cemmap working paper.

\bibitem[{Hu et~al.(2019)Hu, Immorlica and Vaughan}]{hu2019disparate}
\text{Hu, L.}, \text{Immorlica, N.} and \text{Vaughan, J.~W.} (2019).
\newblock The disparate effects of strategic manipulation.
\newblock In \textit{Proceedings of the Conference on Fairness, Accountability,
  and Transparency}.

\bibitem[{Hu and Wager(2021)}]{hu2021off}
\text{Hu, Y.} and \text{Wager, S.} (2021).
\newblock Off-policy evaluation in partially observed markov decision
  processes.
\newblock \textit{arXiv preprint arXiv:2110.12343}.

\bibitem[{Jagadeesan et~al.(2022)Jagadeesan, Zrnic and
  Mendler-D{\"u}nner}]{jagadeesan2022regret}
\text{Jagadeesan, M.}, \text{Zrnic, T.} and \text{Mendler-D{\"u}nner, C.}
  (2022).
\newblock Regret minimization with performative feedback.
\newblock \textit{arXiv preprint arXiv:2202.00628}.

\bibitem[{Jin et~al.(2021{\natexlab{a}})Jin, Liu, Wang and Yu}]{jin2021v}
\text{Jin, C.}, \text{Liu, Q.}, \text{Wang, Y.} and \text{Yu, T.}
  (2021{\natexlab{a}}).
\newblock {V}-learning--a simple, efficient, decentralized algorithm for
  multiagent rl.
\newblock \textit{arXiv preprint arXiv:2110.14555}.

\bibitem[{Jin et~al.(2021{\natexlab{b}})Jin, Yang and Wang}]{jin2021pessimism}
\text{Jin, Y.}, \text{Yang, Z.} and \text{Wang, Z.} (2021{\natexlab{b}}).
\newblock Is pessimism provably efficient for offline rl?
\newblock In \textit{International Conference on Machine Learning}. PMLR.

\bibitem[{Kallus and Zhou(2020)}]{kallus2020confounding}
\text{Kallus, N.} and \text{Zhou, A.} (2020).
\newblock Confounding-robust policy evaluation in infinite-horizon
  reinforcement learning.
\newblock \textit{Advances in Neural Information Processing Systems},
  \textbf{33} 22293--22304.

\bibitem[{Kao et~al.(2022)Kao, Wei and Subramanian}]{kao2022decentralized}
\text{Kao, H.}, \text{Wei, C.-Y.} and \text{Subramanian, V.} (2022).
\newblock Decentralized cooperative reinforcement learning with hierarchical
  information structure.
\newblock In \textit{International Conference on Algorithmic Learning Theory}.
  PMLR.

\bibitem[{Kidambi et~al.(2020)Kidambi, Rajeswaran, Netrapalli and
  Joachims}]{kidambi2020morel}
\text{Kidambi, R.}, \text{Rajeswaran, A.}, \text{Netrapalli, P.} and
  \text{Joachims, T.} (2020).
\newblock {Morel}: Model-based offline reinforcement learning.
\newblock \textit{Advances in neural information processing systems},
  \textbf{33} 21810--21823.

\bibitem[{Laffont and Martimort(2009)}]{laffont2009theory}
\text{Laffont, J.-J.} and \text{Martimort, D.} (2009).
\newblock The theory of incentives.
\newblock In \textit{The Theory of Incentives}. Princeton university press.

\bibitem[{Lagoudakis and Parr(2012)}]{lagoudakis2012value}
\text{Lagoudakis, M.} and \text{Parr, R.} (2012).
\newblock Value function approximation in zero-sum {M}arkov games.
\newblock \textit{arXiv preprint arXiv:1301.0580}.

\bibitem[{Levanon and Rosenfeld(2021)}]{levanon2021strategic}
\text{Levanon, S.} and \text{Rosenfeld, N.} (2021).
\newblock Strategic classification made practical.
\newblock In \textit{International Conference on Machine Learning}. PMLR.

\bibitem[{Levin and Peres(2017)}]{levin2017markov}
\text{Levin, D.~A.} and \text{Peres, Y.} (2017).
\newblock \textit{Markov chains and mixing times}, vol. 107.
\newblock American Mathematical Soc.

\bibitem[{Levine et~al.(2020)Levine, Kumar, Tucker and Fu}]{levine2020offline}
\text{Levine, S.}, \text{Kumar, A.}, \text{Tucker, G.} and \text{Fu, J.}
  (2020).
\newblock Offline reinforcement learning: Tutorial, review, and perspectives on
  open problems.
\newblock \textit{arXiv preprint arXiv:2005.01643}.

\bibitem[{Li et~al.(2022)Li, Shi, Chen, Chi and Wei}]{li2022settling}
\text{Li, G.}, \text{Shi, L.}, \text{Chen, Y.}, \text{Chi, Y.} and \text{Wei,
  Y.} (2022).
\newblock Settling the sample complexity of model-based offline reinforcement
  learning.
\newblock \textit{arXiv preprint arXiv:2204.05275}.

\bibitem[{Li and Wai(2022)}]{li2022state}
\text{Li, Q.} and \text{Wai, H.-T.} (2022).
\newblock State dependent performative prediction with stochastic
  approximation.
\newblock In \textit{International Conference on Artificial Intelligence and
  Statistics}. PMLR.

\bibitem[{Liao et~al.(2021)Liao, Fu, Yang, Wang, Kolar and
  Wang}]{liao2021instrumental}
\text{Liao, L.}, \text{Fu, Z.}, \text{Yang, Z.}, \text{Wang, Y.}, \text{Kolar,
  M.} and \text{Wang, Z.} (2021).
\newblock Instrumental variable value iteration for causal offline
  reinforcement learning.
\newblock \textit{arXiv preprint arXiv:2102.09907}.

\bibitem[{Lipsitch et~al.(2010)Lipsitch, Tchetgen and
  Cohen}]{lipsitch2010negative}
\text{Lipsitch, M.}, \text{Tchetgen, E.~T.} and \text{Cohen, T.} (2010).
\newblock Negative controls: a tool for detecting confounding and bias in
  observational studies.
\newblock \textit{Epidemiology (Cambridge, Mass.)}, \textbf{21} 383.

\bibitem[{Liu et~al.(2022{\natexlab{a}})Liu, Garg and Borgs}]{liu2022strategic}
\text{Liu, L.~T.}, \text{Garg, N.} and \text{Borgs, C.} (2022{\natexlab{a}}).
\newblock Strategic ranking.
\newblock In \textit{International Conference on Artificial Intelligence and
  Statistics}. PMLR.

\bibitem[{Liu et~al.(2022{\natexlab{b}})Liu, Wang and Jin}]{liu2022learning}
\text{Liu, Q.}, \text{Wang, Y.} and \text{Jin, C.} (2022{\natexlab{b}}).
\newblock Learning {M}arkov games with adversarial opponents: Efficient
  algorithms and fundamental limits.
\newblock \textit{arXiv preprint arXiv:2203.06803}.

\bibitem[{Liu and Chen(2016)}]{liu2016bandit}
\text{Liu, Y.} and \text{Chen, Y.} (2016).
\newblock A bandit framework for strategic regression.
\newblock \textit{Advances in Neural Information Processing Systems},
  \textbf{29}.

\bibitem[{Liu et~al.(2020)Liu, Swaminathan, Agarwal and
  Brunskill}]{liu2020provably}
\text{Liu, Y.}, \text{Swaminathan, A.}, \text{Agarwal, A.} and \text{Brunskill,
  E.} (2020).
\newblock Provably good batch off-policy reinforcement learning without great
  exploration.
\newblock \textit{Advances in neural information processing systems},
  \textbf{33} 1264--1274.

\bibitem[{Lu et~al.(2022)Lu, Min, Wang and Yang}]{lu2022pessimism}
\text{Lu, M.}, \text{Min, Y.}, \text{Wang, Z.} and \text{Yang, Z.} (2022).
\newblock Pessimism in the face of confounders: Provably efficient offline
  reinforcement learning in partially observable markov decision processes.
\newblock \textit{arXiv preprint arXiv:2205.13589}.

\bibitem[{Lu et~al.(2021)Lu, Meisami and Tewari}]{lu2021causal}
\text{Lu, Y.}, \text{Meisami, A.} and \text{Tewari, A.} (2021).
\newblock Causal markov decision processes: Learning good interventions
  efficiently.
\newblock \textit{arXiv preprint arXiv:2102.07663}.

\bibitem[{Mao et~al.(2022)Mao, Yang, Zhang and Basar}]{mao2022improving}
\text{Mao, W.}, \text{Yang, L.}, \text{Zhang, K.} and \text{Basar, T.} (2022).
\newblock On improving model-free algorithms for decentralized multi-agent
  reinforcement learning.
\newblock In \textit{International Conference on Machine Learning}. PMLR.

\bibitem[{Mendler-D{\"u}nner et~al.(2020)Mendler-D{\"u}nner, Perdomo, Zrnic and
  Hardt}]{mendler2020stochastic}
\text{Mendler-D{\"u}nner, C.}, \text{Perdomo, J.}, \text{Zrnic, T.} and
  \text{Hardt, M.} (2020).
\newblock Stochastic optimization for performative prediction.
\newblock \textit{Advances in Neural Information Processing Systems},
  \textbf{33} 4929--4939.

\bibitem[{Miller et~al.(2020)Miller, Milli and Hardt}]{miller2020strategic}
\text{Miller, J.}, \text{Milli, S.} and \text{Hardt, M.} (2020).
\newblock Strategic classification is causal modeling in disguise.
\newblock In \textit{International Conference on Machine Learning}. PMLR.

\bibitem[{Miller et~al.(2021)Miller, Perdomo and Zrnic}]{miller2021outside}
\text{Miller, J.~P.}, \text{Perdomo, J.~C.} and \text{Zrnic, T.} (2021).
\newblock Outside the echo chamber: Optimizing the performative risk.
\newblock In \textit{International Conference on Machine Learning}. PMLR.

\bibitem[{Milli et~al.(2019)Milli, Miller, Dragan and Hardt}]{milli2019social}
\text{Milli, S.}, \text{Miller, J.}, \text{Dragan, A.~D.} and \text{Hardt, M.}
  (2019).
\newblock The social cost of strategic classification.
\newblock In \textit{Proceedings of the Conference on Fairness, Accountability,
  and Transparency}.

\bibitem[{Munos and Szepesv{\'a}ri(2008)}]{munos2008finite}
\text{Munos, R.} and \text{Szepesv{\'a}ri, C.} (2008).
\newblock Finite-time bounds for fitted value iteration.
\newblock \textit{Journal of Machine Learning Research}, \textbf{9}.

\bibitem[{Murphy et~al.(2001)Murphy, van~der Laan, Robins and
  Group}]{murphy2001marginal}
\text{Murphy, S.~A.}, \text{van~der Laan, M.~J.}, \text{Robins, J.~M.} and
  \text{Group, C. P. P.~R.} (2001).
\newblock Marginal mean models for dynamic regimes.
\newblock \textit{Journal of the American Statistical Association}, \textbf{96}
  1410--1423.

\bibitem[{Nair et~al.(2022)Nair, Ghalme, Talgam-Cohen and
  Rosenfeld}]{nair2022strategic}
\text{Nair, V.}, \text{Ghalme, G.}, \text{Talgam-Cohen, I.} and
  \text{Rosenfeld, N.} (2022).
\newblock Strategic representation.
\newblock In \textit{International Conference on Machine Learning}. PMLR.

\bibitem[{Namkoong et~al.(2020)Namkoong, Keramati, Yadlowsky and
  Brunskill}]{namkoong2020off}
\text{Namkoong, H.}, \text{Keramati, R.}, \text{Yadlowsky, S.} and
  \text{Brunskill, E.} (2020).
\newblock Off-policy policy evaluation for sequential decisions under
  unobserved confounding.
\newblock \textit{Advances in Neural Information Processing Systems},
  \textbf{33} 18819--18831.

\bibitem[{Narang et~al.(2022)Narang, Faulkner, Drusvyatskiy, Fazel and
  Ratliff}]{narang2022multiplayer}
\text{Narang, A.}, \text{Faulkner, E.}, \text{Drusvyatskiy, D.}, \text{Fazel,
  M.} and \text{Ratliff, L.~J.} (2022).
\newblock Multiplayer performative prediction: Learning in decision-dependent
  games.
\newblock \textit{arXiv preprint arXiv:2201.03398}.

\bibitem[{Ngo et~al.(2021)Ngo, Stapleton, Syrgkanis and
  Wu}]{ngo2021incentivizing}
\text{Ngo, D. D.~T.}, \text{Stapleton, L.}, \text{Syrgkanis, V.} and \text{Wu,
  S.} (2021).
\newblock Incentivizing compliance with algorithmic instruments.
\newblock In \textit{International Conference on Machine Learning}. PMLR.

\bibitem[{Pearl(2009)}]{pearl2009causality}
\text{Pearl, J.} (2009).
\newblock \textit{Causality}.
\newblock Cambridge university press.

\bibitem[{Perdomo et~al.(2020)Perdomo, Zrnic, Mendler-D{\"u}nner and
  Hardt}]{perdomo2020performative}
\text{Perdomo, J.}, \text{Zrnic, T.}, \text{Mendler-D{\"u}nner, C.} and
  \text{Hardt, M.} (2020).
\newblock Performative prediction.
\newblock In \textit{International Conference on Machine Learning}. PMLR.

\bibitem[{P{\'e}rolat et~al.(2016{\natexlab{a}})P{\'e}rolat, Piot, Geist,
  Scherrer and Pietquin}]{perolat2016softened}
\text{P{\'e}rolat, J.}, \text{Piot, B.}, \text{Geist, M.}, \text{Scherrer, B.}
  and \text{Pietquin, O.} (2016{\natexlab{a}}).
\newblock Softened approximate policy iteration for {M}arkov games.
\newblock In \textit{International Conference on Machine Learning}. PMLR.

\bibitem[{P{\'e}rolat et~al.(2016{\natexlab{b}})P{\'e}rolat, Piot, Scherrer and
  Pietquin}]{perolat2016use}
\text{P{\'e}rolat, J.}, \text{Piot, B.}, \text{Scherrer, B.} and
  \text{Pietquin, O.} (2016{\natexlab{b}}).
\newblock On the use of non-stationary strategies for solving two-player
  zero-sum {M}arkov games.
\newblock In \textit{Artificial Intelligence and Statistics}. PMLR.

\bibitem[{Perolat et~al.(2015)Perolat, Scherrer, Piot and
  Pietquin}]{perolat2015approximate}
\text{Perolat, J.}, \text{Scherrer, B.}, \text{Piot, B.} and \text{Pietquin,
  O.} (2015).
\newblock Approximate dynamic programming for two-player zero-sum {M}arkov
  games.
\newblock In \textit{International Conference on Machine Learning}. PMLR.

\bibitem[{P{\'e}rolat et~al.(2017)P{\'e}rolat, Strub, Piot and
  Pietquin}]{perolat2017learning}
\text{P{\'e}rolat, J.}, \text{Strub, F.}, \text{Piot, B.} and \text{Pietquin,
  O.} (2017).
\newblock Learning {N}ash equilibrium for general-sum {M}arkov games from batch
  data.
\newblock In \textit{Artificial Intelligence and Statistics}. PMLR.

\bibitem[{Rashidinejad et~al.(2021)Rashidinejad, Zhu, Ma, Jiao and
  Russell}]{rashidinejad2021bridging}
\text{Rashidinejad, P.}, \text{Zhu, B.}, \text{Ma, C.}, \text{Jiao, J.} and
  \text{Russell, S.} (2021).
\newblock Bridging offline reinforcement learning and imitation learning: A
  tale of pessimism.
\newblock \textit{Advances in Neural Information Processing Systems},
  \textbf{34}.

\bibitem[{Robins(1998)}]{robins1998correction}
\text{Robins, J.~M.} (1998).
\newblock Correction for non-compliance in equivalence trials.
\newblock \textit{Statistics in medicine}, \textbf{17} 269--302.

\bibitem[{Robins(2004)}]{robins2004optimal}
\text{Robins, J.~M.} (2004).
\newblock Optimal structural nested models for optimal sequential decisions.
\newblock In \textit{Proceedings of the second seattle Symposium in
  Biostatistics}. Springer.

\bibitem[{Rockafellar and Wets(2009)}]{rockafellar2009variational}
\text{Rockafellar, R.~T.} and \text{Wets, R. J.-B.} (2009).
\newblock \textit{Variational analysis}, vol. 317.
\newblock Springer Science \& Business Media.

\bibitem[{Sabater and Sierra(2005)}]{sabater2005review}
\text{Sabater, J.} and \text{Sierra, C.} (2005).
\newblock Review on computational trust and reputation models.
\newblock \textit{Artificial intelligence review}, \textbf{24} 33--60.

\bibitem[{Sayin et~al.(2021)Sayin, Zhang, Leslie, Basar and
  Ozdaglar}]{sayin2021decentralized}
\text{Sayin, M.}, \text{Zhang, K.}, \text{Leslie, D.}, \text{Basar, T.} and
  \text{Ozdaglar, A.} (2021).
\newblock Decentralized q-learning in zero-sum {M}arkov games.
\newblock \textit{Advances in Neural Information Processing Systems},
  \textbf{34} 18320--18334.

\bibitem[{Sayin and Ba{\c{s}}ar(2018)}]{sayin2018dynamic}
\text{Sayin, M.~O.} and \text{Ba{\c{s}}ar, T.} (2018).
\newblock Dynamic information disclosure for deception.
\newblock In \textit{2018 IEEE Conference on Decision and Control (CDC)}. IEEE.

\bibitem[{Schulte et~al.(2014)Schulte, Tsiatis, Laber and
  Davidian}]{schulte2014q}
\text{Schulte, P.~J.}, \text{Tsiatis, A.~A.}, \text{Laber, E.~B.} and
  \text{Davidian, M.} (2014).
\newblock {Q}-and {A}-learning methods for estimating optimal dynamic treatment
  regimes.
\newblock \textit{Statistical science: a review journal of the Institute of
  Mathematical Statistics}, \textbf{29} 640.

\bibitem[{Shapiro et~al.(2021)Shapiro, Dentcheva and
  Ruszczynski}]{shapiro2021lectures}
\text{Shapiro, A.}, \text{Dentcheva, D.} and \text{Ruszczynski, A.} (2021).
\newblock \textit{Lectures on stochastic programming: modeling and theory}.
\newblock SIAM.

\bibitem[{Shavit et~al.(2020)Shavit, Edelman and Axelrod}]{shavit2020causal}
\text{Shavit, Y.}, \text{Edelman, B.} and \text{Axelrod, B.} (2020).
\newblock Causal strategic linear regression.
\newblock In \textit{International Conference on Machine Learning}. PMLR.

\bibitem[{Shi et~al.(2021)Shi, Uehara and Jiang}]{shi2021minimax}
\text{Shi, C.}, \text{Uehara, M.} and \text{Jiang, N.} (2021).
\newblock A minimax learning approach to off-policy evaluation in partially
  observable markov decision processes.
\newblock \textit{arXiv preprint arXiv:2111.06784}.

\bibitem[{Shi et~al.(2022)Shi, Li, Wei, Chen and Chi}]{shi2022pessimistic}
\text{Shi, L.}, \text{Li, G.}, \text{Wei, Y.}, \text{Chen, Y.} and \text{Chi,
  Y.} (2022).
\newblock Pessimistic {Q}-learning for offline reinforcement learning: Towards
  optimal sample complexity.
\newblock \textit{arXiv preprint arXiv:2202.13890}.

\bibitem[{Song et~al.(2021)Song, Mei and Bai}]{song2021can}
\text{Song, Z.}, \text{Mei, S.} and \text{Bai, Y.} (2021).
\newblock When can we learn general-sum {M}arkov games with a large number of
  players sample-efficiently?
\newblock \textit{arXiv preprint arXiv:2110.04184}.

\bibitem[{Sun et~al.(2006)}]{sun2006cognition}
\text{Sun, R.} \text{et~al.} (2006).
\newblock \textit{Cognition and multi-agent interaction: From cognitive
  modeling to social simulation}.
\newblock Cambridge University Press.

\bibitem[{Sun et~al.(2019)Sun, Jiang, Krishnamurthy, Agarwal and
  Langford}]{sun2019model}
\text{Sun, W.}, \text{Jiang, N.}, \text{Krishnamurthy, A.}, \text{Agarwal, A.}
  and \text{Langford, J.} (2019).
\newblock Model-based rl in contextual decision processes: Pac bounds and
  exponential improvements over model-free approaches.
\newblock In \textit{Conference on learning theory}. PMLR.

\bibitem[{Tchetgen et~al.(2020)Tchetgen, Ying, Cui, Shi and
  Miao}]{tchetgen2020introduction}
\text{Tchetgen, E. J.~T.}, \text{Ying, A.}, \text{Cui, Y.}, \text{Shi, X.} and
  \text{Miao, W.} (2020).
\newblock An introduction to proximal causal learning.
\newblock \textit{arXiv preprint arXiv:2009.10982}.

\bibitem[{Tennenholtz et~al.(2020)Tennenholtz, Shalit and
  Mannor}]{tennenholtz2020off}
\text{Tennenholtz, G.}, \text{Shalit, U.} and \text{Mannor, S.} (2020).
\newblock Off-policy evaluation in partially observable environments.
\newblock In \textit{Proceedings of the AAAI Conference on Artificial
  Intelligence}, vol.~34.

\bibitem[{Tian et~al.(2021)Tian, Wang, Yu and Sra}]{tian2021online}
\text{Tian, Y.}, \text{Wang, Y.}, \text{Yu, T.} and \text{Sra, S.} (2021).
\newblock Online learning in unknown {M}arkov games.
\newblock In \textit{International conference on machine learning}. PMLR.

\bibitem[{Uehara et~al.(2021)Uehara, Imaizumi, Jiang, Kallus, Sun and
  Xie}]{uehara2021finite}
\text{Uehara, M.}, \text{Imaizumi, M.}, \text{Jiang, N.}, \text{Kallus, N.},
  \text{Sun, W.} and \text{Xie, T.} (2021).
\newblock Finite sample analysis of minimax offline reinforcement learning:
  Completeness, fast rates and first-order efficiency.
\newblock \textit{arXiv preprint arXiv:2102.02981}.

\bibitem[{Uehara and Sun(2021)}]{uehara2021pessimistic}
\text{Uehara, M.} and \text{Sun, W.} (2021).
\newblock Pessimistic model-based offline reinforcement learning under partial
  coverage.
\newblock \textit{arXiv preprint arXiv:2107.06226}.

\bibitem[{Vershynin(2018)}]{vershynin2018high}
\text{Vershynin, R.} (2018).
\newblock \textit{High-dimensional probability: An introduction with
  applications in data science}, vol.~47.
\newblock Cambridge university press.

\bibitem[{Wainwright(2019)}]{wainwright_2019}
\text{Wainwright, M.~J.} (2019).
\newblock \textit{High-Dimensional Statistics: A Non-Asymptotic Viewpoint}.
\newblock Cambridge Series in Statistical and Probabilistic Mathematics,
  Cambridge University Press.

\bibitem[{Wang et~al.(2021)Wang, Yang and Wang}]{wang2021provably}
\text{Wang, L.}, \text{Yang, Z.} and \text{Wang, Z.} (2021).
\newblock Provably efficient causal reinforcement learning with confounded
  observational data.
\newblock \textit{Advances in Neural Information Processing Systems},
  \textbf{34} 21164--21175.

\bibitem[{Wei et~al.(2017)Wei, Hong and Lu}]{wei2017online}
\text{Wei, C.-Y.}, \text{Hong, Y.-T.} and \text{Lu, C.-J.} (2017).
\newblock Online reinforcement learning in stochastic games.
\newblock \textit{Advances in Neural Information Processing Systems},
  \textbf{30}.

\bibitem[{Wilks(1938)}]{wilks1938large}
\text{Wilks, S.~S.} (1938).
\newblock The large-sample distribution of the likelihood ratio for testing
  composite hypotheses.
\newblock \textit{The annals of mathematical statistics}, \textbf{9} 60--62.

\bibitem[{Xie et~al.(2020)Xie, Chen, Wang and Yang}]{xie2020learning}
\text{Xie, Q.}, \text{Chen, Y.}, \text{Wang, Z.} and \text{Yang, Z.} (2020).
\newblock Learning zero-sum simultaneous-move {M}arkov games using function
  approximation and correlated equilibrium.
\newblock In \textit{Conference on learning theory}. PMLR.

\bibitem[{Xie et~al.(2021)Xie, Cheng, Jiang, Mineiro and
  Agarwal}]{xie2021bellman}
\text{Xie, T.}, \text{Cheng, C.-A.}, \text{Jiang, N.}, \text{Mineiro, P.} and
  \text{Agarwal, A.} (2021).
\newblock Bellman-consistent pessimism for offline reinforcement learning.
\newblock \textit{Advances in neural information processing systems},
  \textbf{34} 6683--6694.

\bibitem[{Xiong et~al.(2022)Xiong, Zhong, Shi, Shen, Wang and
  Zhang}]{xiong2022nearly}
\text{Xiong, W.}, \text{Zhong, H.}, \text{Shi, C.}, \text{Shen, C.},
  \text{Wang, L.} and \text{Zhang, T.} (2022).
\newblock Nearly minimax optimal offline reinforcement learning with linear
  function approximation: Single-agent mdp and {M}arkov game.
\newblock \textit{arXiv preprint arXiv:2205.15512}.

\bibitem[{Yan et~al.(2022{\natexlab{a}})Yan, Li, Chen and
  Fan}]{yan2022efficacy}
\text{Yan, Y.}, \text{Li, G.}, \text{Chen, Y.} and \text{Fan, J.}
  (2022{\natexlab{a}}).
\newblock The efficacy of pessimism in asynchronous {Q}-learning.
\newblock \textit{arXiv preprint arXiv:2203.07368}.

\bibitem[{Yan et~al.(2022{\natexlab{b}})Yan, Li, Chen and Fan}]{yan2022model}
\text{Yan, Y.}, \text{Li, G.}, \text{Chen, Y.} and \text{Fan, J.}
  (2022{\natexlab{b}}).
\newblock Model-based reinforcement learning is minimax-optimal for offline
  zero-sum {M}arkov games.
\newblock \textit{arXiv preprint arXiv:2206.04044}.

\bibitem[{Yan et~al.(2013)Yan, Jouandeau and Cherif}]{yan2013survey}
\text{Yan, Z.}, \text{Jouandeau, N.} and \text{Cherif, A.~A.} (2013).
\newblock A survey and analysis of multi-robot coordination.
\newblock \textit{International Journal of Advanced Robotic Systems},
  \textbf{10} 399.

\bibitem[{Yarotsky(2017)}]{yarotsky2017error}
\text{Yarotsky, D.} (2017).
\newblock Error bounds for approximations with deep relu networks.
\newblock \textit{Neural Networks}, \textbf{94} 103--114.

\bibitem[{Yu et~al.(2020)Yu, Thomas, Yu, Ermon, Zou, Levine, Finn and
  Ma}]{yu2020mopo}
\text{Yu, T.}, \text{Thomas, G.}, \text{Yu, L.}, \text{Ermon, S.}, \text{Zou,
  J.~Y.}, \text{Levine, S.}, \text{Finn, C.} and \text{Ma, T.} (2020).
\newblock Mopo: Model-based offline policy optimization.
\newblock \textit{Advances in Neural Information Processing Systems},
  \textbf{33} 14129--14142.

\bibitem[{Zanette et~al.(2021)Zanette, Wainwright and
  Brunskill}]{zanette2021provable}
\text{Zanette, A.}, \text{Wainwright, M.~J.} and \text{Brunskill, E.} (2021).
\newblock Provable benefits of actor-critic methods for offline reinforcement
  learning.
\newblock \textit{Advances in neural information processing systems},
  \textbf{34} 13626--13640.

\bibitem[{Zhan et~al.(2022)Zhan, Lee and Yang}]{zhan2022decentralized}
\text{Zhan, W.}, \text{Lee, J.~D.} and \text{Yang, Z.} (2022).
\newblock Decentralized optimistic hyperpolicy mirror descent: Provably
  no-regret learning in {M}arkov games.
\newblock \textit{arXiv preprint arXiv:2206.01588}.

\bibitem[{Zhang and Bareinboim(2016)}]{zhang2016markov}
\text{Zhang, J.} and \text{Bareinboim, E.} (2016).
\newblock Markov decision processes with unobserved confounders: A causal
  approach.
\newblock Tech. rep., Technical report, Technical Report R-23, Purdue AI Lab.

\bibitem[{Zhao et~al.(2021)Zhao, Tian, Lee and Du}]{zhao2021provably}
\text{Zhao, Y.}, \text{Tian, Y.}, \text{Lee, J.~D.} and \text{Du, S.~S.}
  (2021).
\newblock Provably efficient policy gradient methods for two-player zero-sum
  markov games.
\newblock \textit{arXiv preprint arXiv:2102.08903}.

\bibitem[{Zhong et~al.(2022)Zhong, Xiong, Tan, Wang, Zhang, Wang and
  Yang}]{zhong2022pessimistic}
\text{Zhong, H.}, \text{Xiong, W.}, \text{Tan, J.}, \text{Wang, L.},
  \text{Zhang, T.}, \text{Wang, Z.} and \text{Yang, Z.} (2022).
\newblock Pessimistic minimax value iteration: Provably efficient equilibrium
  learning from offline datasets.
\newblock \textit{arXiv preprint arXiv:2202.07511}.

\bibitem[{Zhong et~al.(2021)Zhong, Yang, Wang and Jordan}]{zhong2021can}
\text{Zhong, H.}, \text{Yang, Z.}, \text{Wang, Z.} and \text{Jordan, M.~I.}
  (2021).
\newblock Can reinforcement learning find {S}tackelberg-{N}ash equilibria in
  general-sum markov games with myopic followers?
\newblock \textit{arXiv preprint arXiv:2112.13521}.

\bibitem[{Zrnic et~al.(2021)Zrnic, Mazumdar, Sastry and
  Jordan}]{zrnic2021leads}
\text{Zrnic, T.}, \text{Mazumdar, E.}, \text{Sastry, S.} and \text{Jordan, M.}
  (2021).
\newblock Who leads and who follows in strategic classification?
\newblock \textit{Advances in Neural Information Processing Systems},
  \textbf{34} 15257--15269.

\end{thebibliography}
\end{document}